\newtheorem{definition}{Definition}
\newtheorem{theorem}{Theorem}
\definecolor{gg}{RGB}{15,150,15}
\definecolor{rr}{RGB}{230,45,45}
\newcommand\blfootnote[1]{%
  \begingroup
  \renewcommand\thefootnote{}\footnote{#1}%
  \addtocounter{footnote}{-1}%
  \endgroup
}
\newcommand{\cmark}{\ding{51}}%
\newcommand{\xmark}{\ding{55}}%
\newcommand{\namel}{\textsc{Factorized Contrastive Learning}}
\newcommand{\names}{\textsc{FactorCL}}
\newcommand{\indep}{\perp\!\!\!\perp} 
\DeclareMathOperator*{\argmax}{arg\,max}
\def\eqref#1{eq~(\ref{#1})}
\def\1{\bm{1}}
\def\vx{{\bm{x}}}
\DeclareMathAlphabet{\mathsfit}{\encodingdefault}{\sfdefault}{m}{sl}
\SetMathAlphabet{\mathsfit}{bold}{\encodingdefault}{\sfdefault}{bx}{n}
\setlist{nolistsep}
\setlist[itemize]{noitemsep, topsep=0pt}
\newcolumntype{H}{>{\setbox0=\hbox\bgroup}c<{\egroup}@{}}
\newcommand{\noaistats}[1]{}  %
\definecolor{darkgreen}{rgb}{0,0.4,0.0}
\definecolor{darkblue}{rgb}{0,0.1,0.3}
\definecolor{darkred}{rgb}{0.7,0.0,0.0}
\title{\namel:\\Going Beyond Multi-view Redundancy}
\author{%
  Paul Pu Liang$^{1*}$, Zihao Deng$^{2*}$, Martin Q. Ma$^{1*}$\\
  \textbf{James Zou$^{3}$, Louis-Philippe Morency$^{1}$, Ruslan Salakhutdinov$^{1}$}\\
  $^{1}$Carnegie Mellon University, $^{2}$University of Pennsylvania, $^{3}$Stanford University\\
  \texttt{pliang@cs.cmu.edu,zihaoden@cs.cmu.edu,qianlim@cs.cmu.edu} \\
}
\begin{document}

\maketitle

\vspace{-8mm}
\begin{abstract}
\vspace{-2mm}
    In a wide range of multimodal tasks, contrastive learning has become a particularly appealing approach since it can successfully learn representations from abundant unlabeled data with only pairing information (e.g., image-caption or video-audio pairs). Underpinning these approaches is the assumption of \textit{multi-view redundancy} - that shared information between modalities is necessary and sufficient for downstream tasks.
    However, in many real-world settings, task-relevant information is also contained in modality-unique regions: information that is only present in one modality but still relevant to the task.
    How can we learn self-supervised multimodal representations to capture both shared and unique information relevant to downstream tasks? This paper proposes \names, a new multimodal representation learning method to go beyond multi-view redundancy. \names\ is built from three new contributions: (1) factorizing task-relevant information into shared and unique representations, (2) capturing task-relevant information via maximizing MI lower bounds and removing task-irrelevant information via minimizing MI upper bounds, and (3) multimodal data augmentations to approximate task relevance without labels. On large-scale real-world datasets, \names\ captures both shared and unique information and achieves state-of-the-art results on six benchmarks. \blfootnote{$^*$First three authors contributed equally.}
\end{abstract}

\vspace{-6mm}
\section{Introduction}
\vspace{-2mm}

Learning representations from different modalities is a central paradigm in machine learning~\cite{liang2022foundations}.
Today, a popular learning method is to first pre-train general representations on unlabeled multimodal data before fine-tuning on task-specific labels~\cite{bugliarello2021multimodal,kenton2019bert,liang2022foundations,liang2022highmmt,lu2019vilbert}. These current multimodal pre-training approaches have largely been inherited from prior work in multi-view learning~\cite{chen2020simple,oord2018representation} that exploit a critical assumption of \textit{multi-view redundancy}: the property that shared information between modalities is almost exactly what is relevant for downstream tasks~\cite{sridharan2008information,tosh2021contrastive,tsai2020self}. 
When this assumption holds, approaches based on contrastive pre-training to capture shared information~\cite{chen2020simple,khosla2020supervised,radford2021learning,tian2020makes}, followed by fine-tuning to keep task-relevant shared information~\cite{tsai2020self}, have seen successful applications in learning from images and captions~\cite{radford2021learning}, video and audio~\cite{arandjelovic2017look}, speech and transcribed text~\cite{oord2018representation}, and instructions and actions~\cite{eysenbach2022contrastive}.
However, our paper studies two fundamental limitations in the application of contrastive learning (CL) to broader real-world multimodal settings (see Figure~\ref{fig:overview} for a visual depiction and experimental results showing the performance drop of CL):
\begin{enumerate}[noitemsep,topsep=0pt,nosep,leftmargin=*,parsep=0pt,partopsep=0pt]
    \item \textbf{Low \textit{shared} information} relevant to tasks: There exists a wide range of multimodal tasks involving small amounts of shared information, such as between cartoon images and figurative captions (i.e., not literal but metaphoric or idiomatic descriptions of the images~\cite{marsh2003taxonomy, yosef2023irfl}).
    In these situations, standard multimodal CL will only receive a small percentage of information from the learned representations and struggle to learn the desired task-relevant information.
    \item \textbf{High \textit{unique} information} relevant to tasks: Many real-world modalities can provide unique information not present in other modalities. Examples include healthcare with medical sensors or robotics with force sensors~\cite{liang2021multibench,liang2023quantifying}. Standard CL will discard task-relevant unique information, leading to poor downstream performance.
\end{enumerate}

\begin{figure}[tbp]
\centering
\vspace{-6mm}
\includegraphics[width=\linewidth]{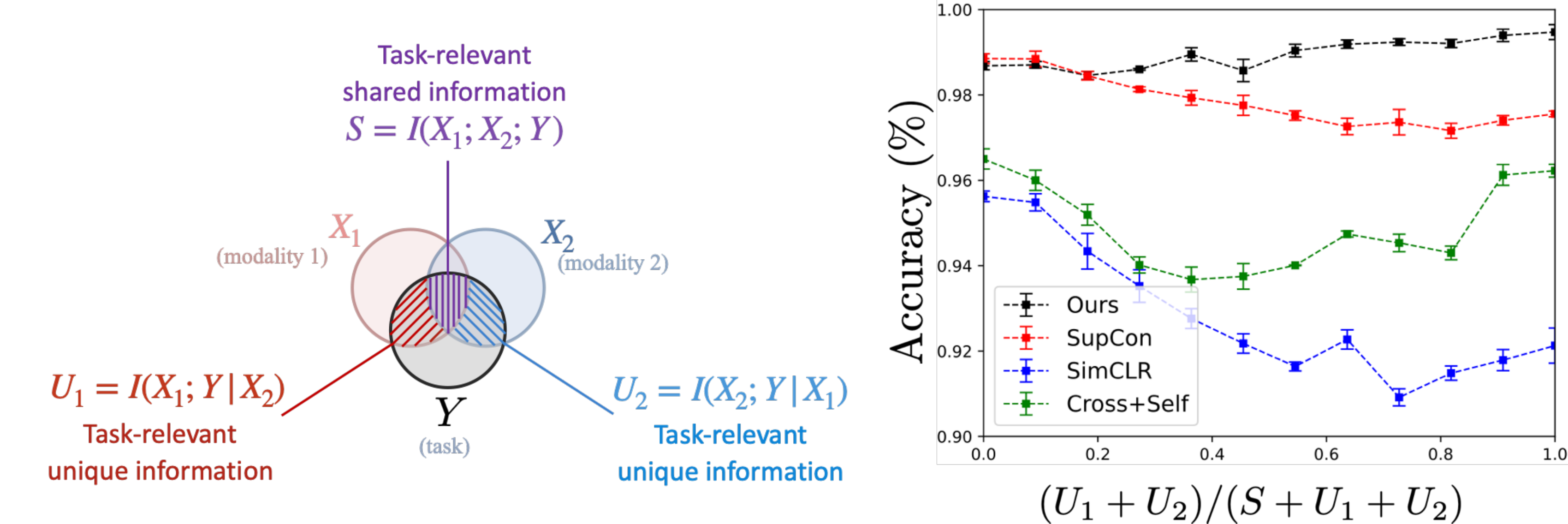}
\vspace{-2mm}
\caption{\textbf{Left}: We define $S=I(X_1;X_2;Y)$ as task-relevant shared information and $U_1=I(X_1;Y|X_2)$, $U_2=I(X_2;Y|X_1)$ as task-relevant unique information. \textbf{Right}: On controllable datasets with varying ratios of $S$, $U_1$, and $U_2$, standard CL captures $S$ but struggles when there is more $U_1$ and $U_2$. Our \names\ approach maintains best performance, whereas SimCLR~\cite{chen2020simple} and SupCon~\cite{khosla2020supervised} see performance drops as unique information increases, and Cross+Self~\cite{huang2021multilingual,jain2021mural,lee2020parameter,yuan2021multimodal} recovers in fully unique settings but suffers at other ratios.}
\label{fig:overview}
\vspace{-4mm}
\end{figure}

In light of these limitations, how can we design suitable multimodal learning objectives that work beyond multi-view redundancy? In this paper, starting from the first principles in information theory, we provide formal definitions of shared and unique information via conditional mutual information and propose an approach, \namel\ (\names\  for short), to learn these multimodal representations beyond multi-view redundancy using three key ideas. The first idea is to explicitly \textit{factorize} shared and unique representations. The second idea is to \textit{capture task-relevant} information via maximizing lower bounds on MI and \textit{remove task-irrelevant} information via minimizing upper bounds on MI, resulting in representations with sufficient and necessary information content. Finally, a notion of task relevance without explicit labels in the self-supervised setting is achieved by leveraging \textit{multimodal augmentations}.
Experimentally, we evaluate the effectiveness of \names\ on a suite of synthetic datasets and large-scale real-world multimodal benchmarks involving images and figurative language~\cite{yosef2023irfl}, prediction of human sentiment~\cite{zadeh2016mosi}, emotions~\cite{zadeh2018multimodal}, humor~\cite{hasan2019ur}, and sarcasm~\cite{castro2019towards}, as well as patient disease and mortality prediction from health indicators and sensor readings~\cite{johnson2016mimic}, achieving new state-of-the-art performance on six datasets. Overall, we summarize our key technical contributions here:
\begin{enumerate}[noitemsep,topsep=0pt,nosep,leftmargin=*,parsep=0pt,partopsep=0pt]
    \item A new analysis of contrastive learning performance showing that standard multimodal CL fails to capture task-relevant unique information under low shared or high unique information cases.
    \item A new contrastive learning algorithm called \names:
    \begin{enumerate}[noitemsep,topsep=0pt,nosep,leftmargin=*,parsep=0pt,partopsep=0pt]
        \item \names\ factorizes task-relevant information into shared and unique information, expanding contrastive learning to better handle low shared or high unique information.
        \item \names\ optimizes shared and unique information separately, by removing task-irrelevant information via MI upper bounds and capturing task-relevant information via lower bounds, yielding optimal task-relevant representations.
        \item \names\ leverages multimodal augmentations to approximate task-relevant information, enabling self-supervised learning from our proposed \names.
    \end{enumerate}
\end{enumerate}

\vspace{-3mm}
\section{Analysis of Multi-view Contrastive Learning}
\vspace{-2mm}

We begin by formalizing definitions of four types of information: shared, unique, task-relevant, and task-irrelevant information in multimodal data. To formalize the learning setting, we assume there exist two modalities expressed as random variables $X_1$ and $X_2$ with outcomes $x_1$ and $x_2$, and a task with the random variable $Y$ and outcome $y$. We denote $X_{-i}$ as the other modality where appropriate.

\textbf{Shared and unique information}: We formalize shared and unique information by decomposing the total multimodal information $I (X_1,X_2; Y)$ into three conditional mutual information (MI) terms:
\begin{align}
\label{eq:sharedunique}
     I (X_1,X_2; Y) = \underbrace{I(X_1;X_2;Y)}_\text{$S=\textrm{shared}$} + \underbrace{I(X_1;Y|X_2)}_\text{$U_1=\textrm{uniqueness in } X_1$} + \underbrace{I(X_2;Y|X_1)}_\text{$U_2=\textrm{uniqueness in } X_2$},
\end{align}
where $I (X_1,X_2; Y) = \int p(x_1,x_2,y) \log \frac{p(x_1,x_2,y)}{p(x_1,x_2) p(y)} dx_1 dx_2 dy$ is the total MI between the joint random variable $X_1,X_2$ and the task $Y$, $S=I(X_1;X_2;Y) = I(X_1;X_2) - I(X_1;X_2|Y) = \int p(x_1,x_2) \log \frac{p(x_1,x_2)}{p(x_1) p(x_2)} dx_1 dx_2 - I(X_1;X_2|Y)$ is the task-relevant shared information, $I(X_1;X_2|Y)=\int p(x_1,x_2|y) \log \frac{p(x_1,x_2|y)}{p(x_1|y) p(x_2|y)} dx_1 dx_2 dy$ is the task-irrelevant shared information, and $U_1=I(X_1;Y|X_2)$, $U_2=I(X_2;Y|X_1)$ denote unique task-relevant information. 

\textbf{Limitations of CL}: Current approaches for CL maximize mutual information $I(X_1;X_2)$ (and subsequently task-relevant shared information $I(X_1;X_2;Y)$ during supervised fine-tuning), without modeling unique information. These methods generally learn a pair of representations \cite{tosh2021contrastive,tsai2020self}, 
\begin{align}
    Z_1 = \argmax_{Z_1 := f_\theta(X_1)} I(Z_1;X_2), \quad Z_2 = \argmax_{Z_2 := f_\theta(X_2)} I(X_1;Z_2). \label{eq:standard_cl_z}
\end{align}
For example, $Z_1$ could encode images $X_1$ and $Z_2$ encodes text $X_2$ via maximizing a lower bound on $I(X_1;X_2)$ using the NCE objective~\cite{oord2018representation}. The NCE objective falls into a broader class of contrastive learning methods~\cite{chen2020simple,he2020momentum,radford2021learning, chen2021empirical, khosla2020supervised} that model the ratio between joint densities $p(x_1,x_2)$ and product of marginal densities $p(x_1)p(x_2)$ using positive and negative samples~\cite{nguyen2010estimating, poole2019variational, tschannen2019mutual, wu2020mutual, ozair2019wasserstein} or probabilistic classifiers~\cite{mukherjee2020ccmi,tsai2020neural}. It has been shown that contrastive learning works well under the assumption of multi-view redundancy~\cite{sridharan2008information, hjelm2018learning, bachman2019learning, tian2019contrastive, tsai2020self}:
\begin{definition}
\label{eq:multiview_redundancy_assump}
    (Multi-view redundancy) $\exists \epsilon >0$ such that $I(X_1;Y|X_2) \le \epsilon$ and $I(X_2;Y|X_1) \le \epsilon$.
\end{definition}
In other words, the task-relevant information in data is mostly shared across both views and the unique information is at most a small $\epsilon$. From a representation perspective, \citet{tian2020makes} further introduces the assumption that the optimal representation is minimal and sufficient, where all learned task-relevant information is shared information: $I(Z_1; Y | X_2)=I(Z_2; Y | X_1)=0$. While the multi-view redundancy is certainly true for particular types of multimodal distributions, it crucially ignores settings that display \textit{multi-view non-redundancy} and unique information can be important, such as when health indicators, medical sensors, and robotic visual or force sensors each provide unique information not present in other modalities~\cite{liang2021multibench,liang2023quantifying}.
\begin{definition}
\label{eq:multiview_nonredundancy_assump}
    (Multi-view non-redundancy) $\exists \epsilon >0$ such that $I(X_1;Y|X_2) > \epsilon$ or $I(X_2;Y|X_1) > \epsilon$.
\end{definition}
Under multi-view non-redundancy, we show that standard CL only receives a weak training signal since it can only maximize a lower bound on shared information $I(X_1;X_2)$, and struggles to learn task-relevant unique information. We formalize this intuition with the following statement:
\begin{theorem}
\label{thm:suboptimal_eq}
    (Suboptimality of standard CL) When there is multi-view non-redundancy as in Definition \ref{eq:multiview_nonredundancy_assump}, given optimal representations $\{Z_1,Z_2\}$ that satisfy Eq.(\ref{eq:standard_cl_z} and $I(Z_1; Y | X_2)=I(Z_2; Y | X_1)=0$~\citep{tian2020makes}, we have that
    {\small
    \begin{align}
        I(Z_1,Z_2;Y) = I(X_1,X_2;Y) - I(X_1;Y|X_2) - I(X_2;Y|X_1) = I(X_1;X_2) - I(X_1;X_2|Y)  < I(X_1,X_2;Y). \label{eq:suboptimal_eq}
    \end{align}}Correspondingly, the Bayes error rate $P_e(Z_1,Z_2):=1 - \mathbb{E}_{p(z_1, z_2)}\left[\max_{y\in Y} P\left(\hat{Y}=y \mid z_1, z_2 \right)\right]$ of contrastive representations $\{Z_1,Z_2\}$ for a downstream task $Y$ is given by:
    \begin{align}
        P_e &\le 1 - \exp \left[I(X_1, X_2; Y)- I(X_1;Y|X_2) - I(X_2;Y|X_1)  - H(Y) \right] \\
        &= 1 - \exp \left[I(X_1;X_2;Y) - H(Y) \right]
        \label{eq:bayes_error}
    \end{align}
\end{theorem}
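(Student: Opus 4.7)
The statement has two logically distinct parts: (A) the chain of equalities culminating in the strict inequality on $I(Z_1,Z_2;Y)$, and (B) the Bayes-error upper bound. I would address them in that order. The second and third members of (A) are actually immediate from material already in the paper: $I(X_1,X_2;Y) - I(X_1;Y|X_2) - I(X_2;Y|X_1) = I(X_1;X_2;Y)$ is just the rearrangement of Eq.~(1), and $I(X_1;X_2;Y) = I(X_1;X_2) - I(X_1;X_2|Y)$ is the interaction-information identity spelled out there. The strict inequality $I(X_1;X_2;Y) < I(X_1,X_2;Y)$ then follows directly from Definition~2, since at least one of $U_1,U_2$ exceeds $\epsilon > 0$. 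So the substantive content of (A) is the first equality, $I(Z_1,Z_2;Y) = I(X_1;X_2;Y)$.

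To establish that equality, I would chain-rule-decompose $I(Z_1,Z_2;Y) = I(Z_1;Y) + I(Z_2;Y \mid Z_1)$ and evaluate each summand using the two hypotheses on $Z_1,Z_2$. The optimality condition in Eq.~(2) combined with the data-processing inequality forces $I(Z_1;X_2) = I(X_1;X_2)$ (and symmetrically for $Z_2$), so that each $Z_i$ is a sufficient statistic for the other view. The Tian-et-al.\ minimality condition $I(Z_1;Y\mid X_2) = I(Z_2;Y\mid X_1) = 0$ then plays the complementary role of removing any extra task-information that $Z_i$ might carry beyond the shared channel. Applying the symmetric chain-rule identity
\begin{equation*}
I(Z_1;Y,X_2) = I(Z_1;X_2) + I(Z_1;Y\mid X_2) = I(Z_1;Y) + I(Z_1;X_2\mid Y)
\end{equation*}
and substituting the two hypotheses gives $I(Z_1;Y) = I(X_1;X_2) - I(Z_1;X_2\mid Y)$. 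An analogous computation for the conditional term $I(Z_2;Y\mid Z_1)$, together with the Markov structure that minimality imposes (namely $Z_i - X_{-i} - Y$), collapses the residual into $-I(X_1;X_2\mid Y)$, and assembling yields the claimed equality.

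For part (B), I would use the standard information-theoretic bound $\max_y P(Y=y\mid z_1,z_2) \ge \exp(-H(Y\mid z_1,z_2))$, which holds pointwise. Taking expectations over $(Z_1,Z_2)$ and applying Jensen's inequality to the convex map $t \mapsto e^{-t}$ gives $P_e \le 1 - \exp(-H(Y\mid Z_1,Z_2)) = 1 - \exp(I(Z_1,Z_2;Y) - H(Y))$; substituting the identity from part (A) produces exactly the two displayed forms of the bound. The main obstacle is the reverse direction in part (A): $I(Z_1,Z_2;Y) \le I(X_1,X_2;Y)$ is immediate by data processing, but reducing this upper bound all the way down to $I(X_1;X_2;Y)$ relies crucially on the minimality hypothesis propagating through conditioning on $Y$ — equivalently, on $Z_1$ being a minimal sufficient statistic of $X_1$ for $X_2$ (so $Z_1 - X_2 - X_1$ is Markov), and that Markov chain still holding after further conditioning on $Y$. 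Without that propagation one only gets an inequality, and making this step airtight is where the real work lies.
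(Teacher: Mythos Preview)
Your approach is essentially the same as the paper's: both arguments chain-rule decompose $I(Z_1,Z_2;Y) = I(Z_1;Y) + I(Z_2;Y\mid Z_1)$, use the sufficiency condition $I(Z_1;X_2)=I(X_1;X_2)$ together with the minimality condition $I(Z_i;Y\mid X_{-i})=0$ to evaluate each summand, and then invoke Feder--Merhav for the Bayes-error bound. Two minor remarks are worth making.

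First, your computation of $I(Z_1;Y)$ lands on $I(X_1;X_2) - I(Z_1;X_2\mid Y)$, which leaves you needing $I(Z_1;X_2\mid Y) = I(X_1;X_2\mid Y)$. The paper instead decomposes $I(Z_1;Y) = I(Z_1;X_2;Y) + I(Z_1;Y\mid X_2)$ and directly asserts $I(Z_1;X_2;Y) = I(X_1;X_2;Y)$ (and hence $I(Z_1;X_2\mid Y) = I(X_1;X_2\mid Y)$) at the outset, then shows $I(Z_2;Y\mid Z_1)=0$ via an explicit chain of identities. So the ``hard step'' you flag --- that sufficiency $X_1 - Z_1 - X_2$ must persist after conditioning on $Y$ --- is exactly the step the paper takes for granted rather than proves; you are being more scrupulous than the paper here, not less. (One notational slip: the relevant chain is $X_1 - Z_1 - X_2$, not $Z_1 - X_2 - X_1$ as you write; the latter would force $Z_1$ to be determined by $X_2$.)

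Second, for part (B) the paper simply cites the inequality $P_e \le 1 - \exp(-H(Y\mid Z))$ from Feder--Merhav, whereas you rederive it via the pointwise bound and Jensen. Your derivation is correct and self-contained; the paper's is just a citation. Either is fine.
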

We include proofs and a detailed discussion of the assumptions in Appendix \ref{app:discussion_assumption}.
Based on Eq.(\ref{eq:suboptimal_eq}), $I(Z_1,Z_2;Y)$ decreases with higher task-relevant unique information $I(X_1;Y|X_2)$ and $I(X_2;Y|X_1)$; we call this the difference $I(X_1, X_2 ; Y) - I(Z_1, Z_2; Y)$ the $\textit{uniqueness gap}$. The uniqueness gap measures the loss in task-relevant information between the input and encoded representation: as task-relevant unique information grows, the uniqueness gap increases. In addition, $I(Z_1,Z_2;Y)$ also drops with lower $I(X_1; X_2)$ (i.e., two modalities sharing little information to begin with), or with higher $I(X_1;X_2|Y)$ (i.e., when the shared information is mostly task-irrelevant). Similarly, in Eq.(\ref{eq:bayes_error}), the Bayes error rate of using $\{Z_1,Z_2\}$ for prediction is directly related to the task-relevant information in $\{Z_1,Z_2\}$: error on the downstream task increases with higher unique information and lower shared information.

\vspace{-2mm}
\section{\namel}
\vspace{-2mm}

We now present a suite of new CL objectives that alleviate the challenges above and work at all ranges of shared and unique information.
At a high level, we aim to learn a set of factorized representations $Z_{S_1},Z_{S_2},Z_{U_1},Z_{U_2}$ representing task-relevant information in $X_1$ shared with $X_2$, in $X_2$ shared with $X_1$, unique to $X_1$, and unique to $X_2$ respectively.
As common in practice~\cite{radford2021learning,tian2020makes}, we define neural networks $f_{\theta}$ with trainable parameters $\theta$ to extract representations from inputs $X_1$ and $X_2$. Learning these parameters requires optimizing differentiable and scalable training objectives to capture task-relevant shared and unique information (see overview in Figure~\ref{fig:method}):
\begin{align}
    Z_{S_1} &= \argmax_{Z_1 = f_\theta(X_1)} I(Z_1;X_2;Y), &&Z_{S_2} = \argmax_{Z_2 = f_\theta(X_2)} I(Z_2;X_1;Y), \label{eq:final_objectives1} \\
    Z_{U_1} &= \argmax_{Z_1 = f_\theta(X_1)} I(Z_1;Y|X_2), &&Z_{U_2} = \argmax_{Z_2 = f_\theta(X_2)} I(Z_2;Y|X_1). \label{eq:final_objectives2}
\end{align}
where $I(Z_1;X_2;Y) = I(Z_1; X_2) - I(Z_1; X_2 | Y)$ is the shared information and  $I(Z_2;X_1;Y) = I(Z_2; X_2) - I(Z_2; X_1 | Y)$ is the unique information. One important characteristic of our framework is that when unique information is zero: $I(X_1; Y|X_2) =0$ and $I(X_2; Y|X_1) = 0$, or all shared information is task-relevant: $I(X_1;X_2;Y) = I(X_1; X_2)$, our framework recovers standard CL as in Eq.(\ref{eq:standard_cl_z}). However, as we have previously indicated and will show empirically, these assumptions can easily be violated, and our framework enlarges Eq.(\ref{eq:standard_cl_z}) to cases where unique information is present.

The learned $Z$s can then be used as input to a linear classifier and fine-tuned to predict the label for multimodal classification or retrieval tasks. However, the shared and unique MI terms above are often intractable in practice. In the next section, we will build up our method step by step, eventually showing that each term in Eqs.(\ref{eq:final_objectives1}- \ref{eq:final_objectives2}) can be approximated as follows:
\begin{align}
    S = I(X_1; X_2; Y) &\ge I_\textsc{NCE}(X_1;X_2) - I_\textsc{NCE-CLUB}(X_1;X_2|X_1',X_2') \label{eq:shared}\\
    U_i = I(X_i; Y | X_{-i}) &\ge I_\textsc{NCE}(X_i;X_i') - I_\textsc{NCE-CLUB}(X_1;X_2) + I_\textsc{NCE}(X_1;X_2|X_1',X_2') \label{eq:unique}
\end{align}
where $I_\textsc{NCE}$ and $I_\textsc{NCE-CLUB}$ are scalable contrastive estimators (Section \ref{subsec:objectives}) and $X_1',X_2'$ are suitable data augmentations (Section \ref{subsec:unique_aug}) on each modality. Overall, these equations can be interpreted as both positive and negative signals to learn representations for $S$ and $U$. For shared information $S$, the estimator maximizes task-relevant shared information via $I_\textsc{NCE}(X_1;X_2)$ while removing task-irrelevant shared information via a novel upper bound $- I_\textsc{NCE-CLUB}(X_1;X_2|X_1',X_2')$.  For unique information $U_i$, we capture task-relevant uniqueness via $+I_\textsc{NCE}(X_i;X_i')$ while non-unique information is removed via $- (I_\textsc{NCE-CLUB}(X_1; X_2) - I_\textsc{NCE}(X_1;X_2|X_1',X_2'))$.  
In the following sections, we derive this final objective step-by-step: (1) approximating the MI objectives in $S$ and $U$ with CL estimators, (2) relaxing the dependence on labels $Y$ with self-supervised data augmentations, finally (3) discussing overall training and implementation details of end-to-end self-supervised learning.

\subsection{Supervised \names\ with shared and unique information}
\label{subsec:objectives}

\begin{figure}[tbp]
\centering
\vspace{-4mm}
\includegraphics[width=\linewidth]{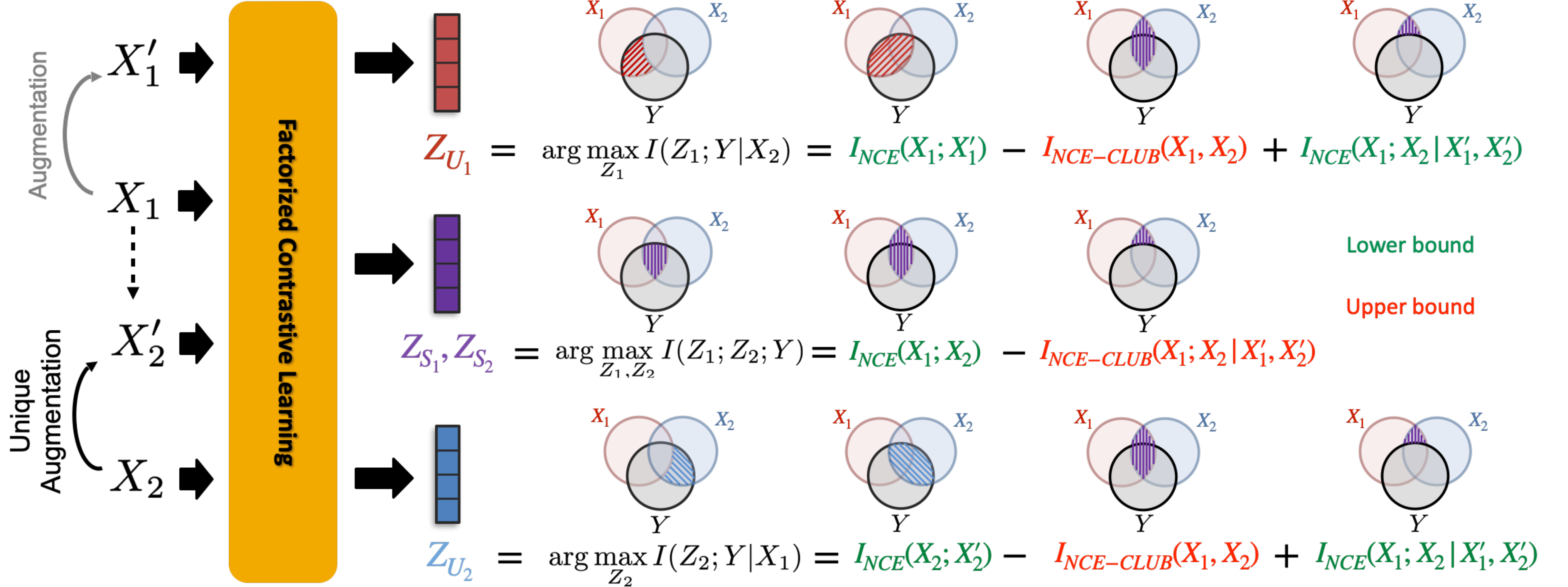}
\vspace{-2mm}
\caption{\names: We propose a self-supervised CL method to learn \textit{factorized} representations $Z_{S_1}$, $Z_{S_2}$, $Z_{U_1}$, and $Z_{U_2}$ to capture task-relevant information shared in both $X_1$ and $X_2$, unique to $X_1$, and unique to $X_2$. By starting with information-theoretic first principles of shared and unique information, we design contrastive estimators to both \textit{capture task-relevant} and \textit{remove task-irrelevant} information, where a notion of task-relevance without explicit labels is afforded by a new definition of \textit{multimodal augmentations} $X_1',X_2'$. Lower bounds are in {\color{gg}green} and upper bounds are in {\color{rr}red}.}
\label{fig:method}
\vspace{-4mm}
\end{figure}

To capture shared and unique information via an objective function, we will need to maximize lower bounds for all terms with a positive sign in Eq.(\ref{eq:shared}) and (\ref{eq:unique}) $\left( I\left(X_1; X_2\right), I\left(X_i; Y\right), I\left(X_1; X_2 | Y\right)\right)$ and minimize upper bounds for all terms with a negative sign $\left( I\left(X_1; X_2\right), I\left(X_1; X_2 | Y\right)\right)$. Our first theorem derives general lower and upper bounds for MI terms as variants of contrastive estimation: 

\begin{theorem}
    (Contrastive estimators for $I(X_1; X_2)$) Defining the NCE and NCE-CLUB estimators,
    \begin{align}
        I_\textsc{NCE}(X_1; X_2) &= \mathbb{E}_{\substack{x_1,x_2^+ \sim p(x_1,x_2)\\x_2^- \sim p(x_2)}} \left[ \log \frac{\exp f(x_1,x_2^+)}{\sum_k \exp f(x_1, x_2^-)} \right] \label{eq:nce_original}\\
        I_\textsc{NCE-CLUB}(X_1; X_2) &= \mathbb{E}_{x_1,x_2^+ \sim p(x_1,x_2)} \left[  f^*(x_1,x_2^+) \right] - \mathbb{E}_{\substack{x_1 \sim p(x_1)\\x_2^- \sim p(x_2)}} \left[f^*(x_1,x_2^-) \right] \label{eq:nceclub}
    \end{align}
    where $f^*(x_1,x_2)$ is the optimal critic from $I_\textsc{NCE}$ plugged into the $I_\textsc{CLUB}$ objective \cite{cheng2020club}. We call the proposed plug-in objective Eq.(\ref{eq:nceclub}) $I_\textsc{NCE-CLUB}$, and obtain lower and upper bounds on $I(X_1; X_2)$:
    \begin{align}
        I_\textsc{NCE}(X_1; X_2) \le I(X_1; X_2) \le I_\textsc{NCE-CLUB}(X_1; X_2).
    \end{align}
\end{theorem}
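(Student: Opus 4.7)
The plan is to prove the two inequalities separately, with the lower bound being a direct application of the classical InfoNCE argument and the upper bound requiring an explicit identification of the optimal NCE critic. For the lower bound $I_\textsc{NCE}(X_1;X_2) \le I(X_1;X_2)$, I would follow the derivation of \cite{oord2018representation}: express the expected log-softmax in terms of the density ratio $p(x_2 \mid x_1)/p(x_2)$, apply Jensen's inequality to the expected log-partition over the negative samples, and conclude that the NCE objective is a valid lower bound (tight up to a $\log K$ offset). Since this step is standard, I would compress it and cite the original argument.

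For the upper bound, the crux is to compute the optimal critic of the population NCE objective. I would show that this maximizer is
\begin{equation}
f^*(x_1, x_2) = \log \frac{p(x_2 \mid x_1)}{p(x_2)} + g(x_1),
\end{equation}
where $g$ is any function of $x_1$ alone, reflecting the invariance of the softmax to additive shifts in its second argument across the negatives. Substituting $f^*$ into the plug-in objective $I_\textsc{NCE-CLUB}$ from Eq.~(\ref{eq:nceclub}), the $g(x_1)$ term cancels between the positive and negative expectations. The first expectation then collapses to $I(X_1;X_2)$, and the second becomes $-\mathrm{KL}(p(x_1)p(x_2)\,\|\,p(x_1,x_2))$, which is non-positive by Gibbs' inequality. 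Subtracting a non-positive quantity yields $I_\textsc{NCE-CLUB}(X_1;X_2) \ge I(X_1;X_2)$, giving the upper bound. Mechanistically this is the same argument used in \cite{cheng2020club} to justify CLUB, except that here the variational log-density is supplied for free by the optimal NCE critic.

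I expect the main obstacle to be the precise status of the optimal critic $f^*$. The identification above is a population-level, infinite-negative-sample statement, whereas the practical $I_\textsc{NCE}$ uses a finite batch and a learned critic. I would handle this by stating the theorem as a population result, so that the bound becomes purely algebraic given the Bayes-optimal critic, and then remark that in the finite-sample regime the bounds hold asymptotically and the plug-in estimator is consistent. A secondary subtlety is the cancellation of the gauge function $g(x_1)$: I would highlight that this cancellation depends critically on the two expectations in $I_\textsc{NCE-CLUB}$ sharing the same $p(x_1)$ marginal, which is precisely why the NCE critic can be inserted into a CLUB-style objective without an additional normalization step. Finally, by swapping the roles of $X_1$ and $X_2$, the same argument delivers matching bounds on $I_\textsc{NCE-CLUB}(X_2;X_1)$, which are what is ultimately needed to ground the shared and unique estimators in Eqs.~(\ref{eq:shared})--(\ref{eq:unique}).
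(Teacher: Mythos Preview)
Your proposal is correct and follows essentially the same route as the paper: cite \cite{oord2018representation} for the lower bound, identify the optimal NCE critic (the paper writes it as $\log p(x_1\mid x_2)+c(x_1)$, equivalent to your density-ratio form), plug it into the CLUB-style difference of expectations, observe that the gauge $g(x_1)$ cancels because both expectations share the same $p(x_1)$ marginal, and conclude the upper bound. The only cosmetic difference is that the paper invokes \cite{cheng2020club} for the final inequality whereas you spell it out via the reverse KL and Gibbs' inequality, which is slightly more self-contained but not a different argument.
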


\begin{proof}
The lower bound $I_\textsc{NCE}(X_1; X_2) \le I(X_1; X_2)$ follows from~\citet{oord2018representation}: optimizing the objective leads to an optimal critic \cite{poole2019variational} $f^* = \log p(x_1 | x_2) + c(x_1)$, with a deterministic function $c(\cdot)$. Plugging optimal critic $f^*$ into $I_\textsc{NCE-CLUB}(X_1; X_2)$ cancels out the $c(x_1)$ term and yields $I_\textsc{NCE-CLUB}(X_1; X_2)$ and $I(X_1; X_2) \le I_\textsc{NCE-CLUB}$. We include a detailed proof in Appendix~\ref{app:method1}.
\end{proof}
$I_\textsc{NCE-CLUB}(X_1; X_2)$ gives a desired upper bound of $I(X_1; X_2)$ ``for free'' while avoiding separately optimizing lower bound and upper bounds. In Figure~\ref{fig:bounds}, we show these two bounds in practice across two Gaussian distributions $X_1$ and $X_2$ with varying amounts of MI $I(X_1;X_2)$. We use the second formulation of $I_\textsc{CLUB}$ \cite{cheng2020club}, which assumes $p(x_1|x_2)$ to be unknown.
Our upper bound is empirically tighter (see Figure~\ref{fig:bounds}) and comes for ``free'' via jointly maximizing the lower bound $I_\textsc{NCE}$. These lower and upper bounds can be seen as new contrastive objectives over positive and negative $(x_1,x_2)$ pairs, enabling a close integration with existing pre-training paradigms.
Finally, we can similarly obtain bounds for the conditional MI $I_\textsc{NCE}(X_1;X_2|Y) \le I(X_1;X_2|Y) \le I_\textsc{NCE-CLUB}(X_1;X_2|Y)$:

{\small
\begin{align}
    I_\textsc{NCE}(X_1;X_2|Y) &= \mathbb{E}_{p(y)} \left[ \mathbb{E}_{\substack{x_1,x_2^+ \sim p(x_1,x_2|y)\\x_2^- \sim p(x_2|y)}} \left[ \log \frac{\exp f(x_1,x_2^+, y)}{\sum_k \exp f(x_1, x_2^-, y)} \right] \right] \label{eq:conditional_nce_supervised} \\
    I_\textsc{NCE-CLUB}(X_1;X_2|Y) &= \mathbb{E}_{p(y)} \left[ \mathbb{E}_{x_1,x_2^+ \sim p(x_1,x_2|y)} \left[ f^*(x_1,x_2^+, y) \right] - \mathbb{E}_{\substack{x_1 \sim p(x_1|y)\\x_2^- \sim p(x_2|y)}} \left[ f^*(x_1,x_2^-, y) \right] \right] \label{eq:conditional_club_supervised}
\end{align}
}

These two bounds result in \textit{conditional CL} objectives \cite{tsailearning, tsai2022conditional, ma2021conditional} - they differ critically from standard CL methods since they capture task-irrelevant shared information that remains between $X_1$ and $X_2$ after observing $Y$. This task-irrelevant shared information is removed by minimizing its upper bound. Note that $f(x_1, x_2, y)$ here denotes a different function from $f(x_1, x_2)$ in Eq.(\ref{eq:nce_original}), as the general forms are different (taking in $x_1, x_2$ versus $x_1, x_2, y$). $f(x_1, x_2, y)$ can be implemented in different ways, e.g., $g([x_1, y])^Th(x_2)$ where $g(), h()$ are trainable encoders and $[x_1, y]$ denotes concatenation \citep{sordoni2021decomposed}. 

\begin{figure}[tbp]
\centering
\vspace{-2mm}
\includegraphics[width=0.94\linewidth]{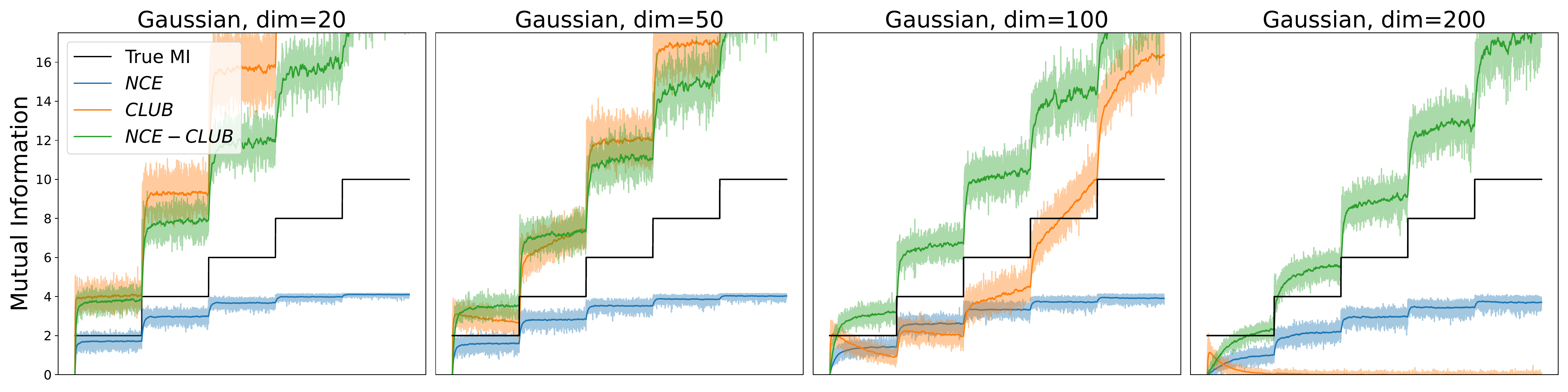}
\vspace{-2mm}
\caption{\small{Estimated $I_\textsc{NCE}$ lower bound~\cite{oord2018representation} and our proposed upper bound $I_\textsc{NCE-CLUB}$ on sample distributions with changing mutual information: our upper bound is tighter, more accurate, and more stable than $I_\textsc{CLUB}$ upper bound~\cite{cheng2020club}, and also comes for `free' via jointly estimating both lower and upper bounds simultaneously. We find that as dimension increases, the $I_\textsc{CLUB}$ estimator collapses to zero and no longer tracks true MI.}}
\label{fig:bounds}
\vspace{-4mm}
\end{figure}

\subsection{Self-supervised \names\ via multimodal augmentations}
\label{subsec:unique_aug}

The derivations above bring about supervised CL objectives with access to $Y$~\cite{khosla2020supervised}. For unsupervised CL~\cite{tian2020makes,oord2018representation}, we derive similar objectives without access to $Y$ by leveraging semantic augmentations on each modality. Denote $X'$ as some augmentation of $X$ (e.g., rotating, shifting, or cropping). Under the \textit{optimal augmentation} assumption from~\citet{tian2020makes} (restated below), replacing $Y$ with $X'$ in our formulations enables learning of task-relevant information without access to labels:
\begin{definition}
\label{def:optimal_single}
    (Optimal unimodal augmentation)~\cite{tian2020makes} $X_1'$ is an optimal unimodal augmentation for $X_1$ when $I(X; X') = I(X; Y)$, which implies that the only information shared between $X$ and $X'$ is task-relevant with no irrelevant noise.
\end{definition}
This assumption is satisfied when all information shared between $X$ and $X'$ is task-relevant, which implies that the augmentation keeps task-relevant information constant while changing task-irrelevant information. In the case of image classification, task-relevant information is the object in the picture, while task-irrelevant information is the background. 
By performing two separate unimodal augmentations giving $X_1'$ and $X_2'$, we can substitute contrastive estimators in Eqs.(\ref{eq:conditional_nce_supervised}) and (\ref{eq:conditional_club_supervised}), by replacing $I(X_i;Y)$ terms with $I(X_i;X_i')$ and replacing $I(X_1;X_2|Y)$ terms with  $I(X_1;X_2|X_1', X_2')$:
{\small
\begin{align}
    I_\textsc{NCE}(X_1;X_2|X_1',X_2') &=  \mathbb{E}_{p(x_1', x_2')} \left[ \mathbb{E}_{\substack{x_1,x_2^+ \sim p(x_1,x_2|x_1', x_2')\\x_2^- \sim p(x_2|x_1', x_2')}} \left[ \log \frac{\exp f(x_1,x_2^+, x_1', x_2')}{\sum_k \exp f(x_1, x_2^-, x_1', x_2')} \right] \right] \label{eq:nce_final_ssl} \\
    I_\textsc{NCE-CLUB}(X_1;X_2|X_1',X_2') &= \mathbb{E}_{p(x_1', x_2')} \Big[ \mathbb{E}_{x_1,x_2^+ \sim p(x_1,x_2|x_1', x_2')} [f^*(x_1,x_2^+, x_1', x_2') ] \nonumber \\
    &- \mathbb{E}_{\substack{x_1 \sim p(x_1|x_1', x_2')\\x_2^- \sim p(x_2|x_1', x_2')}} [f^*(x_1,x_2^-, x_1', x_2') ] \Big] \label{eq:nceclub_final_ssl}
\end{align}
}The objectives can be seen as conditional contrastive learning on augmentations $(X_1',X_2'$). Here again $f(x_1, x_2, x_1', x_2')$ is different from the critics in Eqs.(\ref{eq:conditional_nce_supervised} because of the different general forms. We implement $f()$ here as $g([x_1, x_1'])^Th([x_2, x_2'])$ where $g(), h()$ are trainable encoders specific for each modality and $[x_1, x_1']$ denotes concatenation. This concatenation is justified by the CMI estimators in~\citet{sordoni2021decomposed}, who show that concatenating the conditioning variable with the input in the critic $f(x_1, x_2, x_1', x_2')$ yields a Conditional InfoNCE estimator (Eq.(\ref{eq:nce_final_ssl})) that is a lower bound for CMI. However, the exact Conditional InfoNCE estimator learns a different conditional distribution $p(x_1, x_2 | x_1', x_2')$ for each augmented pair $x_1',x_2'$, which can be prohibitively expensive. We could approximate this by creating multiple augmentations of a single paired $x_1, x_2$. Our code uses one augmented pair $x_1', x_2'$ for each $x_1, x_2$ but could be extended to multiple pairs, and we find this simple approach yields consistent CMI lower and upper bounds that are empirically comparable to existing CMI estimators~\cite{mukherjee2020ccmi,sordoni2021decomposed}. We include full comparisons and implementation details in Appendix~\ref{app:discussion_conditional}, and in Appendix~\ref{app:method2} we discuss an alternative interpretation based on viewing CL as kernel learning which permits using conditional kernel estimation for our objectives.

\begin{wrapfigure}{L}{0.45\textwidth}
    \vspace{-8mm}
    \begin{minipage}{0.45\textwidth}
    \vspace{-0mm}
    \includegraphics[width=\linewidth]{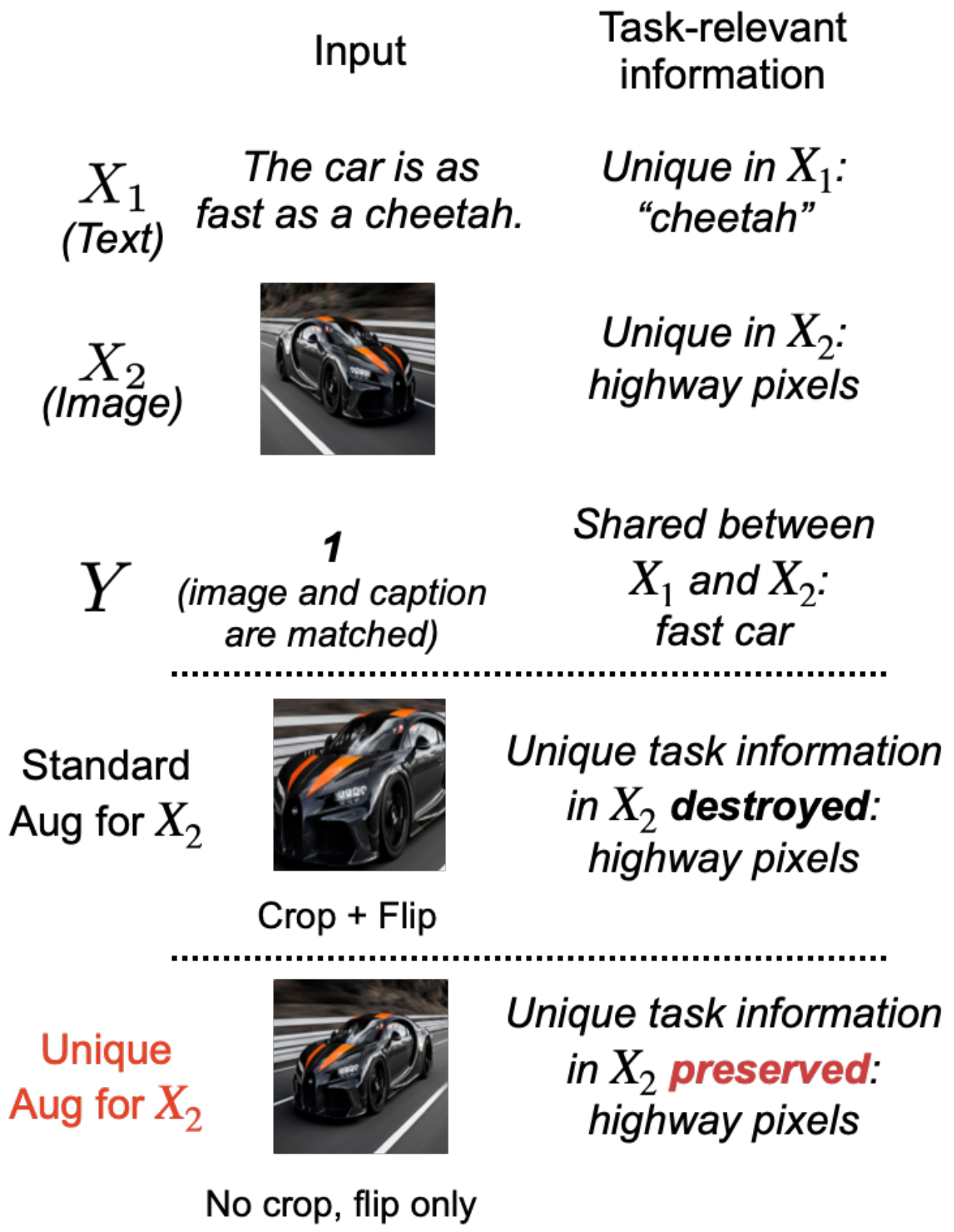}
    \vspace{-4mm}
    \caption{\small{Standard vs. unique augmentations for the figurative language~\cite{yosef2023irfl} dataset. After augmenting text modality $X_1$ independently (same for both augmentation types), we illustrate their differences for image augmentation: unique augmentation on images should avoid removing information referred to by $X_1$ (the text). The text mentions that the car is fast so unique augmentation for images should \textit{not} remove the highway pixels of the image which can suggest the car is fast.}}
    \label{fig:augs}
    \vspace{-4mm}

\end{minipage}
\vspace{-11mm}
\end{wrapfigure}

Although we find this method to work well in practice, a more careful analysis reveals that $2$ separate unimodal augmentations $X_1'$ and $X_2'$ each satisfying $I(X_i; X_i') = I(X_i; Y)$ do not together satisfy $I(X_1;X_2|Y)=I(X_1;X_2|X_1', X_2')$ needed for the substitution in Eqs.(\ref{eq:nce_final_ssl}) and (\ref{eq:nceclub_final_ssl}) to hold with equality. To satisfy this property exactly, we define optimal multimodal augmentations:
\begin{definition}
\label{def:optimal_multi}
    (Optimal multimodal augmentation) $X_1'$ and $X_2'$ are optimal multimodal augmentation for $X_1$ and $X_2$ when $I(X_1,X_2; X_1',X_2') = I(X_1,X_2; Y)$, which implies that the only information shared between $X_1,X_2$ and $X_1',X_2'$ is task-relevant with no irrelevant noise.
\end{definition}
We satisfy $I(X_1,X_2; X_1',X_2') = I(X_1,X_2; Y)$ using two steps:
\begin{align}
    \textit{Unimodal aug: } &X_1' \textrm{ s.t. } I(X_1; X_1') = I(X_1; Y), \label{assump:unimodal_aug}\\
    \textit{Unique aug: } &X_2' \textrm{ s.t. } I(X_2; X_2'|X_1) = I(X_2; Y|X_1). \label{assump:unique_aug}
\end{align}
We call the second step \textit{unique augmentation}: after observing $X_1$, we create augmented $X_2'$ from $X_2$ to keep task-relevant information not already in $X_1$. To empirically satisfy optimal multimodal augmentations, we avoid augmentations in one modality that will remove or strongly destroy information shared with the other modality. For example, in image captioning, we should avoid image augmentations such as cropping that destroy information from the caption (e.g., cropping object parts referred to by the caption), and instead, only augment images via flipping or color jittering which retains all caption information. Figure~\ref{fig:augs} shows an example of unique augmentation that satisfies these conditions. In our experiments, we will show that our augmentations consistently perform better than standard augmentations (Table \ref{tab:fig}), suggesting that approximately satisfying Eqs.(\ref{assump:unimodal_aug}) and (\ref{assump:unique_aug}) can be empirically sufficient, which is simple and straightforward to implement on real-world datasets.

\begin{figure}[tbp]
    \begin{minipage}{0.45\textwidth}
    \vspace{-22mm}
    \begin{algorithm}[H]
    \begin{algorithmic}
    \REQUIRE Multimodal dataset $\{\mathbf{X_1}, \mathbf{X_2}$\}.
    \vspace{.2em}\hrule\vspace{.5em}
    \STATE Initialize networks $f(\cdot)$.
    \WHILE{not converged}
    \FOR{sampled batch $\{\vx_1, \vx_2\}$}
    \STATE Estimate $I_{\textsc{NCE}}(X_1; X_2)$ from Eq. \ref{eq:nce_original}
    \STATE $\mathcal{L} = - I_{\textsc{NCE}}(X_1; X_2)$
    \STATE Update $f(\cdot)$ to minimize $\mathcal{L}$
    \ENDFOR
    \ENDWHILE
    \RETURN $f(\cdot)$
  \end{algorithmic}
  \caption{Standard multimodal CL.}
  \label{alg:standardcl}
\end{algorithm}
\end{minipage}
\begin{minipage}{0.48\textwidth}
    \vspace{-14.2mm}
    \begin{algorithm}[H]
    \begin{algorithmic}
    \REQUIRE Multimodal dataset $\{\mathbf{X_1}, \mathbf{X_2}$\}.
    \vspace{.2em}\hrule\vspace{.5em}
    \STATE Initialize networks $f(\cdot)$.
    \WHILE{not converged}
    \FOR{sampled batch $\{\vx_1, \vx_2\}$}
    \STATE $\vx_1' \leftarrow$ \textbf{{\color{rr}$\textrm{Augment}(\vx_1)$}}
    \STATE $\vx_2' \leftarrow$ \textbf{{\color{rr}$\textrm{Unique-Augment}(\vx_2| \vx_1)$}}
    \STATE Plug $\vx_1'$ and $\vx_2'$ into Eq. \ref{eq:nce_final_ssl} and \ref{eq:nceclub_final_ssl}
    \STATE Estimate \textbf{{\color{rr} $\mathbf{S}, \mathbf{U_1}, \mathbf{U_2}$}} from Eq. \ref{eq:shared} and \ref{eq:unique}
    \STATE $\mathbf{\color{rr} \mathcal{L} = -( S + U_1 + U_2)}$
    \STATE Update $f(\cdot)$ to minimize $\mathcal{L}$
    \ENDFOR
    \ENDWHILE
    \RETURN $f(\cdot)$
  \end{algorithmic}
  \caption{\textbf{\color{rr}\names}.}
  \label{alg:full}
\end{algorithm}
\vspace{-8mm}
\end{minipage}
\end{figure}

\vspace{-2mm}
\subsection{Overall method and implementation}
\vspace{-2mm}

The final algorithm sketch is in Algorithm~\ref{alg:full}, which we compare against standard CL in Algorithm~\ref{alg:standardcl}. It can be shown that \names\ learns all the task-relevant information from both modalities:
\begin{theorem}
    (Optimality of \names) If $Z_{S_1},Z_{S_2},Z_{U_1},Z_{U_2}$ perfectly maximize Eqs.(\ref{eq:final_objectives1}-\ref{eq:final_objectives2}) and the estimations in Eqs.(\ref{eq:shared}) and (\ref{eq:unique}) are tight, we obtain $I(X_1, X_2 ; Y) = I(Z_{S_1}; Z_{S_2}; Y) + I(Z_{U_1}; Y | Z_{S_2}) + I(Z_{U_2}; Y | Z_{S_1})$, suggesting that \names\  learns both shared and unique task-relevant information.
\end{theorem}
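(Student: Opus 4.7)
The plan is to reduce the claimed identity to the standard decomposition
$I(X_1,X_2;Y) = S + U_1 + U_2$
of Eq.(\ref{eq:sharedunique}), by showing that each of the three summands on the right-hand side of the theorem matches its counterpart $S$, $U_1$, or $U_2$ individually under optimality, and then summing.

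First I would verify that each representation saturates its objective. Since $Z_{S_1}=f_\theta(X_1)$ depends only on $X_1$, a direct check gives $I(Z_1;X_2;Y)\le I(X_1;X_2;Y)=S$ for every choice $Z_1=f_\theta(X_1)$, and this bound is attained (e.g.\ by the identity encoder), so any maximizer obeys $I(Z_{S_1};X_2;Y)=S$. The same DPI-plus-identity-encoder argument yields $I(Z_{S_2};X_1;Y)=S$, $I(Z_{U_1};Y|X_2)=U_1$, and $I(Z_{U_2};Y|X_1)=U_2$. The tightness hypothesis on the contrastive estimators in Eqs.(\ref{eq:shared}-\ref{eq:unique}) is precisely what allows the surrogate losses used for training to stand in for these exact MI quantities, so that perfectly maximizing the surrogates implies perfectly maximizing the true MIs.

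Next I would promote these four equalities to the three target equalities $I(Z_{S_1};Z_{S_2};Y)=S$, $I(Z_{U_1};Y|Z_{S_2})=U_1$, and $I(Z_{U_2};Y|Z_{S_1})=U_2$, after which a simple addition together with Eq.(\ref{eq:sharedunique}) finishes the proof. For the shared term, DPI gives $I(Z_{S_1};Z_{S_2};Y)\le I(Z_{S_1};X_2;Y)=S$ on one side and symmetrically $I(Z_{S_1};Z_{S_2};Y)\le I(X_1;Z_{S_2};Y)=S$ on the other; combined with the sufficiency of $Z_{S_1}$ and $Z_{S_2}$ for the shared task-relevant content of $X_1$ and $X_2$ respectively, no shared information can be lost in passing from $(X_1,X_2)$ to $(Z_{S_1},Z_{S_2})$, so equality holds. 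For the two conditional terms, the saturation $I(Z_{S_2};X_1;Y)=S$ says that $Z_{S_2}$ captures all of $X_2$'s task-relevant interaction with $X_1$; this sufficiency, together with the fact that $Z_{U_1}$ is a function of $X_1$ encoding only the $X_2$-unique part of the $X_1$-$Y$ relation, gives the Markov chain $(Z_{U_1},Y) - Z_{S_2} - X_2$ on the task-relevant subdistribution and hence $I(Z_{U_1};Y|X_2)=I(Z_{U_1};Y|Z_{S_2})=U_1$; the $U_2$ case is symmetric.

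The main obstacle is precisely this conditional substitution $I(Z_{U_1};Y|X_2)=I(Z_{U_1};Y|Z_{S_2})$, because replacing a conditioning variable by a function of it does not in general preserve conditional mutual information, so a naive DPI step is not enough. Making this rigorous requires converting the saturation identity $I(Z_{S_2};X_1;Y)=S$ into an explicit conditional-independence statement, and then checking that the specific triple $(Z_{U_1},Y,X_2)$ factors through $Z_{S_2}$. Once this step is in place, the rest of the proof is a routine application of the data processing inequality, the chain rule for mutual information, and the decomposition in Eq.(\ref{eq:sharedunique}).
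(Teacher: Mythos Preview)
Your overall plan---reduce to the decomposition $I(X_1,X_2;Y)=S+U_1+U_2$ by proving $I(Z_{S_1};Z_{S_2};Y)=S$, $I(Z_{U_1};Y|Z_{S_2})=U_1$, and $I(Z_{U_2};Y|Z_{S_1})=U_2$ separately---matches the paper's, and your treatment of the shared term is close to what the paper does. The genuine gap is exactly where you flag it: the conditional substitution $I(Z_{U_1};Y|X_2)=I(Z_{U_1};Y|Z_{S_2})$.

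Your proposed fix, the Markov chain $(Z_{U_1},Y)-Z_{S_2}-X_2$, does not hold. The representation $Z_{S_2}$ is optimized only for the \emph{shared} task-relevant content $I(X_1;X_2;Y)$; it need not capture the unique part $U_2=I(X_2;Y|X_1)$. Whenever $U_2>0$ we have $I(X_2;Y|Z_{S_2})>0$, so $Y\not\perp X_2\mid Z_{S_2}$ and the Markov chain breaks. The phrase ``on the task-relevant subdistribution'' does not repair this: there is no distributional restriction under which the claimed conditional independence becomes valid while still computing the original CMI.

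The paper's route around this is different and uses a piece you have not brought in: the \emph{second} unique representation $Z_{U_2}$. One first expands
\[
I(Z_{U_1};Y|Z_{S_2}) = I(Z_{U_1};Y|Z_{S_2},Z_{U_2}) + I(Z_{U_1};Z_{U_2};Y|Z_{S_2}),
\]
and argues the second term vanishes because the tightness hypothesis on the unique estimators (not merely ``surrogate equals true MI,'' but that both the lower and upper bounds in Eq.~(\ref{eq:unique}) are tight) forces $Z_{U_1}$ to contain no information shared with $X_2$, hence $I(Z_{U_1};Z_{U_2})=0$ and indeed $I(Z_{U_1};Z_{U_2}|T)=0$. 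Now both $Z_{S_2}$ and $Z_{U_2}$ are deterministic functions of $X_2$, so conditioning on $(Z_{S_2},Z_{U_2},X_2)$ collapses to conditioning on $X_2$, giving $I(Z_{U_1};Y|Z_{S_2},Z_{U_2})=I(Z_{U_1};Y|X_2)=U_1$. The key missing idea in your proposal is this detour through $Z_{U_2}$ together with the stronger use of tightness to obtain the independence $Z_{U_1}\perp Z_{U_2}$; without it, the swap of conditioning variables cannot be justified.
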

We include the full proof in Appendix~\ref{app:method3}. In practice, while we do not expect perfect estimation of MI quantities and maximization with respect to MI objectives, we include implementation details regarding architectures and contrastive objectives that improve empirical performance in Appendix~\ref{app:implementation}.

\textbf{Complexity}: Compared to heuristic combinations of cross-modal and single-modality CL~\cite{huang2021multilingual,jain2021mural,lee2020parameter,wang2022rethinking,yuan2021multimodal,shan2022ernievil, yang2022unified}, our approach does not significantly increase complexity: (1) upper bounds on MI can be estimated ``for free'' by directly plugging in the optimal critic from $I_\textsc{NCE}$, (2) removal of task-irrelevant information via $I(X_1;X_2|X_1',X_2')$ shares encoders with $I_\textsc{NCE}$, and (3) separate unimodal augmentations perform empirically well. We describe some extensions of other self-supervised methods in Appendix~\ref{app:method5}.

\vspace{-2mm}
\section{Experiments}
\vspace{-2mm}

We run comprehensive experiments on a suite of synthetic and large-scale real-world datasets with varying requirements of shared and unique task-relevant information, comparing our \names\ method to key baselines:
\begin{enumerate}[noitemsep,topsep=0pt,nosep,leftmargin=*,parsep=0pt,partopsep=0pt]
    \item SimCLR~\cite{chen2020simple}: the straightforward method of cross-modal $(X_1,X_2)$ contrastive learning.
    \item Cross+Self~\cite{yuan2021multimodal,huang2021multilingual,lee2020parameter,jain2021mural,shan2022ernievil, yang2022unified}: captures a range of methods combining cross-modal $(X_1,X_2)$ CL with additional unimodal $(X_i,X_i')$ CL objectives. This category also includes other ways of preserving unique information, such as through (variational) autoencoder reconstructions~\cite{wang2022rethinking}.
    \item Cross+Self+Fact~\cite{yuan2021multimodal,yang2022visionlanguage}: A factorized extension of Cross+Self, which is approximately done in prior work that adds separate (typically pre-trained) unimodal encoders for each modality.
    \item SupCon~\cite{khosla2020supervised}, which learns $I(X_1;X_2|Y)$ using CL conditioned on $Y$ from labeled data.
\end{enumerate}
We also carefully ablate each component of our method and investigate factors, including training data size and choice of augmentations. The intermediate ablations that emerge include:
\begin{enumerate}[noitemsep,topsep=0pt,nosep,leftmargin=*,parsep=0pt,partopsep=0pt]
    \item \names-SUP: The supervised CL version which uses labels $Y$ in Eqs.(\ref{eq:conditional_nce_supervised}) and (\ref{eq:conditional_club_supervised}).
    \item \names-SSL: The fully self-supervised version of our approach replacing $Y$ with multimodal augmentations $X_1'$ and $X_2'$ to approximate the task.
    \item OurCL-SUP: \names-SUP but removing the factorization so only two features $Z_1$ is optimized for both $I(X_1;X_2;Y)$ and $I(X_1;Y|X_2)$, $Z_2$ optimized for both $I(X_1;X_2;Y)$ and $I(X_2;Y|X_1)$.
    \item OurCL-SSL: \names-SSL but also removing the factorization in the self-supervised setting.
\end{enumerate}
The formulation of each ablation and implementation can be found in Appendix~\ref{app:implementation}.

\vspace{-1mm}
\subsection{Controlled experiments on synthetic datasets}
\vspace{-1mm}

\textbf{Synthetic data generation}: We begin by generating data with controllable ratios of task-relevant shared and unique information. Starting with a set of latent vectors $w_1, w_2, w_s \sim \mathcal{N}(0_d, \Sigma_d^2), d=50$ representing information unique to $X_1,X_2$ and common to both respectively, the concatenated vector $[w_1,w_s]$ is transformed into high-dimensional $x_1$ using a fixed transformation $T_1$ and likewise $[w_2,w_s]$ to $x_2$ via $T_2$. The label $y$ is generated as a function (with nonlinearity and noise) of varying ratios of $w_s$, $w_1$, and $w_2$ to represent shared and unique task-relevant information.

\textbf{Results}: In Figure~\ref{fig:overview}, we show our main result on synthetic data comparing \names\ with existing CL baselines. \names\ consistently maintains the best performance, whereas SimCLR~\cite{chen2020simple} and SupCon~\cite{khosla2020supervised} see performance drops as unique information increases. Cross+Self~\cite{yuan2021multimodal,huang2021multilingual,lee2020parameter,jain2021mural} recovers in fully unique settings (x-axis$=1.0$) but suffers at other ratios.

\begin{table*}[t]
\centering
\fontsize{9}{11}\selectfont
\setlength\tabcolsep{4pt}
\vspace{-8mm}
\caption{We probe whether contrastive representations learned by classic CL methods and \names\ contain shared $w_s$ or unique $w_1,w_2$ information. \names\ captures the most unique information.}
\centering

\begin{tabular}{l|cc|cc|cc|cccc}
\hline \hline
Model & \multicolumn{2}{c|}{SimCLR} & \multicolumn{2}{c|}{Cross+self} & \multicolumn{2}{c|}{SupCon} & \multicolumn{4}{c}{\names} \\
Representations & $Z_1$ & $Z_2$ & $Z_1$ & $Z_2$ & $Z_1$ & $Z_2$ & $Z_{U_1}$ & $Z_{U_2}$ & $Z_{S_1}$ & $Z_{S_2}$ \\
\hline
$I(Z;w_1)$ & 4.45 & 0.16 & 4.39 & 0.14 & 5.17 & 0.19 & \textbf{7.83} & 0.03 & 6.25 & 0.04\\
$I(Z;w_2)$ & 0.17 & 3.92 & 0.13 & 4.26 & 0.23 & 5.17 & 0.06 & \textbf{7.17} & 0.05 & 5.79\\
$I(Z;w_s)$ & 12.61 & 12.06 & 11.30 & 11.47 & 7.48 & 7.17 & 9.47 & 9.89 & 10.13 & 9.40\\
\hline \hline
\end{tabular}

\vspace{-4mm}
\label{tab:probing}
\end{table*}

\textbf{Representation probing information}: We run a probing experiment to compute how well different contrastive representations capture shared and unique information. In Table~\ref{tab:probing}, for the $Z_i$'s learned by each method, we approximately compute $I(Z_i;w_1)$, $I(Z_i;w_2)$, and $I(Z_i;w_s)$ with respect to ground truth generative variables $w_s$, $w_1$, and $w_2$. As expected, existing methods such as SimCLR capture smaller amounts of unique information (roughly $4$ bits in $I(Z_i;w_1)$ and $I(Z_i;w_2)$), focusing instead on learning $I(Z_i;w_s)$ (12 bits). Cross+self captures slightly larger $I(Z_i;w_2)=4.26$, and SupCon with labeled data captures up to $5$ bits of unique information. Our \names\ approach captures $7$ bits of unique information and maintains $10$ bits of shared information, with total information captured higher than the other approaches. Furthermore, $\{Z_{S_1},Z_{S_2}\}$ capture more information about $w_s$, $Z_{U_1}$ about $w_1$, and $Z_{U_2}$ about $w_2$, indicating that factorization in our approach is successful.

\vspace{-1mm}
\subsection{Self-supervised multimodal learning with low redundancy and high uniqueness}
\vspace{-1mm}

\textbf{Multimodal fusion datasets}: We use a large collection of real-world datasets provided in MultiBench~\citep{liang2021multibench}, where we expect varying ratios of shared and unique information important for the task, to compare \names\ with other CL baselines:
\begin{enumerate}[noitemsep,topsep=0pt,nosep,leftmargin=*,parsep=0pt,partopsep=0pt]
    \item \textsc{MIMIC}~\cite{johnson2016mimic}: mortality and disease prediction from $36,212$ medical records (tabular patient data and medical time-series sensors from ICU).

    \item \textsc{MOSEI}~\cite{zadeh2018multimodal}: multimodal sentiment and emotion benchmark with $23,000$ monologue videos.

    \item \textsc{MOSI}~\cite{zadeh2016mosi}: multimodal sentiment analysis from $2,199$ YouTube videos.

    \item \textsc{UR-FUNNY}~\citep{hasan2019ur}: a dataset of humor detection from more than $16,000$ TED talk videos.

    \item \textsc{MUsTARD}~\citep{castro2019towards}: a corpus of $690$ videos for research in sarcasm detection from TV shows.

    \item \textsc{IRFL}~\cite{yosef2023irfl}: $6,697$ matching images and figurative captions (rather than literal captions).
\end{enumerate}
Together, these datasets cover seven different modalities from the healthcare, affective computing, and multimedia research areas and total more than $84,000$ data points.
For \textsc{MIMIC} with tabular and medical sensor inputs, we train self-supervised CL models on top of raw modality inputs.
For \textsc{IRFL} with image and caption inputs, we start with a pretrained CLIP model~\cite{radford2021learning} and perform continued pre-training to update CLIP weights with our \names\ objectives, before linear classifier testing. For the remaining four video datasets, we train self-supervised CL models starting from standard pre-extracted text, video, and audio features~\cite{liang2021multibench}. Please refer to Appendix~\ref{appendix:data} for experimental details. We release our code and models at \url{https://github.com/pliang279/FactorCL}.

\textbf{Multimodal fusion results}: From Table~\ref{tab:fusion}, \names\ significantly outperforms the baselines that do not capture both shared and unique information in both supervised and self-supervised settings, particularly on \textsc{MuStARD} (where unique information expresses sarcasm, such as sardonic facial expressions or ironic tone of voice), and on \textsc{MIMIC} (with unique health indicators and sensor readings).
In Table~\ref{tab:fig}, we also show that \names\ substantially improves the state-of-the-art in classifying images and figurative captions which are not literally descriptive of the image on \textsc{IRFL}, outperforming zero-shot and fine-tuned CLIP~\cite{radford2021learning} as well as continued pre-training baselines on top of CLIP.

\begin{table*}[t]
\centering
\fontsize{8}{10}\selectfont
\setlength\tabcolsep{1pt}
\vspace{-1mm}
\caption{Results on MultiBench~\citep{liang2021multibench} datasets with varying shared and unique information: \names\ achieves strong results vs self-supervised (top $5$ rows) and supervised (bottom $3$ rows) baselines that do not have unique representations, factorization, upper-bounds to remove irrelevant information, and multimodal augmentations.}
\centering

\begin{tabular}{l|ccccc|ccccc}
\hline \hline
Model & $(X_1; X_2)$ & $(X_i; X_i')$ & $(X_1; X_2|Y)$ & $(X_2'')$ & Fact & \textsc{MIMIC} & \textsc{MOSEI} & \textsc{MOSI} & \textsc{UR-FUNNY} & \textsc{MUStARD} \\
\hline

SimCLR~\cite{chen2020simple} & \textcolor{gg}\cmark & \textcolor{rr}\xmark & \textcolor{rr}\xmark & \textcolor{rr}\xmark & \textcolor{rr}\xmark & 66.67\% & 71.03\% & 46.21\% & 50.09\% & 53.48\%\\
Cross+Self~\cite{wang2022rethinking} & \textcolor{gg}\cmark & \textcolor{gg}\cmark & \textcolor{rr}\xmark & \textcolor{rr}\xmark & \textcolor{rr}\xmark & 65.20\% & 71.04\% & 46.92\% & 56.52\% & 53.91\%\\

Cross+Self+Fact~\cite{yuan2021multimodal} & \textcolor{gg}\cmark & \textcolor{gg}\cmark & \textcolor{rr}\xmark & \textcolor{rr}\xmark & \textcolor{gg}\cmark & 65.49\% & 71.07\% & 52.37\% & 59.91\% & 53.91\% \\

OurCL-SSL & \textcolor{gg}\cmark & \textcolor{gg}\cmark & \textcolor{gg}\cmark & \textcolor{gg}\cmark & \textcolor{rr}\xmark & 65.22\% & 71.16\% & 48.98\% & 58.79\% & 53.98\%\\
\names-SSL & \textcolor{gg}\cmark & \textcolor{gg}\cmark & \textcolor{gg}\cmark & \textcolor{gg}\cmark & \textcolor{gg}\cmark & \textbf{67.34}\% & \textbf{74.88\%} & \textbf{52.91\%} & \textbf{60.50\%} & \textbf{55.80}\%\\
\hline
SupCon~\cite{khosla2020supervised} & \textcolor{rr}\xmark & \textcolor{rr}\xmark & \textcolor{gg}\cmark & \textcolor{rr}\xmark & \textcolor{rr}\xmark & 67.37\% & 72.71\% & 47.23\% & 50.98\% & 52.75\%\\
OurCL-SUP & \textcolor{gg}\cmark & \textcolor{gg}\cmark & \textcolor{gg}\cmark & \textcolor{rr}\xmark & \textcolor{rr}\xmark & 68.16\% & 71.15\% & 65.32\% & 58.32\% & 65.05\%\\
\names-SUP & \textcolor{gg}\cmark & \textcolor{gg}\cmark & \textcolor{gg}\cmark & \textcolor{rr}\xmark & \textcolor{gg}\cmark & \textbf{76.79\%} & \textbf{77.34\%} & \textbf{70.69\%} & \textbf{63.52\%} & \textbf{69.86}\%\\
\hline \hline
\end{tabular}

\vspace{-4mm}
\label{tab:fusion}
\end{table*}

\textbf{Modeling ablations}: 
In Table~\ref{tab:fusion}, we also carefully ablate each component in our method and indicate either existing baselines or newly-run ablation models.
\begin{enumerate}[noitemsep,topsep=0pt,nosep,leftmargin=*,parsep=0pt,partopsep=0pt]
    \item \textbf{Factorized representations}: In comparing \names-SSL with OurCL-SSL, and also \names-SUP with OurCL-SUP, we find that factorization is critical: without it, performance drops on average $6.1\%$, with performance drop as high as $8.6\%$ for \textsc{MIMIC}.
    \item \textbf{Information removal via upper bound}: By comparing \names\ with  SimCLR, Cross+Self, and Cross+Self+Fact, and SupCon that only seek to capture task-relevant information via contrastive lower bounds on MI, we find that separately modeling the task-relevant information (to be captured) and task-irrelevant information (to be removed) is helpful. Without removing task-irrelevant information via the upper-bound objective, performance drops on average $13.6\%$, with performance drops as high as $23.5\%$ for the \textsc{MOSI} dataset. We also found that training was more difficult without this objective, which is expected due to overwhelming superfluous information from the dataset \cite{zadeh2018multimodal}.
    \item \textbf{Multimodal augmentations}: Finally, we investigate the differences between separate unimodal augmentations (\names-IndAug in  Table~\ref{tab:fig}) versus a joint multimodal augmentation (\names-SSL) on the \textsc{IRFL} dataset. We choose this dataset since its images and captions are the easiest to visualize (see Figure~\ref{fig:augs} for augmentations from both strategies).
    In the self-supervised setting, we find that multimodal augmentations achieve $95\%$ performance, higher than the $92\%$ for separate unimodal augmentations, and both outperform baselines SimCLR and Cross+Self.
\end{enumerate}

\begin{wraptable}{r}{5cm}
\centering
\fontsize{9}{11}\selectfont
\setlength\tabcolsep{2pt}
\vspace{-4mm}
\caption{Continued pre-training on CLIP with our \names\ objectives on classifying images and figurative language.}
\centering
\begin{tabular}{l|c c}
\hline \hline
Task & \textsc{IRFL} \\
\hline
Zero-shot CLIP~\cite{radford2021learning} & 89.15\%\\
SimCLR~\cite{chen2020simple} & 91.57\%\\
Cross+Self~\cite{wang2022rethinking,yuan2021multimodal} & 95.18\%\\
\names-IndAug & 92.77\% \\
\names-SSL & \textbf{95.18}\%\\
\hline
Fine-tuned CLIP~\cite{radford2021learning} & 96.39\%\\
SupCon~\cite{khosla2020supervised} & 89.16\%\\
\names-SUP & \textbf{98.80}\%\\
\hline \hline
\end{tabular}
\label{tab:fig}
\vspace{-2mm}
\end{wraptable}

\textbf{Ablations on $S, U_1$ and $U_2$}:
In Table~\ref{tab:ablations}, we also test \names\ when training linear classifiers on top of only shared $\{Z_{S_1},Z_{S_2}\}$ and unique $Z_{U_1}$, $Z_{U_2}$ separately. We call these models \names-$S$, \names-$U_1$, and \names-$U_2$.
Immediately, we observe that performance drops as compared to the full \names\ model, indicating that both shared and unique information are critical in real-world multimodal tasks. 
As expected, the best-performing submodel is the one that captures the region with the largest amount of task-relevant information: \textsc{MOSEI} and \textsc{MOSI} are known to include a lot of redundancy and unique information since language is very important for detecting sentiment~\cite{zadeh2018multimodal}, so \names-$S$ and \names-$U_2$ perform best. For sarcasm detection on \textsc{MuStARD}, video information is most important with \names-$U_1$ performing best ($59.4\%$), and ablation models are also the furthest away from full multimodal performance ($69.9\%$). This is aligned with intuition where sarcasm is expressed through tone of voice and visual gestures (high $U_1$), as well as from contradictions between language and video (higher multimodal performance).

\begin{table*}[t]
\centering
\fontsize{9}{11}\selectfont
\setlength\tabcolsep{3pt}
\vspace{-0mm}
\caption{We ablate using only shared representations $\{Z_{S_1},Z_{S_2}\}$, unique representation $Z_{U_1}$, and $Z_{U_2}$ separately for prediction. Both shared and unique information are critical in real-world multimodal tasks.}
\centering

\begin{tabular}{l|ccccc}
\hline \hline
Model & \textsc{MIMIC} & \textsc{MOSEI} & \textsc{MOSI} & \textsc{UR-FUNNY} & \textsc{MUStARD} \\
\hline
\names-$S$ & 63.77\% & 77.17\% & 70.12\% & 63.42\% & 57.25\%\\
\names-$U_1$ & 55.90\% & 77.06\% & 70.11\% & 62.00\% & 59.42\%\\
\names-$U_2$ & 69.08\% & 71.01\% & 52.33\% & 54.35\% & 53.62\%\\
\hline
\names-SUP & \textbf{76.79\%} & \textbf{77.34\%} & \textbf{70.69\%} & \textbf{63.52\%} & \textbf{69.86\%} \\
\hline \hline
\end{tabular}

\vspace{-6mm}
\label{tab:ablations}
\end{table*}

\textbf{Additional results}: In Appendix~\ref{appendix:results}, we also verify \names\ in settings with abundant shared information, where we expect to recover the same performance as standard CL~\cite{chen2020simple,oord2018representation,tian2020makes}.

\vspace{-3mm}
\section{Related Work}
\vspace{-3mm}

\textbf{Contrastive learning} is a successful self-supervised learning paradigm for computer vision~\cite{oord2018representation,chen2020simple,he2020momentum,grill2020bootstrap,chen2021exploring,caron2020unsupervised}, natural language~\cite{gao2021simcse, meng2021coco, neelakantan2022text}, speech~\cite{oord2018representation, schneider2019wav2vec, baevski2020wav2vec}, and multimodal tasks~\cite{radford2021learning, jia2021scaling, akbari2021vatt}. Its foundational underpinnings are inspired by work in multiview information theory~\cite{federici2020learning,khosla2020supervised,sridharan2008information,tian2020makes,tsai2020self} studying the shared information between two views and whether they are necessary or sufficient in predicting the label. Recently,~\citet{wang2022rethinking} and~\citet{kahana2022contrastive} discuss the limitations of assuming multiview redundancy and propose autoencoder reconstruction or unimodal contrastive learning to retain unique information, which resembles the Cross+self baselines in our experiments. We refer the reader to~\citet{shwartz2023compress} for a comprehensive review on multiview and contrastive learning. Our work also relates to conditional contrastive learning \cite{tsai2022conditional, ma2021conditional, ye2022contrastive, chi2022conditional}, where positive or negative pairs are supposed to sample from conditional distributions. 

\textbf{Multimodal contrastive learning} aims to align related data from different modalities, typically provided as positive pairs. This could be done via optimizing a contrastive objective for inter-modality pairs \cite{radford2021learning,alayrac2020self, akbari2021vatt, jia2021scaling}, or both intra- and inter-modality data pairs \cite{yuan2021multimodal, huang2021multilingual, lee2020parameter, jain2021mural, kim2022transferring}. Our work also relates to factorized representation learning, which primarily studies how to capture modality-specific information primarily in each modality and multimodal information redundant in both modalities~\cite{hsu2018disentangling,tsai2018learning}. Prior work has used disentangled latent variable models~\cite{Bengio:2013:RLR:2498740.2498889,higgins2016beta,hsu2018disentangling,tsai2018learning}, mixture-of-experts~\cite{shi2019variational}, or product-of-experts~\cite{wu2018multimodal} layer to explain factors in multimodal data.

\textbf{Information theory} \cite{cover1991information,shannon1948mathematical} has been used to study several phenomena in multimodal learning, including co-learning~\cite{zadeh2020foundations, rahate2022multimodal} and multi-view learning~\cite{tsai2020self, huang2021makes}. Due to its theoretical importance, several lower and upper bounds have been proposed for practical estimation~\cite{oord2018representation,wu2020mutual,poole2019variational,ozair2019wasserstein}. One particular upper bound is given by~\citet{cheng2020club}, which we build on to create a more accurate and stable bound.
Our characterizations of shared and unique information are also related to partial information decomposition \cite{williams2010nonnegative}, co-information \cite{bell2003co, vergara2014review}, and interaction information \cite{mcgill1954multivariate} research.

\vspace{-3mm}
\section{Conclusion}
\vspace{-3mm}

This paper studied how standard CL methods suffer when task-relevant information lies in regions unique to each modality, which is extremely common in real-world applications such as sensor placement, medical testing, and multimodal interaction. In response, we proposed \names, a new method expanding CL techniques through the use of factorized representations, removing task-irrelevant information via upper bounds on MI, and multimodal data augmentations suitable for approximating the unobserved task. Based on \names's strong performance, there are several exciting directions in extending these ideas for masked and non-contrastive pre-training; we further discuss broader impacts and limitations of this line of work in Appendix~\ref{sec:impact}.

\vspace{-2mm}
\section*{Acknowledgements}
\vspace{-2mm}

This material is based upon work partially supported by Meta, National Science Foundation awards 1722822 and 1750439, and National Institutes of Health awards R01MH125740, R01MH132225, R01MH096951 and R21MH130767.
PPL is supported in part by a Siebel Scholarship and a Waibel Presidential Fellowship.
RS is supported in part by ONR grant N000142312368 and DARPA FA87502321015.
One of the aims of this project is to understand the comfort zone of people for better privacy and integrity.
Any opinions, findings, conclusions, or recommendations expressed in this material are those of the author(s) and do not necessarily reflect the views of the sponsors, and no official endorsement should be inferred. Finally, we would also like to acknowledge feedback from anonymous reviewers who significantly improved the paper and NVIDIA’s GPU support.

{\small
\bibliographystyle{plainnat}
\bibliography{refs}

\begin{thebibliography}{93}
\providecommand{\natexlab}[1]{#1}
\providecommand{\url}[1]{\texttt{#1}}
\expandafter\ifx\csname urlstyle\endcsname\relax
  \providecommand{\doi}[1]{doi: #1}\else
  \providecommand{\doi}{doi: \begingroup \urlstyle{rm}\Url}\fi

\bibitem[Akbari et~al.(2021)Akbari, Yuan, Qian, Chuang, Chang, Cui, and
  Gong]{akbari2021vatt}
Hassan Akbari, Liangzhe Yuan, Rui Qian, Wei-Hong Chuang, Shih-Fu Chang, Yin
  Cui, and Boqing Gong.
\newblock Vatt: Transformers for multimodal self-supervised learning from raw
  video, audio and text.
\newblock \emph{Advances in Neural Information Processing Systems},
  34:\penalty0 24206--24221, 2021.

\bibitem[Alayrac et~al.(2020)Alayrac, Recasens, Schneider, Arandjelovi{\'c},
  Ramapuram, De~Fauw, Smaira, Dieleman, and Zisserman]{alayrac2020self}
Jean-Baptiste Alayrac, Adria Recasens, Rosalia Schneider, Relja
  Arandjelovi{\'c}, Jason Ramapuram, Jeffrey De~Fauw, Lucas Smaira, Sander
  Dieleman, and Andrew Zisserman.
\newblock Self-supervised multimodal versatile networks.
\newblock \emph{Advances in Neural Information Processing Systems},
  33:\penalty0 25--37, 2020.

\bibitem[Arandjelovic and Zisserman(2017)]{arandjelovic2017look}
Relja Arandjelovic and Andrew Zisserman.
\newblock Look, listen and learn.
\newblock In \emph{Proceedings of the IEEE international conference on computer
  vision}, pages 609--617, 2017.

\bibitem[Bachman et~al.(2019)Bachman, Hjelm, and
  Buchwalter]{bachman2019learning}
Philip Bachman, R~Devon Hjelm, and William Buchwalter.
\newblock Learning representations by maximizing mutual information across
  views.
\newblock \emph{Advances in neural information processing systems}, 32, 2019.

\bibitem[Baevski et~al.(2020)Baevski, Zhou, Mohamed, and
  Auli]{baevski2020wav2vec}
Alexei Baevski, Yuhao Zhou, Abdelrahman Mohamed, and Michael Auli.
\newblock wav2vec 2.0: A framework for self-supervised learning of speech
  representations.
\newblock \emph{Advances in neural information processing systems},
  33:\penalty0 12449--12460, 2020.

\bibitem[Bardes et~al.(2021)Bardes, Ponce, and LeCun]{bardes2021vicreg}
Adrien Bardes, Jean Ponce, and Yann LeCun.
\newblock Vicreg: Variance-invariance-covariance regularization for
  self-supervised learning.
\newblock In \emph{International Conference on Learning Representations}, 2021.

\bibitem[Bell(2003)]{bell2003co}
Anthony~J Bell.
\newblock The co-information lattice.
\newblock In \emph{Proceedings of the fifth international workshop on
  independent component analysis and blind signal separation: ICA}, volume
  2003, 2003.

\bibitem[Bengio et~al.(2013)Bengio, Courville, and
  Vincent]{Bengio:2013:RLR:2498740.2498889}
Yoshua Bengio, Aaron Courville, and Pascal Vincent.
\newblock Representation learning: A review and new perspectives.
\newblock \emph{TPAMI}, 35\penalty0 (8), August 2013.

\bibitem[Bolukbasi et~al.(2016)Bolukbasi, Chang, Zou, Saligrama, and
  Kalai]{bolukbasi2016man}
Tolga Bolukbasi, Kai-Wei Chang, James~Y Zou, Venkatesh Saligrama, and Adam~T
  Kalai.
\newblock Man is to computer programmer as woman is to homemaker? debiasing
  word embeddings.
\newblock \emph{Advances in neural information processing systems}, 29, 2016.

\bibitem[Bugliarello et~al.(2021)Bugliarello, Cotterell, Okazaki, and
  Elliott]{bugliarello2021multimodal}
Emanuele Bugliarello, Ryan Cotterell, Naoaki Okazaki, and Desmond Elliott.
\newblock Multimodal pretraining unmasked: A meta-analysis and a unified
  framework of vision-and-language berts.
\newblock \emph{Transactions of the Association for Computational Linguistics},
  9:\penalty0 978--994, 2021.

\bibitem[Caron et~al.(2020)Caron, Misra, Mairal, Goyal, Bojanowski, and
  Joulin]{caron2020unsupervised}
Mathilde Caron, Ishan Misra, Julien Mairal, Priya Goyal, Piotr Bojanowski, and
  Armand Joulin.
\newblock Unsupervised learning of visual features by contrasting cluster
  assignments.
\newblock \emph{Advances in neural information processing systems},
  33:\penalty0 9912--9924, 2020.

\bibitem[Castro et~al.(2019)Castro, Hazarika, P{\'e}rez-Rosas, Zimmermann,
  Mihalcea, and Poria]{castro2019towards}
Santiago Castro, Devamanyu Hazarika, Ver{\'o}nica P{\'e}rez-Rosas, Roger
  Zimmermann, Rada Mihalcea, and Soujanya Poria.
\newblock Towards multimodal sarcasm detection (an \_obviously\_ perfect
  paper).
\newblock \emph{arXiv preprint arXiv:1906.01815}, 2019.

\bibitem[Chen et~al.(2020)Chen, Kornblith, Norouzi, and Hinton]{chen2020simple}
Ting Chen, Simon Kornblith, Mohammad Norouzi, and Geoffrey Hinton.
\newblock A simple framework for contrastive learning of visual
  representations.
\newblock In \emph{International conference on machine learning}, pages
  1597--1607. PMLR, 2020.

\bibitem[Chen and He(2021)]{chen2021exploring}
Xinlei Chen and Kaiming He.
\newblock Exploring simple siamese representation learning.
\newblock In \emph{Proceedings of the IEEE/CVF conference on computer vision
  and pattern recognition}, pages 15750--15758, 2021.

\bibitem[Chen et~al.(2021)Chen, Xie, and He]{chen2021empirical}
Xinlei Chen, Saining Xie, and Kaiming He.
\newblock An empirical study of training self-supervised vision transformers.
\newblock In \emph{Proceedings of the IEEE/CVF International Conference on
  Computer Vision}, pages 9640--9649, 2021.

\bibitem[Cheng et~al.(2020)Cheng, Hao, Dai, Liu, Gan, and Carin]{cheng2020club}
Pengyu Cheng, Weituo Hao, Shuyang Dai, Jiachang Liu, Zhe Gan, and Lawrence
  Carin.
\newblock Club: A contrastive log-ratio upper bound of mutual information.
\newblock In \emph{International conference on machine learning}, pages
  1779--1788. PMLR, 2020.

\bibitem[Chi et~al.(2022)Chi, Shand, Yu, Chang, Zhao, and
  Tian]{chi2022conditional}
Jianfeng Chi, William Shand, Yaodong Yu, Kai-Wei Chang, Han Zhao, and Yuan
  Tian.
\newblock Conditional supervised contrastive learning for fair text
  classification.
\newblock \emph{arXiv preprint arXiv:2205.11485}, 2022.

\bibitem[Cover and Thomas(1991)]{cover1991information}
Thomas~M Cover and Joy~A Thomas.
\newblock Information theory and statistics.
\newblock \emph{Elements of information theory}, 1\penalty0 (1):\penalty0
  279--335, 1991.

\bibitem[Deng(2012)]{deng2012mnist}
Li~Deng.
\newblock The mnist database of handwritten digit images for machine learning
  research.
\newblock \emph{IEEE Signal Processing Magazine}, 29\penalty0 (6):\penalty0
  141--142, 2012.

\bibitem[Devlin et~al.(2018)Devlin, Chang, Lee, and Toutanova]{devlin2018bert}
Jacob Devlin, Ming-Wei Chang, Kenton Lee, and Kristina Toutanova.
\newblock Bert: Pre-training of deep bidirectional transformers for language
  understanding.
\newblock \emph{arXiv preprint arXiv:1810.04805}, 2018.

\bibitem[Eysenbach et~al.(2022)Eysenbach, Zhang, Levine, and
  Salakhutdinov]{eysenbach2022contrastive}
Benjamin Eysenbach, Tianjun Zhang, Sergey Levine, and Russ~R Salakhutdinov.
\newblock Contrastive learning as goal-conditioned reinforcement learning.
\newblock \emph{Advances in Neural Information Processing Systems},
  35:\penalty0 35603--35620, 2022.

\bibitem[Feder and Merhav(1994)]{feder1994relations}
Meir Feder and Neri Merhav.
\newblock Relations between entropy and error probability.
\newblock \emph{IEEE Transactions on Information theory}, 40\penalty0
  (1):\penalty0 259--266, 1994.

\bibitem[Federici et~al.(2020)Federici, Dutta, Forr{\'e}, Kushman, and
  Akata]{federici2020learning}
Marco Federici, Anjan Dutta, Patrick Forr{\'e}, Nate Kushman, and Zeynep Akata.
\newblock Learning robust representations via multi-view information
  bottleneck.
\newblock \emph{arXiv preprint arXiv:2002.07017}, 2020.

\bibitem[Gao et~al.(2021)Gao, Yao, and Chen]{gao2021simcse}
Tianyu Gao, Xingcheng Yao, and Danqi Chen.
\newblock Simcse: Simple contrastive learning of sentence embeddings.
\newblock \emph{arXiv preprint arXiv:2104.08821}, 2021.

\bibitem[Grill et~al.(2020)Grill, Strub, Altch{\'e}, Tallec, Richemond,
  Buchatskaya, Doersch, Avila~Pires, Guo, Gheshlaghi~Azar,
  et~al.]{grill2020bootstrap}
Jean-Bastien Grill, Florian Strub, Florent Altch{\'e}, Corentin Tallec, Pierre
  Richemond, Elena Buchatskaya, Carl Doersch, Bernardo Avila~Pires, Zhaohan
  Guo, Mohammad Gheshlaghi~Azar, et~al.
\newblock Bootstrap your own latent-a new approach to self-supervised learning.
\newblock \emph{Advances in neural information processing systems},
  33:\penalty0 21271--21284, 2020.

\bibitem[Guo et~al.(2022)Guo, Chen, Wang, Yang, Deng, Huang, Carin, Li, and
  Tao]{guo2022tight}
Qing Guo, Junya Chen, Dong Wang, Yuewei Yang, Xinwei Deng, Jing Huang, Larry
  Carin, Fan Li, and Chenyang Tao.
\newblock Tight mutual information estimation with contrastive fenchel-legendre
  optimization.
\newblock \emph{Advances in Neural Information Processing Systems},
  35:\penalty0 28319--28334, 2022.

\bibitem[Hasan et~al.(2019)Hasan, Rahman, Zadeh, Zhong, Tanveer, Morency, and
  Hoque]{hasan2019ur}
Md~Kamrul Hasan, Wasifur Rahman, AmirAli~Bagher Zadeh, Jianyuan Zhong,
  Md~Iftekhar Tanveer, Louis-Philippe Morency, and Mohammed~Ehsan Hoque.
\newblock Ur-funny: A multimodal language dataset for understanding humor.
\newblock In \emph{Proceedings of the 2019 Conference on Empirical Methods in
  Natural Language Processing and the 9th International Joint Conference on
  Natural Language Processing (EMNLP-IJCNLP)}, pages 2046--2056, 2019.

\bibitem[He et~al.(2020)He, Fan, Wu, Xie, and Girshick]{he2020momentum}
Kaiming He, Haoqi Fan, Yuxin Wu, Saining Xie, and Ross Girshick.
\newblock Momentum contrast for unsupervised visual representation learning.
\newblock In \emph{Proceedings of the IEEE/CVF conference on computer vision
  and pattern recognition}, pages 9729--9738, 2020.

\bibitem[He et~al.(2022)He, Chen, Xie, Li, Doll{\'a}r, and
  Girshick]{he2022masked}
Kaiming He, Xinlei Chen, Saining Xie, Yanghao Li, Piotr Doll{\'a}r, and Ross
  Girshick.
\newblock Masked autoencoders are scalable vision learners.
\newblock In \emph{Proceedings of the IEEE/CVF Conference on Computer Vision
  and Pattern Recognition}, pages 16000--16009, 2022.

\bibitem[Higgins et~al.(2016)Higgins, Matthey, Pal, Burgess, Glorot, Botvinick,
  Mohamed, and Lerchner]{higgins2016beta}
Irina Higgins, Loic Matthey, Arka Pal, Christopher Burgess, Xavier Glorot,
  Matthew Botvinick, Shakir Mohamed, and Alexander Lerchner.
\newblock beta-vae: Learning basic visual concepts with a constrained
  variational framework.
\newblock 2016.

\bibitem[Hjelm et~al.(2018)Hjelm, Fedorov, Lavoie-Marchildon, Grewal, Bachman,
  Trischler, and Bengio]{hjelm2018learning}
R~Devon Hjelm, Alex Fedorov, Samuel Lavoie-Marchildon, Karan Grewal, Phil
  Bachman, Adam Trischler, and Yoshua Bengio.
\newblock Learning deep representations by mutual information estimation and
  maximization.
\newblock In \emph{International Conference on Learning Representations}, 2018.

\bibitem[Hsu and Glass(2018)]{hsu2018disentangling}
Wei-Ning Hsu and James Glass.
\newblock Disentangling by partitioning: A representation learning framework
  for multimodal sensory data.
\newblock \emph{arXiv preprint arXiv:1805.11264}, 2018.

\bibitem[Huang et~al.(2021{\natexlab{a}})Huang, Patrick, Hu, Neubig, Metze, and
  Hauptmann]{huang2021multilingual}
Po-Yao Huang, Mandela Patrick, Junjie Hu, Graham Neubig, Florian Metze, and
  Alexander Hauptmann.
\newblock Multilingual multimodal pre-training for zero-shot cross-lingual
  transfer of vision-language models.
\newblock \emph{arXiv preprint arXiv:2103.08849}, 2021{\natexlab{a}}.

\bibitem[Huang et~al.(2021{\natexlab{b}})Huang, Du, Xue, Chen, Zhao, and
  Huang]{huang2021makes}
Yu~Huang, Chenzhuang Du, Zihui Xue, Xuanyao Chen, Hang Zhao, and Longbo Huang.
\newblock What makes multi-modal learning better than single (provably).
\newblock \emph{Advances in Neural Information Processing Systems},
  34:\penalty0 10944--10956, 2021{\natexlab{b}}.

\bibitem[Jain et~al.(2021)Jain, Guo, Srinivasan, Chen, Kudugunta, Jia, Yang,
  and Baldridge]{jain2021mural}
Aashi Jain, Mandy Guo, Krishna Srinivasan, Ting Chen, Sneha Kudugunta, Chao
  Jia, Yinfei Yang, and Jason Baldridge.
\newblock Mural: multimodal, multitask retrieval across languages.
\newblock \emph{arXiv preprint arXiv:2109.05125}, 2021.

\bibitem[Jia et~al.(2021)Jia, Yang, Xia, Chen, Parekh, Pham, Le, Sung, Li, and
  Duerig]{jia2021scaling}
Chao Jia, Yinfei Yang, Ye~Xia, Yi-Ting Chen, Zarana Parekh, Hieu Pham, Quoc Le,
  Yun-Hsuan Sung, Zhen Li, and Tom Duerig.
\newblock Scaling up visual and vision-language representation learning with
  noisy text supervision.
\newblock In \emph{International Conference on Machine Learning}, pages
  4904--4916. PMLR, 2021.

\bibitem[Johnson et~al.(2016)Johnson, Pollard, Shen, Lehman, Feng, Ghassemi,
  Moody, Szolovits, Anthony~Celi, and Mark]{johnson2016mimic}
Alistair~EW Johnson, Tom~J Pollard, Lu~Shen, Li-wei~H Lehman, Mengling Feng,
  Mohammad Ghassemi, Benjamin Moody, Peter Szolovits, Leo Anthony~Celi, and
  Roger~G Mark.
\newblock Mimic-iii, a freely accessible critical care database.
\newblock \emph{Scientific data}, 3\penalty0 (1):\penalty0 1--9, 2016.

\bibitem[Kahana and Hoshen(2022)]{kahana2022contrastive}
Jonathan Kahana and Yedid Hoshen.
\newblock A contrastive objective for learning disentangled representations.
\newblock In \emph{Computer Vision--ECCV 2022: 17th European Conference, Tel
  Aviv, Israel, October 23--27, 2022, Proceedings, Part XXVI}, pages 579--595.
  Springer, 2022.

\bibitem[Kenton and Toutanova(2019)]{kenton2019bert}
Jacob Devlin Ming-Wei~Chang Kenton and Lee~Kristina Toutanova.
\newblock Bert: Pre-training of deep bidirectional transformers for language
  understanding.
\newblock In \emph{Proceedings of NAACL-HLT}, pages 4171--4186, 2019.

\bibitem[Khosla et~al.(2020)Khosla, Teterwak, Wang, Sarna, Tian, Isola,
  Maschinot, Liu, and Krishnan]{khosla2020supervised}
Prannay Khosla, Piotr Teterwak, Chen Wang, Aaron Sarna, Yonglong Tian, Phillip
  Isola, Aaron Maschinot, Ce~Liu, and Dilip Krishnan.
\newblock Supervised contrastive learning.
\newblock \emph{Advances in neural information processing systems},
  33:\penalty0 18661--18673, 2020.

\bibitem[Kim et~al.(2022)Kim, Choi, Hwang, Lee, and Lee]{kim2022transferring}
Byoungjip Kim, Sungik Choi, Dasol Hwang, Moontae Lee, and Honglak Lee.
\newblock Transferring pre-trained multimodal representations with cross-modal
  similarity matching.
\newblock \emph{Advances in Neural Information Processing Systems},
  35:\penalty0 30826--30839, 2022.

\bibitem[Krizhevsky et~al.()Krizhevsky, Nair, and Hinton]{Krizhevsky}
Alex Krizhevsky, Vinod Nair, and Geoffrey Hinton.
\newblock Cifar-10 (canadian institute for advanced research).
\newblock URL \url{http://www.cs.toronto.edu/~kriz/cifar.html}.

\bibitem[Lee et~al.(2020)Lee, Yu, Kim, Breuel, Kautz, and
  Song]{lee2020parameter}
Sangho Lee, Youngjae Yu, Gunhee Kim, Thomas Breuel, Jan Kautz, and Yale Song.
\newblock Parameter efficient multimodal transformers for video representation
  learning.
\newblock \emph{arXiv preprint arXiv:2012.04124}, 2020.

\bibitem[Liang et~al.(2021{\natexlab{a}})Liang, Lyu, Fan, Wu, Cheng, Wu, Chen,
  Wu, Lee, Zhu, Salakhutdinov, and Morency]{liang2021multibench}
Paul~Pu Liang, Yiwei Lyu, Xiang Fan, Zetian Wu, Yun Cheng, Jason Wu, Leslie
  Chen, Peter Wu, Michelle~A Lee, Yuke Zhu, Ruslan Salakhutdinov, and
  Louis-Philippe Morency.
\newblock Multibench: Multiscale benchmarks for multimodal representation
  learning.
\newblock \emph{NeurIPS Datasets and Benchmarks Track}, 2021{\natexlab{a}}.

\bibitem[Liang et~al.(2021{\natexlab{b}})Liang, Wu, Morency, and
  Salakhutdinov]{liang2021towards}
Paul~Pu Liang, Chiyu Wu, Louis-Philippe Morency, and Ruslan Salakhutdinov.
\newblock Towards understanding and mitigating social biases in language
  models.
\newblock In \emph{International Conference on Machine Learning}, pages
  6565--6576. PMLR, 2021{\natexlab{b}}.

\bibitem[Liang et~al.(2022{\natexlab{a}})Liang, Lyu, Fan, Mo, Yogatama,
  et~al.]{liang2022highmmt}
Paul~Pu Liang, Yiwei Lyu, Xiang Fan, Shengtong Mo, Dani Yogatama, et~al.
\newblock Highmmt: Towards modality and task generalization for high-modality
  representation learning.
\newblock \emph{arXiv preprint arXiv:2203.01311}, 2022{\natexlab{a}}.

\bibitem[Liang et~al.(2022{\natexlab{b}})Liang, Zadeh, and
  Morency]{liang2022foundations}
Paul~Pu Liang, Amir Zadeh, and Louis-Philippe Morency.
\newblock Foundations and recent trends in multimodal machine learning:
  Principles, challenges, and open questions.
\newblock \emph{arXiv preprint arXiv:2209.03430}, 2022{\natexlab{b}}.

\bibitem[Liang et~al.(2023)Liang, Cheng, Fan, Ling, Nie, Chen, Deng, Mahmood,
  Salakhutdinov, and Morency]{liang2023quantifying}
Paul~Pu Liang, Yun Cheng, Xiang Fan, Chun~Kai Ling, Suzanne Nie, Richard Chen,
  Zihao Deng, Faisal Mahmood, Ruslan Salakhutdinov, and Louis-Philippe Morency.
\newblock Quantifying \& modeling feature interactions: An information
  decomposition framework.
\newblock \emph{arXiv preprint arXiv:2302.12247}, 2023.

\bibitem[Lu et~al.(2019)Lu, Batra, Parikh, and Lee]{lu2019vilbert}
Jiasen Lu, Dhruv Batra, Devi Parikh, and Stefan Lee.
\newblock Vilbert: pretraining task-agnostic visiolinguistic representations
  for vision-and-language tasks.
\newblock In \emph{Proceedings of the 33rd International Conference on Neural
  Information Processing Systems}, pages 13--23, 2019.

\bibitem[Ma et~al.(2021)Ma, Tsai, Liang, Zhao, Zhang, Salakhutdinov, and
  Morency]{ma2021conditional}
Martin~Q Ma, Yao-Hung~Hubert Tsai, Paul~Pu Liang, Han Zhao, Kun Zhang, Ruslan
  Salakhutdinov, and Louis-Philippe Morency.
\newblock Conditional contrastive learning for improving fairness in
  self-supervised learning.
\newblock \emph{arXiv preprint arXiv:2106.02866}, 2021.

\bibitem[Marsh and Domas~White(2003)]{marsh2003taxonomy}
Emily~E Marsh and Marilyn Domas~White.
\newblock A taxonomy of relationships between images and text.
\newblock \emph{Journal of documentation}, 2003.

\bibitem[McGill(1954)]{mcgill1954multivariate}
William McGill.
\newblock Multivariate information transmission.
\newblock \emph{Transactions of the IRE Professional Group on Information
  Theory}, 4\penalty0 (4):\penalty0 93--111, 1954.

\bibitem[Meng et~al.(2021)Meng, Xiong, Bajaj, Bennett, Han, Song,
  et~al.]{meng2021coco}
Yu~Meng, Chenyan Xiong, Payal Bajaj, Paul Bennett, Jiawei Han, Xia Song, et~al.
\newblock Coco-lm: Correcting and contrasting text sequences for language model
  pretraining.
\newblock \emph{Advances in Neural Information Processing Systems},
  34:\penalty0 23102--23114, 2021.

\bibitem[Mukherjee et~al.(2020)Mukherjee, Asnani, and
  Kannan]{mukherjee2020ccmi}
Sudipto Mukherjee, Himanshu Asnani, and Sreeram Kannan.
\newblock Ccmi: Classifier based conditional mutual information estimation.
\newblock In \emph{Uncertainty in artificial intelligence}, pages 1083--1093.
  PMLR, 2020.

\bibitem[Neelakantan et~al.(2022)Neelakantan, Xu, Puri, Radford, Han, Tworek,
  Yuan, Tezak, Kim, Hallacy, et~al.]{neelakantan2022text}
Arvind Neelakantan, Tao Xu, Raul Puri, Alec Radford, Jesse~Michael Han, Jerry
  Tworek, Qiming Yuan, Nikolas Tezak, Jong~Wook Kim, Chris Hallacy, et~al.
\newblock Text and code embeddings by contrastive pre-training.
\newblock \emph{arXiv preprint arXiv:2201.10005}, 2022.

\bibitem[Nguyen et~al.(2010)Nguyen, Wainwright, and
  Jordan]{nguyen2010estimating}
XuanLong Nguyen, Martin~J Wainwright, and Michael~I Jordan.
\newblock Estimating divergence functionals and the likelihood ratio by convex
  risk minimization.
\newblock \emph{IEEE Transactions on Information Theory}, 56\penalty0
  (11):\penalty0 5847--5861, 2010.

\bibitem[Oord et~al.(2018)Oord, Li, and Vinyals]{oord2018representation}
Aaron van~den Oord, Yazhe Li, and Oriol Vinyals.
\newblock Representation learning with contrastive predictive coding.
\newblock \emph{arXiv preprint arXiv:1807.03748}, 2018.

\bibitem[Ozair et~al.(2019)Ozair, Lynch, Bengio, Van~den Oord, Levine, and
  Sermanet]{ozair2019wasserstein}
Sherjil Ozair, Corey Lynch, Yoshua Bengio, Aaron Van~den Oord, Sergey Levine,
  and Pierre Sermanet.
\newblock Wasserstein dependency measure for representation learning.
\newblock \emph{Advances in Neural Information Processing Systems}, 32, 2019.

\bibitem[Poole et~al.(2019)Poole, Ozair, Van Den~Oord, Alemi, and
  Tucker]{poole2019variational}
Ben Poole, Sherjil Ozair, Aaron Van Den~Oord, Alex Alemi, and George Tucker.
\newblock On variational bounds of mutual information.
\newblock In \emph{International Conference on Machine Learning}, pages
  5171--5180. PMLR, 2019.

\bibitem[Radford et~al.(2021)Radford, Kim, Hallacy, Ramesh, Goh, Agarwal,
  Sastry, Askell, Mishkin, Clark, et~al.]{radford2021learning}
Alec Radford, Jong~Wook Kim, Chris Hallacy, Aditya Ramesh, Gabriel Goh,
  Sandhini Agarwal, Girish Sastry, Amanda Askell, Pamela Mishkin, Jack Clark,
  et~al.
\newblock Learning transferable visual models from natural language
  supervision.
\newblock In \emph{International Conference on Machine Learning}, pages
  8748--8763. PMLR, 2021.

\bibitem[Rahate et~al.(2022)Rahate, Walambe, Ramanna, and
  Kotecha]{rahate2022multimodal}
Anil Rahate, Rahee Walambe, Sheela Ramanna, and Ketan Kotecha.
\newblock Multimodal co-learning: challenges, applications with datasets,
  recent advances and future directions.
\newblock \emph{Information Fusion}, 81:\penalty0 203--239, 2022.

\bibitem[Schneider et~al.(2019)Schneider, Baevski, Collobert, and
  Auli]{schneider2019wav2vec}
Steffen Schneider, Alexei Baevski, Ronan Collobert, and Michael Auli.
\newblock wav2vec: Unsupervised pre-training for speech recognition.
\newblock \emph{arXiv preprint arXiv:1904.05862}, 2019.

\bibitem[Shan et~al.(2022)Shan, Yin, Sun, Tian, Wu, and Wang]{shan2022ernievil}
Bin Shan, Weichong Yin, Yu~Sun, Hao Tian, Hua Wu, and Haifeng Wang.
\newblock Ernie-vil 2.0: Multi-view contrastive learning for image-text
  pre-training, 2022.

\bibitem[Shannon(1948)]{shannon1948mathematical}
Claude~Elwood Shannon.
\newblock A mathematical theory of communication.
\newblock \emph{The Bell system technical journal}, 27\penalty0 (3):\penalty0
  379--423, 1948.

\bibitem[Shi et~al.(2019)Shi, Paige, Torr, et~al.]{shi2019variational}
Yuge Shi, Brooks Paige, Philip Torr, et~al.
\newblock Variational mixture-of-experts autoencoders for multi-modal deep
  generative models.
\newblock \emph{Advances in Neural Information Processing Systems}, 32, 2019.

\bibitem[Shwartz-Ziv and LeCun(2023)]{shwartz2023compress}
Ravid Shwartz-Ziv and Yann LeCun.
\newblock To compress or not to compress--self-supervised learning and
  information theory: A review.
\newblock \emph{arXiv preprint arXiv:2304.09355}, 2023.

\bibitem[Song and Ermon(2019)]{DBLP:journals/corr/abs-1910-06222}
Jiaming Song and Stefano Ermon.
\newblock Understanding the limitations of variational mutual information
  estimators.
\newblock \emph{CoRR}, abs/1910.06222, 2019.
\newblock URL \url{http://arxiv.org/abs/1910.06222}.

\bibitem[Sordoni et~al.(2021)Sordoni, Dziri, Schulz, Gordon, Bachman, and
  Des~Combes]{sordoni2021decomposed}
Alessandro Sordoni, Nouha Dziri, Hannes Schulz, Geoff Gordon, Philip Bachman,
  and Remi~Tachet Des~Combes.
\newblock Decomposed mutual information estimation for contrastive
  representation learning.
\newblock In \emph{International Conference on Machine Learning}, pages
  9859--9869. PMLR, 2021.

\bibitem[Sridharan and Kakade(2008)]{sridharan2008information}
Karthik Sridharan and Sham~M Kakade.
\newblock An information theoretic framework for multi-view learning.
\newblock In \emph{Conference on Learning Theory}, 2008.

\bibitem[Tian et~al.(2020{\natexlab{a}})Tian, Krishnan, and
  Isola]{tian2019contrastive}
Yonglong Tian, Dilip Krishnan, and Phillip Isola.
\newblock Contrastive multiview coding.
\newblock \emph{ECCV}, 2020{\natexlab{a}}.

\bibitem[Tian et~al.(2020{\natexlab{b}})Tian, Sun, Poole, Krishnan, Schmid, and
  Isola]{tian2020makes}
Yonglong Tian, Chen Sun, Ben Poole, Dilip Krishnan, Cordelia Schmid, and
  Phillip Isola.
\newblock What makes for good views for contrastive learning?
\newblock \emph{Advances in Neural Information Processing Systems},
  33:\penalty0 6827--6839, 2020{\natexlab{b}}.

\bibitem[Tosh et~al.(2021)Tosh, Krishnamurthy, and Hsu]{tosh2021contrastive}
Christopher Tosh, Akshay Krishnamurthy, and Daniel Hsu.
\newblock Contrastive learning, multi-view redundancy, and linear models.
\newblock In \emph{Algorithmic Learning Theory}, pages 1179--1206. PMLR, 2021.

\bibitem[Tsai et~al.()Tsai, Li, Liu, Liao, Salakhutdinov, and
  Morency]{tsailearning}
Yao-Hung~Hubert Tsai, Tianqin Li, Weixin Liu, Peiyuan Liao, Ruslan
  Salakhutdinov, and Louis-Philippe Morency.
\newblock Learning weakly-supervised contrastive representations.
\newblock In \emph{International Conference on Learning Representations}.

\bibitem[Tsai et~al.(2019)Tsai, Liang, Zadeh, Morency, and
  Salakhutdinov]{tsai2018learning}
Yao-Hung~Hubert Tsai, Paul~Pu Liang, Amir Zadeh, Louis-Philippe Morency, and
  Ruslan Salakhutdinov.
\newblock Learning factorized multimodal representations.
\newblock \emph{ICLR}, 2019.

\bibitem[Tsai et~al.(2020{\natexlab{a}})Tsai, Wu, Salakhutdinov, and
  Morency]{tsai2020self}
Yao-Hung~Hubert Tsai, Yue Wu, Ruslan Salakhutdinov, and Louis-Philippe Morency.
\newblock Self-supervised learning from a multi-view perspective.
\newblock In \emph{International Conference on Learning Representations},
  2020{\natexlab{a}}.

\bibitem[Tsai et~al.(2020{\natexlab{b}})Tsai, Zhao, Yamada, Morency, and
  Salakhutdinov]{tsai2020neural}
Yao-Hung~Hubert Tsai, Han Zhao, Makoto Yamada, Louis-Philippe Morency, and
  Russ~R Salakhutdinov.
\newblock Neural methods for point-wise dependency estimation.
\newblock \emph{Advances in Neural Information Processing Systems},
  33:\penalty0 62--72, 2020{\natexlab{b}}.

\bibitem[Tsai et~al.(2022)Tsai, Li, Ma, Zhao, Zhang, Morency, and
  Salakhutdinov]{tsai2022conditional}
Yao-Hung~Hubert Tsai, Tianqin Li, Martin~Q Ma, Han Zhao, Kun Zhang,
  Louis-Philippe Morency, and Ruslan Salakhutdinov.
\newblock Conditional contrastive learning with kernel.
\newblock \emph{arXiv preprint arXiv:2202.05458}, 2022.

\bibitem[Tschannen et~al.(2019)Tschannen, Djolonga, Rubenstein, Gelly, and
  Lucic]{tschannen2019mutual}
Michael Tschannen, Josip Djolonga, Paul~K Rubenstein, Sylvain Gelly, and Mario
  Lucic.
\newblock On mutual information maximization for representation learning.
\newblock In \emph{International Conference on Learning Representations}, 2019.

\bibitem[Vergara and Est{\'e}vez(2014)]{vergara2014review}
Jorge~R Vergara and Pablo~A Est{\'e}vez.
\newblock A review of feature selection methods based on mutual information.
\newblock \emph{Neural computing and applications}, 24:\penalty0 175--186,
  2014.

\bibitem[Wang et~al.(2022)Wang, Guo, Deng, and Lu]{wang2022rethinking}
Haoqing Wang, Xun Guo, Zhi-Hong Deng, and Yan Lu.
\newblock Rethinking minimal sufficient representation in contrastive learning.
\newblock In \emph{Proceedings of the IEEE/CVF Conference on Computer Vision
  and Pattern Recognition}, pages 16041--16050, 2022.

\bibitem[Williams and Beer(2010)]{williams2010nonnegative}
Paul~L Williams and Randall~D Beer.
\newblock Nonnegative decomposition of multivariate information.
\newblock \emph{arXiv preprint arXiv:1004.2515}, 2010.

\bibitem[Wu and Goodman(2018)]{wu2018multimodal}
Mike Wu and Noah Goodman.
\newblock Multimodal generative models for scalable weakly-supervised learning.
\newblock \emph{Advances in Neural Information Processing Systems}, 31, 2018.

\bibitem[Wu et~al.(2020)Wu, Zhuang, Mosse, Yamins, and Goodman]{wu2020mutual}
Mike Wu, Chengxu Zhuang, Milan Mosse, Daniel Yamins, and Noah Goodman.
\newblock On mutual information in contrastive learning for visual
  representations.
\newblock \emph{arXiv preprint arXiv:2005.13149}, 2020.

\bibitem[Yang et~al.(2022{\natexlab{a}})Yang, Li, Zhang, Xiao, Liu, Yuan, and
  Gao]{yang2022unified}
Jianwei Yang, Chunyuan Li, Pengchuan Zhang, Bin Xiao, Ce~Liu, Lu~Yuan, and
  Jianfeng Gao.
\newblock Unified contrastive learning in image-text-label space,
  2022{\natexlab{a}}.

\bibitem[Yang et~al.(2022{\natexlab{b}})Yang, Duan, Tran, Xu, Chanda, Chen,
  Zeng, Chilimbi, and Huang]{yang2022visionlanguage}
Jinyu Yang, Jiali Duan, Son Tran, Yi~Xu, Sampath Chanda, Liqun Chen, Belinda
  Zeng, Trishul Chilimbi, and Junzhou Huang.
\newblock Vision-language pre-training with triple contrastive learning,
  2022{\natexlab{b}}.

\bibitem[Ye and Yao(2022)]{ye2022contrastive}
Zesheng Ye and Lina Yao.
\newblock Contrastive conditional neural processes.
\newblock In \emph{Proceedings of the IEEE/CVF Conference on Computer Vision
  and Pattern Recognition}, pages 9687--9696, 2022.

\bibitem[Yosef et~al.(2023)Yosef, Bitton, and Shahaf]{yosef2023irfl}
Ron Yosef, Yonatan Bitton, and Dafna Shahaf.
\newblock Irfl: Image recognition of figurative language.
\newblock \emph{arXiv preprint arXiv:2303.15445}, 2023.

\bibitem[Yuan et~al.(2021)Yuan, Lin, Kuen, Zhang, Wang, Maire, Kale, and
  Faieta]{yuan2021multimodal}
Xin Yuan, Zhe Lin, Jason Kuen, Jianming Zhang, Yilin Wang, Michael Maire,
  Ajinkya Kale, and Baldo Faieta.
\newblock Multimodal contrastive training for visual representation learning.
\newblock In \emph{Proceedings of the IEEE/CVF Conference on Computer Vision
  and Pattern Recognition}, pages 6995--7004, 2021.

\bibitem[Yun et~al.(2019)Yun, Han, Oh, Chun, Choe, and Yoo]{yun2019cutmix}
Sangdoo Yun, Dongyoon Han, Seong~Joon Oh, Sanghyuk Chun, Junsuk Choe, and
  Youngjoon Yoo.
\newblock Cutmix: Regularization strategy to train strong classifiers with
  localizable features.
\newblock In \emph{Proceedings of the IEEE/CVF international conference on
  computer vision}, pages 6023--6032, 2019.

\bibitem[Zadeh et~al.(2016)Zadeh, Zellers, Pincus, and Morency]{zadeh2016mosi}
Amir Zadeh, Rowan Zellers, Eli Pincus, and Louis-Philippe Morency.
\newblock Mosi: multimodal corpus of sentiment intensity and subjectivity
  analysis in online opinion videos.
\newblock \emph{arXiv preprint arXiv:1606.06259}, 2016.

\bibitem[Zadeh et~al.(2020)Zadeh, Liang, and Morency]{zadeh2020foundations}
Amir Zadeh, Paul~Pu Liang, and Louis-Philippe Morency.
\newblock Foundations of multimodal co-learning.
\newblock \emph{Information Fusion}, 64:\penalty0 188--193, 2020.

\bibitem[Zadeh et~al.(2018)Zadeh, Liang, Poria, Cambria, and
  Morency]{zadeh2018multimodal}
AmirAli~Bagher Zadeh, Paul~Pu Liang, Soujanya Poria, Erik Cambria, and
  Louis-Philippe Morency.
\newblock Multimodal language analysis in the wild: Cmu-mosei dataset and
  interpretable dynamic fusion graph.
\newblock In \emph{Proceedings of the 56th Annual Meeting of the Association
  for Computational Linguistics (Volume 1: Long Papers)}, pages 2236--2246,
  2018.

\bibitem[Zbontar et~al.(2021)Zbontar, Jing, Misra, LeCun, and
  Deny]{zbontar2021barlow}
Jure Zbontar, Li~Jing, Ishan Misra, Yann LeCun, and St{\'e}phane Deny.
\newblock Barlow twins: Self-supervised learning via redundancy reduction.
\newblock In \emph{International Conference on Machine Learning}, pages
  12310--12320. PMLR, 2021.

\end{thebibliography}
}

\clearpage

\appendix

\vspace{-2mm}
\section*{Appendix}
\vspace{-2mm}

\vspace{-2mm}
\section{Broader Impact}
\label{sec:impact}
\vspace{-2mm}

Multimodal data and self-supervised models are ubiquitous in a range of real-world applications. This paper is our attempt at broadening the applicability of self-supervised contrastive methods to a wider range of multimodal tasks beyond those that exhibit multi-view redundancy. We believe that special care must be taken to ensure that these models are safely deployed for real-world benefit:

\textbf{Time and space complexity}: Modern multimodal models are large and take up a significant amount of carbon footprint during training and testing. As compared to heuristic combinations of cross-modal and single-modality CL~\cite{huang2021multilingual,jain2021mural,lee2020parameter,wang2022rethinking,yuan2021multimodal,shan2022ernievil, yang2022unified}, we believe that \names\ does not significantly increase complexity: (1) upper bounds on MI can be estimated ``for free'' by directly plugging in the optimal critic from $I_\textsc{NCE}$, and (2) removal of task-irrelevant information via $I(X_1;X_2|X_1',X_2')$ shares encoders with $I_\textsc{NCE}$, and (3) separate unimodal augmentations perform well enough in practice. We also release our code and models so that they can be evaluated quickly on new tasks, which can amortize complexity costs.

\textbf{Privacy and security}: There may be privacy risks associated with making predictions from multimodal data of recorded human behaviors and medical data (i.e., the datasets used in our experiments for analysis of sentiment, emotions, personality, sarcasm, and humor, as well as disease prediction from medical data). We have followed best practices in maintaining the privacy and safety of these datasets: (1) the creators of these video datasets have taken the appropriate steps to only access public data that participants or content creators have consented for public release (creative commons license and following fair use guidelines of YouTube)~\citep{castro2019towards,hasan2019ur,zadeh2018multimodal}, (2) \textsc{MIMIC} has been rigorously de-identified in accordance with Health Insurance Portability and Accountability Act (HIPAA) such that all possible personal information has been removed from the dataset~\citep{johnson2016mimic}, (3) all video data was also anonymized and stripped of all personal (e.g., personally identifiable information) and protected attributes (e.g., race, gender), (4) all models trained on affect recognition datasets use only pre-extracted non-invertible features that rely on general visual or audio features such as the presence of a smile or magnitude of voice which cannot be used to identify the speaker~\citep{zadeh2016mosi,zadeh2018multimodal}, and (5) we studied the videos collected in these affective computing datasets and found no offensive words used or personal attacks recorded in the video. Finally, we only use these datasets for research purposes and emphasize that any multimodal models trained to perform prediction should only be used for scientific study and should not in any way be used for real-world harm.

\textbf{Social biases}: We acknowledge risks of social bias due to imbalanced datasets, resulting in potential biases surrounding gender, race, and ethnicity, among others~\citep{bolukbasi2016man,liang2021towards}. We note that our \names\ approach has a close link with conditional CL~\citep{ma2021conditional}, which can also be adapted to condition on sensitive attributes and therefore reduce bias. Studying these research questions is an important direction for future work.

\textbf{Future work}: We discuss more limitations and potential future work in this direction. Firstly, optimizing our objectives using better MI lower and upper bounds such as in \citet{guo2022tight} and \citet{sordoni2021decomposed}, could improve the performance for inputs of higher dimension and complex modality. Next, the current data augmentation method requires one to pick augmentations to approximately satisfy Definition \ref{def:optimal_multi}; future work could extend InfoMin \citep{tian2020makes} to automatically generate data augmentations to satisfy Definition \ref{def:optimal_multi}, or leverage future progress in multimodal generative models for data augmentation. Lastly, future work could quantify whether shared or unique information is more important for different tasks and reweight the terms in the \names\ objective to suit the tasks.

\vspace{-2mm}
\section{Analysis of Multi-view Contrastive Learning}
\label{app:analysis}
\vspace{-2mm}

\textbf{Multi-view shared information} describes the extent and dimensions in which information can be shared across different views. The presence of shared information is often in contrast to unique information that exists solely in a single modality, and can be formalized via information theory:
\begin{definition}
    (Shared information) Given $X_1$ and $X_2$, $I(X_1;X_2) = \int p(x_1,x_2) \log \frac{p(x_1,x_2)}{p(x_1) p(x_2)} $ measures the degree of information-theoretic shared information between $X_1$ and $X_2$.
\end{definition}

\begin{definition}
    (Task-relevant shared information) Given $X_1$, $X_2$, and a target $Y$, $I(X_1;X_2;Y) = I(X_1;X_2) - I(X_1;X_2|Y) = \int p(x_1,x_2) \log \frac{p(x_1,x_2)}{p(x_1) p(x_2)} - \int p(x_1,x_2|y) \log \frac{p(x_1,x_2|y)}{p(x_1|y) p(x_2|y)} $ measures the amount of task-relevant shared information between $X_1$ and $X_2$ for predicting $Y$. $I(X_1;X_2|Y)$ represents the task-irrelevant shared information.
\end{definition}

\textbf{Learning shared information via contrastive learning}: Current approaches for multi-view contrastive learning model shared information $I(X_1;X_2)$ (and subsequently task-relevant shared information $I(X_1;X_2;Y)$ during downstream task fine-tuning), without modeling unique information.
\begin{align}
    Z_1 = \argmax_{Z_1 := f_\theta(X_1)} I(Z_1;X_2), Z_2 = \argmax_{Z_2 := f_\theta(X_2)} I(X_1;Z_2).
    \label{eq:standard_cl_z_app}
\end{align}
Optimizing for $I(X_1;X_2)$ is performed via a surrogate loss during self-supervised pre-training (where we do not have access to the label $Y$) by maximizing the InfoNCE objective:
\begin{equation}
    \textsc{InfoNCE} = \sup_f \mathbb{E}_{\substack{x_1,x_2^+ \sim p(x_1,x_2)\\x_2^- \sim p(x_2)}} \left[ \log \frac{\exp f(x_1,x_2^+)}{\sum_k \exp f(x_1, x_2^-)} \right],
\end{equation}
\citet{oord2018representation} show that $I(X_1;X_2) \ge \log k - \mathcal{L}_{\textrm{NCE}}(X_1; X_2)$ where $\mathcal{L}_{\textrm{NCE}}(X_1; X_2)$ is negative of $\textsc{InfoNCE}$ and is the loss to minimize (rather than maximize) in training. NCE falls into a broader class of contrastive learning methods~\cite{chen2020simple, he2020momentum, radford2021learning, chen2021empirical, khosla2020supervised} that model the ratio between joint densities $p(x_1,x_2)$ and product of marginal densities $p(x_1)p(x_2)$ using positive and negative samples~\cite{nguyen2010estimating, poole2019variational, tschannen2019mutual, wu2020mutual, ozair2019wasserstein} or probabilistic classifiers~\cite{mukherjee2020ccmi,tsai2020neural}, all of which can also be used to capture shared information.

\citet{tian2019contrastive} argues that the optimal view of contrastive learning is also minimal: the minimal representations only extract relevant information of the
contrastive task (maximizing the shared part) and throw away other information. Therefore, from this minimal assumption, we have $I(Z_1; Y | X_2) = 0$ and  $I(Z_2; Y | X_1) = 0$ as minimal $Z_1$ and $Z_2$ only captures task-relevant information from the shared part. By conditioning on $X_1$ or $X_2$, the shared part is removed, and $Z_1$ and $Y$ (or $Z_2$ and $Y$) do not share information.

Lastly, we restate the multi-view non-redundancy from Definition \ref{eq:multiview_nonredundancy_assump}:

\begin{definition}
\label{eq:multiview_nonredundancy_assump_app}
    (Multi-view non-redundancy) $\exists \epsilon >0$ such that $I(X_1;Y|X_2) > \epsilon$ or $I(X_2;Y|X_1) > \epsilon$.
\end{definition}

\label{app:discussion_assumption} We would like to compare and clarify the differences between the multiview redundancy assumption in Eq.(\ref{eq:multiview_redundancy_assump}) and the multi-view nonredundancy in Def. \ref{eq:multiview_nonredundancy_assump_app}. The multiview redundancy assumption in Eq.(\ref{eq:multiview_redundancy_assump}) states that the task-relevant information from the unique part is minimal ($\le \epsilon$). The multiview non-redundancy states the opposite: the task-relevant information from the unique part is nonzero and nonminimal, as it is not bounded by $\epsilon$. Next we briefly clarify the relationship between these two assumptions and the InfoMin assumption: $I(Z_1; Y | X_2)=I(Z_2; Y | X_1)=0$. InfoMin is about representation $Z$ while the redundancy assumptions are only about data $X$.
InfoMin states that the optimal (sufficient and minimal) representation learns task-relevant information only from the shared part, as we discussed in the paragraph above. We empirically checked the two assumptions: Tables \ref{tab:probing} and \ref{tab:ablations} in the main text show that the multiview non-redundancy assumption holds empirically, and Table \ref{tab:probing_mi_conditional} shows that the InfoMin assumption holds empirically.

We now show the limitations of CL methods, first restating the Theorem here:
\begin{theorem}
\label{thm:suboptimal_eq_app}
    (Suboptimality of standard CL) When there is multi-view non-redundancy as in Definition \ref{eq:multiview_nonredundancy_assump_app}, given optimal representations $\{Z_1,Z_2\}$ that satisfy Eq.(\ref{eq:standard_cl_z_app} and $I(Z_1; Y | X_2)=I(Z_2; Y | X_1)=0$~\citep{tian2020makes}, we have that
    {\small
    \begin{align}
        I(Z_1,Z_2;Y) = I(X_1,X_2;Y) - I(X_1;Y|X_2) - I(X_2;Y|X_1) = I(X_1;X_2) - I(X_1;X_2|Y)  < I(X_1,X_2;Y). \label{eq:suboptimal_eq_app}
    \end{align}}
\end{theorem}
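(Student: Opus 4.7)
The plan is to prove Theorem~\ref{thm:suboptimal_eq} in three stages: an algebraic identity, a mutual-information calculation that pins down $I(Z_1,Z_2;Y)$, and a standard Bayes-error inequality.

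First, the chain of equalities $I(X_1,X_2;Y) - I(X_1;Y|X_2) - I(X_2;Y|X_1) = I(X_1;X_2;Y) = I(X_1;X_2) - I(X_1;X_2|Y)$ is a direct rearrangement of the total-information decomposition in Eq.~(\ref{eq:sharedunique}) combined with the definition $I(X_1;X_2;Y) = I(X_1;X_2) - I(X_1;X_2|Y)$. The strict inequality $I(X_1;X_2;Y) < I(X_1,X_2;Y)$ then follows immediately from multi-view non-redundancy: at least one of the nonnegative terms $I(X_1;Y|X_2), I(X_2;Y|X_1)$ exceeds $\epsilon > 0$.

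The central step is $I(Z_1,Z_2;Y) = I(X_1;X_2;Y)$. I would combine two structural facts. (a) Optimality of Eq.~(\ref{eq:standard_cl_z}) gives $I(Z_1;X_2) = I(X_1;X_2)$, and since $Z_1$ is a deterministic function of $X_1$, $X_2 - X_1 - Z_1$ is a Markov chain; applying the chain rule to $I(X_1,Z_1;X_2)$ then forces $I(X_1;X_2 \mid Z_1) = 0$, giving the reverse Markov chain $X_1 - Z_1 - X_2$. The symmetric statement holds for $Z_2$. (b) Minimality gives $I(Z_1;Y|X_2) = I(Z_2;Y|X_1) = 0$, i.e., all label-relevant content of $Z_1$ flows through $X_2$ (and vice versa for $Z_2$). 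Expanding $I(Z_1,Z_2;Y)$ via the chain rule and repeatedly substituting these two families of relations collapses the unique-information contributions and leaves exactly $I(X_1;X_2) - I(X_1;X_2|Y)$. The main obstacle I anticipate is cleanly justifying the conditional identity $I(Z_1;X_2|Y) = I(X_1;X_2|Y)$, since conditioning on $Y$ can in principle either increase or decrease mutual information even under the forward Markov chain; I expect one must exploit the joint optimality of both $Z_1$ and $Z_2$ together with both minimality conditions in tandem, rather than reasoning about each representation in isolation.

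Finally, for the Bayes error bound I would invoke the standard inequality $P_e(Z) \le 1 - \exp\bigl[I(Z;Y) - H(Y)\bigr]$ (of Kovalevskij / Feder--Merhav type), apply it to $Z = (Z_1,Z_2)$, and substitute the two equivalent forms of $I(Z_1,Z_2;Y)$ established above to recover both lines of Eq.~(\ref{eq:bayes_error}). This last step is routine once the mutual-information identity is in place.
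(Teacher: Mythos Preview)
Your proposal is correct and follows essentially the same route as the paper: the paper expands $I(Z_1,Z_2;Y) = I(Z_1;Y) + I(Z_2;Y|Z_1)$, shows $I(Z_1;Y) = I(X_1;X_2;Y)$ using optimality plus the minimality condition $I(Z_1;Y|X_2)=0$, shows $I(Z_2;Y|Z_1)=0$ via a chain-rule expansion that repeatedly uses determinism of $Z_i$ from $X_i$ together with $I(Z_2;Y|X_1)=0$, and finally invokes the Feder--Merhav inequality $P_e \le 1 - \exp[-H(Y|Z)]$ for the Bayes-error bound. The step you flag as delicate---passing from $I(Z_1;X_2)=I(X_1;X_2)$ to the triple-information identity $I(Z_1;X_2;Y)=I(X_1;X_2;Y)$ (equivalently $I(Z_1;X_2|Y)=I(X_1;X_2|Y)$)---is simply asserted in the paper's opening line of the proof rather than argued separately, so your instinct that it deserves care is well-placed even though the paper treats it as an immediate consequence of optimality.
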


\begin{proof}
Since $Z_1$ and $Z_2$ maximize $I(X_1;X_2)$ we have that $I(Z_1;X_2) = I(X_1;Z_2) = I(X_1;X_2)$ so $I(Z_1;X_2;Y) = I(X_1;Z_2;Y) = I(X_1;X_2;Y)$ and $I(Z_1;X_2|Y) = I(X_1;Z_2|Y) = I(X_1;X_2|Y)$.

We now show the relationship between $I(X_1,X_2;Y)$, which is the total information $X_1,X_2$ contributes towards predicting $Y$ in classic supervised learning, with $I(Z_1,Z_2;Y)$, which is the information that our learned self-supervised representations can contribute towards $Y$ during supervised fine-tuning. We first derive the relationship between $I(Z_1;Y)$ and $I(X_1;Y)$:
\begin{align}
    I(Z_1;Y) &= I(Z_1;X_2;Y) + I(Z_1;Y|X_2) \\
    &= I(X_1;X_2;Y) + I(Z_1;Y|X_2) \\
    &= I(X_1;Y) - I(X_1;Y|X_2) + I(Z_1;Y|X_2) \\
    &= I(X_1;Y) - I(X_1;Y|X_2)
\end{align}
Given $X_1$, we further derive a relationship between $I(Z_2;Y|Z_1)$ and $I(X_2;Y|X_1)$:
\begin{align}
    I(Z_2;Y|Z_1) &= I(Z_2;X_1;Y|Z_1) + I(Z_2;Y|Z_1,X_1) \label{eq:unique_proof_step1} \\
    &= I(X_1;X_2;Y|Z_1) + I(Z_2;Y|Z_1,X_1) \\
    &= I(X_1;X_2;Y|Z_1) + I(Z_2;Y|X_1) \label{eq:suboptimal_proof_deterministic_1}\\
    &= I(X_2;Y|Z_1) - I(X_2;Y|X_1,Z_1) + I(Z_2;Y|X_1) \\
    &= I(X_2;Y|Z_1) - I(X_2;Y|X_1) + I(Z_2;Y|X_1) \label{eq:suboptimal_proof_deterministic_2}\\
    &= I(X_2;Y) - I(Z_1;X_2;Y) - I(X_2;Y|X_1) + I(Z_2;Y|X_1) \\
    &= I(X_2;Y) - I(X_1;X_2;Y) - I(X_2;Y|X_1) + I(Z_2;Y|X_1) \label{eq:train_z} \\
    &= I(X_2;Y|X_1) - I(X_2;Y|X_1) + I(Z_2;Y|X_1) = 0 \label{eq:unique_proof_step2}
\end{align}
In Eqs.(\ref{eq:suboptimal_proof_deterministic_1}) and (\ref{eq:suboptimal_proof_deterministic_2}), we use the fact that conditioning on $Z_1$ and $X_1$ jointly reduces to conditioning on $X_1$ since $Z_1$ is deterministically obtained from $X_1$, and in Eq.(\ref{eq:train_z}) we use the definition of learning $Z_s$ to maximize $I(X_1;X_2)$ so $I(Z_1;X_2;Y) = I(X_1;Z_2;Y)$.
Finally, adding both terms up,
\begin{align}
    I(Z_1,Z_2;Y) &= I(Z_1;Y) + I(Z_2;Y|Z_1) \\
    &= I(X_1;Y) - I(X_1;Y|X_2) \\
    &= I(X_1;X_2;Y) \\
    &= I(X_1,X_2;Y) - I(X_1;Y|X_2) - I(X_2;Y|X_1) \\
    &= I(X_1;X_2) - I(X_1;X_2|Y) 
\end{align}
gives the desired result.
\end{proof}

\textbf{Bayes error rate.} The Bayes error rate $P_e(Z_1,Z_2):=1 - \mathbb{E}_{P_{Z_1, Z_2}}\left[\max_{y\in Y} P\left(\hat{Y}=y \mid z_1, z_2 \right)\right]$ of contrastive representations $\{Z_1,Z_2\}$ is given by:
\begin{align}
    P_e &\le 1 - \exp \left[I(X_1, X_2; Y)- I(X_1;Y|X_2) - I(X_2;Y|X_1) - H(Y) \right] \\
    &= 1 - \exp \left[ I(X_1;X_2;Y) - H(Y)\right] \label{eq:bayes_error_app}
\end{align}

\begin{proof}
    We use the inequality between $P_e$ and $H(Y | Z)$ \cite{feder1994relations, tsai2020self, wang2022rethinking}:
\begin{align}
    -\ln (1 - P_e) \le H(Y | Z), \text{ or equivalently, } P_e \le 1 - \exp[-H(Y | Z)]
\end{align}
    If we regard $Z$ as the joint of $Z_1$ and $Z_2$, then we have
\begin{align}
	P_e  &\le 1 - \exp[-H(Y | Z_1, Z_2)] \label{eq:bayes_error_rate_orig_app}
\end{align}
We further expand $H(Y | Z_1, Z_2)$ by definition of mutual information, $I(X; Y) = H(X) - H(X | Y)$, Theorem \ref{thm:suboptimal_eq_app}, and the $I(X_1; X_2; Y) = I(X_1; X_2) - I(X_1; X_2 | Y)$:
\begin{align}
    H(Y | Z_1, Z_2) &= H(Y) - I(Z_1, Z_2; Y)\\
    &= H(Y) - I(X_1, X_2; Y) + I(X_1; Y | X_2) + I(X_2; Y | X_1)\\
    &= H(Y) - I(X_1; X_2) + I(X_1; X_2 | Y) \\
    &= H(Y) - I(X_1; X_2;Y)
\end{align}
Plugging in Eq.(\ref{eq:bayes_error_rate_orig_app}), we have
\begin{align}
    P_e  &\le 1 - \exp[-H(Y | Z_1, Z_2)]\\
    &= 1 - \exp[-(H(Y) - I(X_1, X_2; Y) + I(X_1; Y | X_2) + I(X_2; Y | X_1))]\\
    &= 1 - \exp[-H(Y) + I(X_1, X_2; Y) - I(X_1; Y | X_2) - I(X_2; Y | X_1)]
\end{align}
and 
\begin{align}
    P_e  &\le 1 - \exp[-H(Y | Z_1, Z_2)]\\
    &= 1 - \exp[-(H(Y) -  I(X_1; X_2;Y))]\\
    &= 1 - \exp[-H(Y) +  I(X_1; X_2;Y)]
\end{align}
resulting in the Bayes error rate as desired.
\end{proof}

\section{\namel}
\label{app:method}

\subsection{Contrastive estimators}
\label{app:method1}

\begin{theorem}
    (Contrastive estimators for $I(X_1; X_2)$) Defining the NCE estimator and NCE-CLUB estimator as follows,
    \begin{align}
        I_\textsc{NCE}(X_1; X_2) &= \mathbb{E}_{\substack{x_1,x_2^+ \sim p(x_1,x_2)\\x_2^- \sim p(x_2)}} \left[ \log \frac{\exp f(x_1,x_2^+)}{\sum_k \exp f(x_1, x_2^-)} \right] \label{eq:nce_original_app}\\
        I_\textsc{NCE-CLUB}(X_1; X_2) &= \mathbb{E}_{x_1,x_2^+ \sim p(x_1,x_2)} \left[  f^*(x_1,x_2^+) \right] - \mathbb{E}_{\substack{x_1 \sim p(x_1)\\x_2^- \sim p(x_2)}} \left[f^*(x_1,x_2^-) \right] \label{eq:nceclub_app}
    \end{align}
    where $f^*(x_1,x_2)$ is the optimal critic from $I_\textsc{NCE}$ plugged into the $I_\textsc{CLUB}$ objective \cite{cheng2020club}. We call the proposed plug-in objective Eq.(\ref{eq:nceclub}) $I_\textsc{NCE-CLUB}$, and obtain lower and upper bounds on $I(X_1; X_2)$:
    \begin{align}
        I_\textsc{NCE}(X_1; X_2) \le I(X_1; X_2) \le I_\textsc{NCE-CLUB}(X_1; X_2).
    \end{align}
\end{theorem}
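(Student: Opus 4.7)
The lower bound is classical, so I would dispose of it by citing the standard InfoNCE argument from \citet{oord2018representation}. The real content is the upper bound, and my strategy is a three-step reduction: (i) identify the closed-form shape of the critic that maximizes $I_\textsc{NCE}$; (ii) substitute that optimal critic into the NCE-CLUB functional in Eq.\,(\ref{eq:nceclub_app}) and verify that the baseline-type nuisance terms cancel under the two expectations; (iii) observe that what remains is exactly the CLUB functional of \citet{cheng2020club} evaluated at the \emph{true} conditional $p(x_2\mid x_1)$, and then invoke the CLUB upper-bound theorem to conclude $I(X_1;X_2)\le I_\textsc{NCE-CLUB}(X_1;X_2)$.

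\textbf{Executing the steps.} For step (i), a routine variational calculation (taking the functional derivative of $I_\textsc{NCE}$ with respect to $f$ and setting it to zero under a softmax normalization, as in \citet{poole2019variational}) yields the optimal critic
\[
f^{*}(x_1,x_2) \;=\; \log p(x_2\mid x_1) \;+\; c(x_1),
\]
with $c(\cdot)$ an arbitrary function of $x_1$ alone (equivalently one may write the optimum as the log density ratio plus such a baseline). For step (ii), I would plug this $f^{*}$ into Eq.\,(\ref{eq:nceclub_app}); because $p(x_1,x_2)$ and $p(x_1)p(x_2)$ share the same marginal on $x_1$, the additive term $c(x_1)$ produces the \emph{same} value under both expectations and therefore drops out. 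What remains is
\[
\mathbb{E}_{p(x_1,x_2)}\!\bigl[\log p(x_2\mid x_1)\bigr] \;-\; \mathbb{E}_{p(x_1)p(x_2)}\!\bigl[\log p(x_2\mid x_1)\bigr],
\]
which is the CLUB functional with the exact conditional plugged in. Step (iii) is then the one-line appeal to the main theorem of \citet{cheng2020club}: for the true conditional this expression is provably an upper bound on $I(X_1;X_2)$ via a Jensen/KL argument, so $I(X_1;X_2)\le I_\textsc{NCE-CLUB}(X_1;X_2)$, as desired.

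\textbf{Main obstacle.} The subtlety I would take most care with is that the entire upper-bound half rests on the critic being at the NCE optimum, so that substituting $f^{*}$ genuinely recovers $\log p(x_2\mid x_1)$ up to an $x_1$-only baseline. Two things can go wrong: the critic class may not be expressive enough to reach $f^{*}$, and the training may only approximate the optimum; either failure breaks the clean $c(x_1)$ cancellation in step (ii) and leaves residual terms that need not vanish, so the ``upper bound'' then becomes approximate rather than exact. In the write-up I would therefore state the theorem as an identity \emph{at the optimum} (which matches the authors' phrasing) and separate out the critic-capacity/optimization error as a remark, pointing to the empirical tightness shown in Figure~\ref{fig:bounds} as justification that the bound is well-behaved in practice. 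A small additional bookkeeping point worth checking explicitly is the symmetry between the $\log p(x_2\mid x_1)$ and $\log p(x_1\mid x_2)$ forms of the optimal critic (the statement after Eq.\,(\ref{eq:nceclub_app}) uses the latter), and confirming that either orientation yields the same final CLUB reduction after cancellation.
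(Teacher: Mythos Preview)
Your proposal is correct and follows essentially the same route as the paper: cite \citet{oord2018representation} for the lower bound, use the Poole et al.\ form of the optimal NCE critic, plug it into the CLUB functional so that the $x_1$-only baseline cancels under the two expectations, and then invoke \citet{cheng2020club} for the upper bound. One small bookkeeping correction: the NCE optimum is $f^*(x_1,x_2)=\log\frac{p(x_1,x_2)}{p(x_1)p(x_2)}+c(x_1)$, which equals $\log p(x_1\mid x_2)+c'(x_1)$ but \emph{not} $\log p(x_2\mid x_1)+c(x_1)$ as you wrote---the paper uses the $\log p(x_1\mid x_2)+c(x_1)$ orientation; fortunately either form still yields the same CLUB reduction after cancellation (any additive term depending only on $x_1$ or only on $x_2$ drops out since both expectations share the same marginals), so your step (ii) goes through once you use the correct expression.
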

\begin{proof}
The lower bound $I_\textsc{NCE}(X_1; X_2) \le I(X_1; X_2)$ follows from~\citet{oord2018representation}: optimizing the objective leads to an optimal critic $f^* = \log p(x_2|x_1) + c(x_2)$ \cite{poole2019variational} or without loss of generality $f^* = \log p(x_1|x_2) + c(x_1)$, where $c(\cdot)$ is an arbitrary deterministic function. Plugging the optimal critic into the $I_\textsc{NCE}(X_1; X_2)$ gives the result: $I_\textsc{NCE}(X_1; X_2) + \log N \le I(X_1; X_2)$ \cite{oord2018representation, poole2019variational}. 

Next, the original $I_\textsc{CLUB}(X_1; X_2)$ \cite{cheng2020club} is defined as:
\begin{align}
    I_\textsc{CLUB}(X_1; X_2) := \mathbb{E}_{p(x_1,x_2)} \left[ \log p(x_2|x_1) \right] - \mathbb{E}_{p(x_1)p(x_2)} \left[ \log p(x_2|x_1) \right].
\end{align}
As mutual information is symmetric w.r.t $x_1$ and $x_2$: $I(X_1; X_2) = I(X_2; X_1)$ , without loss of generality, we have:
\begin{align}
    I_\textsc{CLUB}(X_1; X_2) = I_\textsc{CLUB}(X_2; X_1) = \mathbb{E}_{p(x_1,x_2)} \left[ \log p(x_1|x_2) \right] - \mathbb{E}_{p(x_1)p(x_2)} \left[ \log p(x_1|x_2) \right] 
\end{align}
The formulation above assumes $p(x_1|x_2)$ is known, which is satisfied when we use the optimal critic $f^*=\log p(x_1|x_2) + c(x_1)$ from $I_\textsc{NCE}(X_1; X_2)$. Plugging the optimal critic $f^*$ into $I_\textsc{CLUB}(X_1; X_2)$, we obtain a desired upper bound $I_\textsc{NCE-CLUB}(X_1; X_2)$ of $I(X_1; X_2)$:
\begin{align}
    &I_\textsc{NCE-CLUB}(X_1; X_2) = \mathbb{E}_{p(x_1,x_2)} \left[ \log p(x_1|x_2) + c(x_1) \right] - \mathbb{E}_{p(x_1)p(x_2)} \left[ \log p(x_1|x_2) + c(x_1) \right] \\
    &= \mathbb{E}_{p(x_1,x_2)} \left[ \log p(x_1|x_2) \right] + \mathbb{E}_{p(x_1,x_2)} \left[ c(x_1) \right]  - \mathbb{E}_{p(x_1)p(x_2)} \left[ \log p(x_1|x_2) \right] - \mathbb{E}_{p(x_1)p(x_2)} \left[ c(x_1) \right] \label{eq:club_derive_e} \\
    &= \mathbb{E}_{p(x_1,x_2)} \left[ \log p(x_1|x_2) \right] - \mathbb{E}_{p(x_1)p(x_2)} \left[ \log p(x_1|x_2) \right]  \label{eq:club_derive_c} \\
    &\ge I(X_1;X_2). \label{eq:club_derive_upper}
\end{align}
Eq.(\ref{eq:club_derive_e}) is from the linearity of expectation, Eq.(\ref{eq:club_derive_c}) is from the fact that $c(x_1)$ is not a function of $x_2$ and therefore $\mathbb{E}_{p(x_1,x_2)} \left[ c(x_1) \right]  = \mathbb{E}_{p(x_1)p(x_2)} \left[ c(x_1) \right] = \mathbb{E}_{p(x_1)} \left[ c(x_1) \right]$, and Eq.(\ref{eq:club_derive_upper}) is directly from the original $I_\textsc{CLUB}(X_1; X_2)$ paper \cite{cheng2020club}.
\end{proof}

\subsection{Unimodal and multimodal augmentations}
\label{app:method2}

We first restate the definitions of optimal single-view and multi-view augmentation:
\begin{definition}
\label{def:optimal_single_app}
    (Optimal unimodal augmentation) $X_1'$ is an optimal unimodal augmentation for $X_1$ when $I(X; X') = I(X; Y)$, which implies that the only information shared between $X$ and $X'$ is task-relevant with no irrelevant noise.
\end{definition}
\begin{definition}
\label{def:optimal_multi_app}
    (Optimal multimodal augmentation) $X_1'$ and $X_2'$ are optimal multimodal augmentation for $X_1$ and $X_2$ when $I(X_1,X_2; X_1',X_2') = I(X_1,X_2; Y)$, which implies that the only information shared between $X_1,X_2$ and $X_1',X_2'$ is task-relevant with no irrelevant noise.
\end{definition}

When are these assumptions satisfied? $I(X; X') = I(X; Y)$ holds when all information shared between $X$ and $X'$ is task-relevant, which implies that the augmentation keeps task-relevant information constant while changing task-irrelevant information. In the case of image classification, task-relevant information is the object in the picture, while task-irrelevant information is the background. To satisfy $I(X_1,X_2; X_1',X_2') = I(X_1,X_2; Y)$, by the chain rule of MI, we augment in two steps:
\begin{align}
    \textit{Unimodal aug: } &X_1' \textrm{ s.t. } I(X_1; X_1') = I(X_1; Y), \label{assump:unimodal_aug_app}\\
    \textit{Unique aug: } &X_2' \textrm{ s.t. } I(X_2; X_2'|X_1) = I(X_2; Y|X_1). \label{assump:unique_aug_app}
\end{align}
the second step is the \textit{unique augmentation}: after observing $X_1$, we create augmented $X_2'$ from $X_2$ to keep the task-relevant information but meanwhile do not affect any information from $X_1$. In Table \ref{tab:augs}, we include some more examples of how unique augmentations could be designed across different datasets.

\begin{table*}[t]
\centering
\fontsize{8}{11}\selectfont
\setlength\tabcolsep{2pt}
\vspace{-0mm}
\caption{More examples of optimal single-view and multi-view augmentations.}
\centering

\begin{tabular}{l c c c c c c c }
\hline 
& & & & Standard Aug & \textbf{Unique Aug} \\
Dataset & $X_1$ & $X_2$ &$X_1'$ & $X_2'$ & $X_2''$ \\
\hline
Cartoon & Caption & Image & Word Masking & Crop + Flip + Resize & Flip + Resize \\
MIMIC & Signals & Tables & Time Warping & CutMix \cite{yun2019cutmix} on All Features & CutMix on Nonclinical Features \\
MOSEI & Transcripts & Video+Audio & Word Masking & Noise Injection on Any Frames & Noise Injection on Silent Frames\\
UR-FUNNY & Transcripts & Video+Audio & Word Masking & Noise Injection on Any Frames & Noise Injection on Silent Frames \\
MUsTARD & Transcripts & Video+Audio & Word Masking & Noise Injection on Any Frames & Noise Injection on Silent Frames \\
\hline \hline
\end{tabular}

\vspace{-0mm}
\label{tab:augs}
\end{table*}

\textbf{Final objectives}: If Definitions~\ref{def:optimal_single_app} and~\ref{def:optimal_multi_app} are both satisfied, we can substitute contrastive estimators in the following equations:

{\small
\begin{align}
    I_\textsc{NCE}(X_1;X_2|Y) &= \mathbb{E}_{p(y)} \left[ \mathbb{E}_{\substack{x_1,x_2^+ \sim p(x_1,x_2|y)\\x_2^- \sim p(x_2|y)}} \left[ \log \frac{\exp f(x_1,x_2^+, y)}{\sum_k \exp f(x_1, x_2^-, y)} \right] \right] \label{eq:conditional_nce_supervised_app} \\
    I_\textsc{NCE-CLUB}(X_1;X_2|Y) &= \mathbb{E}_{p(y)} \left[ \mathbb{E}_{x_1,x_2^+ \sim p(x_1,x_2|y)} \left[ f^*(x_1,x_2^+, y) \right] - \mathbb{E}_{\substack{x_1 \sim p(x_1|y)\\x_2^- \sim p(x_2|y)}} \left[ f^*(x_1,x_2^-, y) \right] \right] \label{eq:conditional_club_supervised_app}
\end{align}
}

by replacing $I(X_i;Y)$ terms with $I(X_i;X_i')$ and replacing $I(X_1;X_2|Y)$ terms with  $I(X_1;X_2|X_1', X_2')$:
{\small
\begin{align}
    I_\textsc{NCE}(X_1;X_2|X_1',X_2') &=  \mathbb{E}_{p(x_1', x_2')} \left[ \mathbb{E}_{\substack{x_1,x_2^+ \sim p(x_1,x_2|x_1', x_2')\\x_2^- \sim p(x_2|x_1', x_2')}} \left[ \log \frac{\exp f(x_1,x_2^+, x_1', x_2')}{\sum_k \exp f(x_1, x_2^-, x_1', x_2')} \right] \right] \label{eq:nce_final_ssl_app} \\
    I_\textsc{NCE-CLUB}(X_1;X_2|X_1',X_2') &= \mathbb{E}_{p(x_1', x_2')} \Big[ \mathbb{E}_{x_1,x_2^+ \sim p(x_1,x_2|x_1', x_2')} [f^*(x_1,x_2^+, x_1', x_2') ] \nonumber \\
    &-\mathbb{E}_{\substack{x_1 \sim p(x_1|x_1', x_2')\\x_2^- \sim p(x_2|x_1', x_2')}} [f^*(x_1,x_2^-, x_1', x_2') ] \Big] \label{eq:nceclub_final_ssl_app}
\end{align}
}

\subsubsection{Implementing conditional CL via kernel}
We restate our objectives below:
{\small
\begin{align}
    I_\textsc{NCE}(X_1;X_2|X_1',X_2') &=  \mathbb{E}_{p(x_1', x_2')} \left[ \mathbb{E}_{\substack{x_1,x_2^+ \sim p(x_1,x_2|x_1', x_2')\\x_2^- \sim p(x_2|x_1', x_2')}} \left[ \log \frac{\exp f(x_1,x_2^+, x_1', x_2')}{\sum_k \exp f(x_1, x_2^-, x_1', x_2')} \right] \right] \\
    I_\textsc{NCE-CLUB}(X_1;X_2|X_1',X_2') &= \mathbb{E}_{p(x_1', x_2')} \Big[ \mathbb{E}_{x_1,x_2^+ \sim p(x_1,x_2|x_1', x_2')} [f^*(x_1,x_2^+, x_1', x_2') ] \nonumber \\
    & - \mathbb{E}_{\substack{x_1 \sim p(x_1|x_1', x_2')\\x_2^- \sim p(x_2|x_1', x_2')}} [f^*(x_1,x_2^-, x_1', x_2') ] \Big]
\end{align}}

However, sampling from $p(\cdot | x_1', x_2')$ is hard. Since $X_1', X_2'$ are continuous variables, directly sampling from the conditional distributions $p(\cdot | x_1', x_2')$ may be difficult; training a generative model  $p_{\theta}(x_1, x_2 | x_1', x_2')$  from augmented data $x_1', x_2'$ to original data $x_1, x_2$ can be expensive and nontrivial in a multimodal setup. In this work, we implement the conditioning in $p(x_1, x_2 | x_1', x_2')$ through concatenation and the details are in Appendix \ref{app:implementation}. Here we discuss an alternative solution to this problem introduced by \citet{tsai2022conditional}. It leverages kernel methods for conditional sampling in contrastive learning by assigning weights to each sampled data given the kernel similarity between conditioned variables, avoiding directly sampling from the conditional distributions or training generative models. In our formulation, given multimodal input $(x_1, x_2)$ with their augmentation $(x_1', x_2')$, we can simply use the technique from \cite{tsai2022conditional} to estimate $I_\textsc{NCE}(X_1;X_2|X_1',X_2')$ and $I_\textsc{NCE-CLUB}(X_1;X_2|X_1',X_2')$, where the kernel measures the similarity between different pairs $(x_1', x_2')$ of the conditional variable $X_1, X_2$. Specifically, 
\begin{align}
    I_\textsc{NCE}(X_1;X_2|X_1',X_2') &=  \mathbb{E}_{p(x_1, x_2, x_1', x_2')} \left[ \log \frac{\exp f(x_1,x_2^+)}{\exp f(x_1,x_2) + n * \left[ K_{X_1\indep X_2|X_1', X_2'} \right]_{ii} } \right]  \label{eq:nce_kernel}\\
    I_\textsc{NCE-CLUB}(X_1;X_2|X_1',X_2') &= \mathbb{E}_{p(x_1, x_2, x_1', x_2')} \left[ f^*(x_1,x_2) - \log \left[K_{X_1\indep X_2|X_1', X_2'}\right]_{ii}  \label{eq:nce_club_kernel}\right]
\end{align}

where $K_{X_1\indep X_2|X_1', X_2'} = K_{X_1X_2} (K_{X_1'X_2'} + \lambda {\bf I})^{-1} K_{X_1'X_2'}$ and $\left[K_{X_1\indep X_2|X_1', X_2'}\right]_{ii}$ is the $i$th row and $i$th column of $K_{X_1\indep X_2|X_1', X_2'}$. $K_{X_1X_2}$ is a kernel similarity matrix between $X_1$ and $X_2$, and $K_{X_1'X_2'}$ is a separate kernel similarity matrix between $X_1'$ and $X_2'$. $f^*$ is the optimal solution of Eq.(\ref{eq:nce_kernel}). By leveraging the similarity $K_{X_1'X_2'}$ between conditional variables $X_1'$ and $X_2'$, $K_{X_1\indep X_2|X_1', X_2'}$ transforms the similarity scores between $X_1$ and $X_2$ under unconditional distributions into similarity scores under conditional distributions. Note that the expectations in Eqs.(\ref{eq:nce_kernel}) and (\ref{eq:nce_club_kernel}) are taken over the joint distribution $p(x_1, x_2, x_1', x_2')$, which comes naturally after augmenting both modalities $X_1$ and $X_2$. This method could effectively alleviate the problem of sampling from conditional distributions in our formulation. We refer the reader to \citet{tsai2022conditional} for more details.

\subsection{Final estimators in \names}
\label{app:method3}

\begin{theorem}
    (Contrastive estimators for shared and unique information). Under assumptions on single-view augmentations $I(X_1;Y) = I(X_1, X_1')$ (Definition \ref{def:optimal_single_app}) and optimal multi-view augmentation $X_2'$ such that $I(X_1,X_2; X_1',X_2') = I(X_1,X_2; Y)$  (Definition \ref{def:optimal_multi_app}), we can define contrastive objectives for task-relevant shared and unique information with:
    \begin{align}
    S = I(X_1; X_2; Y) &\ge I_\textsc{NCE}(X_1;X_2) - I_\textsc{NCE-CLUB}(X_1;X_2|X_1',X_2') \label{eq:shared_app}\\
    U_i = I(X_i; Y | X_{-i}) &\ge I_\textsc{NCE}(X_i;X_i') - I_\textsc{NCE-CLUB}(X_1;X_2) + I_\textsc{NCE}(X_1;X_2|X_1',X_2') \label{eq:unique_app}
    \end{align}
\end{theorem}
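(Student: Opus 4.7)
The plan is to prove the two inequalities separately, in each case by (i) expanding the target quantity into signed MI terms using standard identities, (ii) replacing any term conditioned on $Y$ by a term conditioned on the augmentations using Definitions~\ref{def:optimal_single_app} and~\ref{def:optimal_multi_app}, and (iii) applying the NCE lower bound to every term appearing with a positive sign and the NCE-CLUB upper bound to every term appearing with a negative sign (as established in Section~\ref{subsec:objectives}). The direction of each bound is preserved because an upper bound on a negated quantity yields a lower bound on the negation.

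For the shared term, I would first write $S = I(X_1;X_2;Y) = I(X_1;X_2) - I(X_1;X_2\mid Y)$, which is a purely definitional identity. Next, using Definition~\ref{def:optimal_multi_app} applied to $(X_1,X_2)$ and $(X_1',X_2')$, I would argue that $(X_1',X_2')$ acts as a sufficient surrogate for $Y$ when conditioning, so $I(X_1;X_2\mid Y) = I(X_1;X_2\mid X_1',X_2')$. Applying $I_\textsc{NCE}(X_1;X_2)\le I(X_1;X_2)$ and $I(X_1;X_2\mid X_1',X_2')\le I_\textsc{NCE-CLUB}(X_1;X_2\mid X_1',X_2')$ then yields Eq.(\ref{eq:shared_app}).

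For the unique terms, I would start from the chain-rule identity
\begin{align}
 I(X_i;Y\mid X_{-i}) = I(X_i;Y) - I(X_1;X_2;Y) = I(X_i;Y) - I(X_1;X_2) + I(X_1;X_2\mid Y). \nonumber
\end{align}
By Definition~\ref{def:optimal_single_app}, $I(X_i;Y) = I(X_i;X_i')$, and by Definition~\ref{def:optimal_multi_app} we again have $I(X_1;X_2\mid Y) = I(X_1;X_2\mid X_1',X_2')$. Substituting both gives $U_i = I(X_i;X_i') - I(X_1;X_2) + I(X_1;X_2\mid X_1',X_2')$. Applying the NCE lower bound to $I(X_i;X_i')$ and to $I(X_1;X_2\mid X_1',X_2')$ and the NCE-CLUB upper bound to $I(X_1;X_2)$ (which flips sign), I obtain Eq.(\ref{eq:unique_app}).

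The main obstacle is justifying the substitution $I(X_1;X_2\mid Y)=I(X_1;X_2\mid X_1',X_2')$ from the aggregate equality $I(X_1,X_2;X_1',X_2')=I(X_1,X_2;Y)$. The cleanest route is to treat $(X_1',X_2')$ as a stochastic function of $Y$ (or vice versa) that preserves all task-relevant information about $(X_1,X_2)$, so that conditioning on either carries the same residual dependence between $X_1$ and $X_2$; this is exactly the sense in which Definition~\ref{def:optimal_multi_app} is used in the paper. A secondary technical point is that the NCE lower bound in $I_\textsc{NCE}(X_1;X_2\mid X_1',X_2')$ is an approximation (as discussed in Section~\ref{subsec:unique_aug} and Appendix~\ref{app:discussion_conditional}); I would appeal to the conditional-InfoNCE result of \citet{sordoni2021decomposed} to formally justify it as a valid lower bound, making the chain of inequalities above rigorous. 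No other nontrivial steps are required beyond collecting signs.
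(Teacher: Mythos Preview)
Your proposal is correct and follows essentially the same approach as the paper: decompose $S$ and $U_i$ via the identities $I(X_1;X_2;Y)=I(X_1;X_2)-I(X_1;X_2\mid Y)$ and $I(X_i;Y\mid X_{-i})=I(X_i;Y)-I(X_1;X_2)+I(X_1;X_2\mid Y)$, invoke the augmentation assumptions to swap $Y$ for $(X_1',X_2')$ (and $X_i'$), and then apply $I_\textsc{NCE}$ to positive-sign terms and $I_\textsc{NCE-CLUB}$ to negative-sign terms. If anything, you are more explicit than the paper about the step $I(X_1;X_2\mid Y)=I(X_1;X_2\mid X_1',X_2')$, which the paper treats as immediate from Definition~\ref{def:optimal_multi_app}.
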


\begin{proof}
The objectives follow from the fact that $I_\textsc{NCE}(X_1;X_2)$ and $I_\textsc{NCE}(X_1;X_2|X_1',X_2')$ are lower bounds of $ I(X_1;X_2)$ and $I(X_1; X_2 | Y)$ respectively, and $I_\textsc{NCE-CLUB}(X_1;X_2)$ and $I_\textsc{NCE-CLUB}(X_1;X_2|X_1',X_2')$ are upper bounds of $ I(X_1;X_2)$ and $I(X_1; X_2 | Y)$ respectively:
\begin{align}
\label{eqn:U2-def-decompose}
    S &= I(X_1; X_2; Y) = I(X_1;X_2) - I(X_1; X_2 | Y) \\
    &\ge I_\textsc{NCE}(X_1;X_2) - I_\textsc{NCE-CLUB}(X_1;X_2|X_1',X_2') \\
    U_i &= I(X_i; Y | X_{-i}) = I(X_i; Y) - (I(X_1; X_2) - I(X_1; X_2 | Y)) \\
    & \ge I_\textsc{NCE}(X_i;X_i') - (I_\textsc{NCE-CLUB}(X_1;X_2) - I_\textsc{NCE}(X_1;X_2|X_1',X_2'))
\end{align}
and symmetrically for $U_2$.
\end{proof}

Now we show that \names\ learns both shared and unique task-relevant information. First, we restate the definition of the factorized representations:
\begin{align}
    Z_{S_1} &= \argmax_{Z_1 = f_\theta(X_1)} I(Z_1;X_2;Y), &&Z_{S_2} = \argmax_{Z_2 = f_\theta(X_2)} I(Z_2;X_1;Y), \label{eq:final_objectives1_app} \\
    Z_{U_1} &= \argmax_{Z_1 = f_\theta(X_1)} I(Z_1;Y|X_2), &&Z_{U_2} = \argmax_{Z_2 = f_\theta(X_2)} I(Z_2;Y|X_1). \label{eq:final_objectives2_app}
\end{align}
where $I(Z_1;X_2;Y) = I(Z_1; X_2) - I(Z_1; X_2 | Y)$ is the shared information and  $I(Z_2;X_1;Y) = I(Z_2; X_2) - I(Z_2; X_1 | Y)$ is the unique information. 

\begin{theorem}
    (Optimality of \names) If $Z_{S_1},Z_{S_2},Z_{U_1},Z_{U_2}$ perfectly maximize Eqs.(\ref{eq:final_objectives1_app}-\ref{eq:final_objectives2_app}) and the estimations in Eqs.(\ref{eq:conditional_nce_supervised}-\ref{eq:nceclub_final_ssl_app}) are tight, we obtain $I(X_1, X_2 ; Y) = I(Z_{S_1}; Z_{S_2}; Y) + I(Z_{U_1}; Y | Z_{S_2}) + I(Z_{U_2}; Y | Z_{S_1})$, suggesting that \names\  learns both shared and unique task-relevant information.
\end{theorem}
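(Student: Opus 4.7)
The plan is to show that, under perfect maximization of the factorization objectives and tightness of the NCE and NCE--CLUB bounds, each of the four learned representations is a sufficient statistic for exactly the information-theoretic quantity it is designed to capture, so that the chain-rule decomposition of $I(X_1,X_2;Y)$ carries over verbatim from the $X$'s to the $Z$'s.

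I would start from the three-way decomposition
\begin{equation*}
I(X_1,X_2;Y) \;=\; I(X_1;X_2;Y) \;+\; I(X_1;Y|X_2) \;+\; I(X_2;Y|X_1)
\end{equation*}
established in Eq.~(\ref{eq:sharedunique}), and upgrade each term on the right to its $Z$-analogue. The first step is to note that since each $Z$ is a deterministic function of its input modality, the data processing inequality gives $I(Z_{S_1};X_2;Y) \le I(X_1;X_2;Y)$, $I(Z_{U_1};Y|X_2) \le I(X_1;Y|X_2)$, and the symmetric bounds for the $X_2$-side representations. Perfect maximization of Eqs.~(\ref{eq:final_objectives1}--\ref{eq:final_objectives2}), combined with tightness of the estimators in Eqs.~(\ref{eq:shared}--\ref{eq:unique}), forces each of these DPI inequalities to hold with equality; in particular, $Z_{S_i}$ becomes a sufficient statistic of $X_i$ for the shared task-relevant content, and $Z_{U_i}$ a sufficient statistic of $X_i$ for the unique task-relevant content.

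The second step is to replace the $X_{-i}$ that appears in each right-hand-side term of the decomposition by its learned $Z_{S_{-i}}$ counterpart. For the shared term, DPI applied to the deterministic map $X_2 \mapsto Z_{S_2}$ gives $I(Z_{S_1};Z_{S_2};Y) \le I(Z_{S_1};X_2;Y) = I(X_1;X_2;Y)$, and the reverse inequality follows from the sufficiency of $Z_{S_2}$ (attained by maximizing $I(Z_{S_2};X_1;Y)$ and thus, after substituting the coarsening $X_1 \mapsto Z_{S_1}$, also $I(Z_{S_2};Z_{S_1};Y)$). For the unique terms, the identity that must be established is $I(Z_{U_1};Y|X_2) = I(Z_{U_1};Y|Z_{S_2})$, and symmetrically $I(Z_{U_2};Y|X_1) = I(Z_{U_2};Y|Z_{S_1})$. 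Given these, summing the three equalities and invoking Eq.~(\ref{eq:sharedunique}) immediately produces the claimed identity.

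The main obstacle will be precisely this last substitution $X_{-i} \mapsto Z_{S_{-i}}$ inside the conditional MI $I(Z_{U_i};Y \mid \cdot)$, because conditional MI is not monotone in the conditioning variable. My plan is to handle it by decomposing $X_2$ into its task-relevant shared component with $X_1$ (captured exactly by $Z_{S_2}$ under the tightness assumption), its unique component (captured by $Z_{U_2}$), and a task-irrelevant residual. The sufficiency of $Z_{S_2}$ ensures that only the shared component is informative about $Z_{U_1}$ together with $Y$: the unique component of $X_2$ is, by the definition of $U_1$ in Eq.~(\ref{eqn:U2-def-decompose}), not part of what $Z_{U_1}$ has been optimized to represent, and the task-irrelevant residual is removed by the NCE--CLUB upper bounds in Eq.~(\ref{eq:unique}). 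Combining this orthogonality with the chain rule for conditional MI collapses the two conditioning choices to the same value, and completes the proof.
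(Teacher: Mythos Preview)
Your overall decomposition strategy matches the paper's: reduce to showing the three termwise equalities $I(X_1;X_2;Y)=I(Z_{S_1};Z_{S_2};Y)$, $I(X_1;Y|X_2)=I(Z_{U_1};Y|Z_{S_2})$, and the symmetric one. However, there is a genuine technical error in your handling of the shared term.

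You invoke the data processing inequality to claim $I(Z_{S_1};Z_{S_2};Y)\le I(Z_{S_1};X_2;Y)$ and $I(Z_{S_1};X_2;Y)\le I(X_1;X_2;Y)$. But three-way interaction information is \emph{not} monotone under coarsening of an argument: writing $I(A;B;C)=I(A;B)-I(A;B|C)$, DPI shrinks both terms when $A$ is replaced by $f(A)$, so the difference can move in either direction (the XOR triple $A,B,C=A\oplus B$ is a standard counterexample, with $I(A;B;C)=-1$ bit but $I(f(A);B;C)=0$ for constant $f$). So neither inequality is justified by DPI, and your ``reverse inequality by sufficiency'' is then circular because it presupposes the very monotonicity you need. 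The paper sidesteps this by a direct chain-rule computation: starting from $I(X_1;Z_{S_2};Y)$ (which equals $I(X_1;X_2;Y)$ by optimality of $Z_{S_2}$), it expands via four-way interaction with $Z_{S_1}$, uses that $Z_{S_1}$ is a deterministic function of $X_1$ to collapse conditionings, and invokes the earlier computation $I(Z_{S_2};Y|Z_{S_1})=0$ to land on $I(Z_{S_1};Z_{S_2};Y)$. You should replace the DPI step with this kind of explicit chain-rule argument.

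For the unique term, your plan to ``decompose $X_2$'' is in the right spirit, but you are vague about the one fact that actually makes the substitution $X_2\mapsto Z_{S_2}$ go through: tightness of the NCE--CLUB upper bound forces $Z_{U_1}$ to be \emph{minimal}, not just sufficient, so that $I(Z_{U_1};X_2)=0$ and hence $I(Z_{U_1};Z_{U_2})=I(Z_{U_1};Z_{U_2}\,|\,T)=0$ for any $T$. The paper exploits this concretely by first inserting $Z_{U_2}$ into the conditioning via the chain rule, killing the cross term $I(Z_{U_1};Z_{U_2};Y|Z_{S_2})$ using independence, and then adding $X_2$ to the conditioning set $(Z_{S_2},Z_{U_2})$, which is harmless because both are deterministic functions of $X_2$. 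Your ``task-irrelevant residual'' language does not pin this down; the argument hinges on the orthogonality of the two unique representations, which you should state and use explicitly.
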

\begin{proof}
Because $I(X_1, X_2 ; Y) = I(X_1; X_2; Y) + I(X_1; Y | X_2) + I(X_2; Y | X_1)$, it is sufficient to show that $I(X_1; X_2; Y) = I(Z_{S_1}; Z_{S_2}; Y), I(X_1; Y | X_2) = I(Z_{U_1}; Y | Z_{S_2})$ and $I(X_2; Y | X_1) = I(Z_{U_2}; Y | Z_{S_1})$.

First we show $I(X_1; X_2; Y) = I(Z_{S_1}; Z_{S_2}; Y)$. Crucially, by definition of how $Z_{S_1}$ and $Z_{S_2}$ are optimized to maximize $I(X_1; X_2; Y)$, we have that:
\begin{align}
    I(X_1; X_2; Y) = I(Z_{S_1}; X_2; Y) = I(Z_{S_2}; X_1; Y).
\end{align}
We can then obtain
\begin{align}
    I(X_1; X_2; Y) &= I(X_1; Z_{S_2}; Y) \\
    &= I(X_1; Z_{S_2}; Y | Z_{S_1} ) + I(Z_{S_1}; Z_{S_2} ; X_1; Y ) \\
    &= I(Z_{S_2}; Y | Z_{S_1} ) - I(Z_{S_2}; Y | Z_{S_1}, X_1 ) + I(Z_{S_1}; Z_{S_2} ; X_1; Y ) \\
    &= I(Z_{S_2}; Y | Z_{S_1} ) - I(Z_{S_2}; Y | X_1 ) + I(Z_{S_1}; Z_{S_2} ; X_1; Y ) \label{eq:step1} \\
    &= I(Z_{S_2}; Y | Z_{S_1} ) - I(Z_{S_2}; Y | X_1 ) + I(Z_{S_1}; Z_{S_2} ; Y ) \label{eq:step2} \\
    &= I(Z_{S_2}; Y | Z_{S_1} ) + I(Z_{S_1}; Z_{S_2} ; Y ) \\
    &= I(Z_{S_1}; Z_{S_2} ; Y ) \label{eq:step3}
\end{align}
where Eq.(\ref{eq:step1}) is because $Z_{S_1}$ are deterministically obtained from $S_1$ and Eq.(\ref{eq:step2}) is because $Z_{S_1}$ maximizes the shared information.
Finally, we go to Eq.(\ref{eq:step3}) $I(Z_{S_2}; Y | Z_{S_1} ) = 0$ as shown in Eqs.(\ref{eq:unique_proof_step1}-\ref{eq:unique_proof_step2}) using the fact that $Z_{S_1}$ is learned to maximize $I(X_1;X_2;Y)$ and $I(Z_{S_1};X_2;Y) = I(X_1;Z_{S_2};Y)$.

Next, we show $I(X_1; Y | X_2) = I(Z_{U_1}; Y | Z_{S_2})$:
\begin{align}
    &I(Z_{U_1}; Y | Z_{S_2}) = I(Z_{U_1}; Y | Z_{S_2}, Z_{U_2}) + I(Z_{U_1}; Y; Z_{U_2} | Z_{S_2}),
\end{align}
which is by the chain rule of conditional mutual information. Then we show $I(Z_{U_1}; Y; Z_{U_2} | Z_{S_2}) = 0$:
\begin{align}
    I(Z_{U_1}; Y; Z_{U_2} | Z_{S_2}) =  I(Z_{U_1}; Z_{U_2} | Z_{S_2}) -   I(Z_{U_1}; Z_{U_2} | Y; Z_{S_2}) = 0 - 0 = 0
\end{align}
This is because Eq.(\ref{eq:final_objectives2_app}) leads to $I(Z_{U_1}; Y | X_2) = I(X_1; Y | X_2)$ and $I(Z_{U_2}; Y | X_1) = I(X_2; Y | X_1)$. If the estimations in Eqs.(\ref{eq:conditional_nce_supervised}-\ref{eq:nceclub_final_ssl_app}) are tight, by conditioning and by the previously stated $I(Z_{U_1}; Y | X_2) = I(X_1; Y | X_2)$, $Z_{U_1}$ tightly captures information from only $X_1$ and not in $X_2$. The same applies to $Z_{U_2}$. We have $I(Z_{U_1}; X_2) = I(Z_{U_2}; X_1) = I(Z_{U_1}; Z_{U_2}) = I(Z_{U_1}; Z_{U_2} | T) = 0$ with $T$ being an arbitrary random variable because no shared information exists between $Z_{U_1}$ and $Z_{U_2}$. Then we obtain:
\begin{align}
    I(Z_{U_1}; Y | Z_{S_2}, Z_{U_2}) &= I(Z_{U_1}; Y | Z_{S_2}, Z_{U_2}, X_2) + I(Z_{U_1}; Y; X_2 | Z_{S_2}, Z_{U_2}) \\
    &= I(Z_{U_1}; Y | X_2)
\end{align}

We use the fact that conditioning on $Z_{S_2}, Z_{U_2}$ and $X_2$ jointly reduces to conditioning on $X_2$ since $Z_{S_2}$ and $Z_{U_2}$ are deterministically obtained from $X_2$. Lastly, since Eqs.(\ref{eq:final_objectives1_app}-\ref{eq:final_objectives2_app}) are satisfied,  $Z_{U_1} = \argmax_{Z_1 = f_\theta(X_1)} I(Z_1;Y|X_2)$ therefore $I(Z_{U_1};Y|X_2) = I(X_1;Y|X_2)$. We have:
\begin{align}
    I(Z_{U_1}; Y | Z_{S_2}) = I(Z_{U_1}; Y | X_2) = I(X_1; Y | X_2).
\end{align}
The proof for $I(X_2; Y | X_1) = I(Z_{U_2}; Y | Z_{S_1})$ is similar. We now have shown that $I(X_1; X_2; Y) = I(Z_{S_1}; Z_{S_2}; Y)$, $I(X_1; Y | X_2) = I(Z_{U_1}; Y | Z_{S_2})$ and $I(X_2; Y | X_1) = I(Z_{U_2}; Y | Z_{S_1})$, adding up all LHS and RHS we have the theorem.
\end{proof}

\subsection{Extensions to masking and non-contrastive learning}
\label{app:method5}

We now show how similar ideas can be extended to other popular self-supervised learning objectives, such as non-contrastive learning \cite{zbontar2021barlow, bardes2021vicreg} and masked pre-training \cite{he2022masked, devlin2018bert}. Importantly, this paper provides a new principle for multimodel self-supervised learning: (1) learning task-relevant information and (2) removing task-irrelevant information from both shared and unique parts across modalities. Our paper focuses on realizing this principle via multi-view information theory and contrastive learning. Below we provide two potential alternatives to realize this principle on non-contrastive and masking methods, respectively:

\textbf{Non-contrastive learning:} Methods such as Barlow Twins \cite{zbontar2021barlow} and VICReg \cite{bardes2021vicreg} use invariance and covariance regularizations to maximally preserve shared information in the embeddings across two modalities. However, the embeddings learned may contain only contain task-relevant information from the shared part and not unique parts. To use the principle in this paper to capture more task-relevant information from unique parts, one should perform VIC-regularization on $X_1$ features, on $X_2$ features, and on $X_1,X_2$ cross-modal features. When performing VICReg on unimodal features, one should condition on the other modality when performing augmentation. Specifically, similar to the idea of multimodal augmentation in this paper, the augmentation of the second modality should not interfere with the shared part (i.e., do not augment regions referred to by the first modality), making the invariance and covariance regularization of the second modality focus on the augmented modality-unique features. This makes the model learn unique modality features that are not captured by the joint embedding from standard independent augmentations.

\textbf{Masking:} Conceptually, masking \cite{he2022masked, devlin2018bert} can be interpreted as leveraging unmasked regions in the same modality to predict masked regions or leveraging the other modality to predict the masked region. However, the learned representation may not be all task-relevant. To use the principle in this paper to exclude task-irrelevant information and capture more task-relevant information from unique parts, we can perform conditional masking, where masking is conditioned on augmented views (similar to the multimodal augmentation in the paper, where the conditioned views are approximating the labels). As a result, only unique regions in the second modality can be masked out, making the model capture more unique information from the second modality by masked prediction. Here we have only provided high-level intuitions of extensions to these methods, and future work should explore these ideas in more detail.

\section{Experimental Details}

\subsection{Implementation details}
\label{app:implementation}
\textbf{Objective Formulation and Architecture} 

In Algorithm~\ref{alg:full} in the main text, we see the sketch for doing contrastive learning with our proposed objectives. To implement all algorithms used in our ablation experiments, we start with two encoders $e_1(\cdot)$ and $e_2(\cdot)$, which takes samples $x_1$ and $x_2$ from the modalities $X_1$ and $X_2$, and outputs corresponding representations $z_1$ and $z_2$. We also have a critic function $f_{\theta}(\cdot,\cdot)$ parametrized by $\theta$ which takes $z_1$ and $z_2$ as inputs and returns a scalar. A popular way to perform contrastive learning aims to maximize $I_\textsc{NCE}(X_1; X_2)$, where
\begin{align}
        I_\textsc{NCE}(X_1; X_2) &= \mathbb{E}_{\substack{x_1,x_2^+ \sim p(x_1,x_2)\\x_2^- \sim p(x_2)}} \left[ \log \frac{\exp f_{\theta}(e_1(x_1),e_2(x_2^+))}{\sum_k \exp f_{\theta}(e_1(x_1), e_2(x_2^-))} \right]. \label{eq:nce_implement} 
\end{align}
In our algorithms, we follow the derivations in Eqs.(\ref{eq:shared}-\ref{eq:unique}) to maximize each $I_\textsc{NCE}$ objective and minimize each $I_\textsc{NCE-CLUB}$ objective. Therefore, for each objective, we add two additional MLP heads on top of the two encoders and create a separate critic which takes in the outputs of the MLP heads instead of the encoders. In all the experiments, we adopt the concat critic design~\cite{oord2018representation, poole2019variational, DBLP:journals/corr/abs-1910-06222}, where $f_{\theta}(x,y)=h_{\theta}([x,y])$ with $h_{\theta}$ being an MLP. 

\textbf{\names-SUP}: In the supervised version of CL which uses label $Y$, the objective we aim to maximize is formulated as 
\begin{align}
        \mathcal{L}_{\names-SUP} &= I_\textsc{NCE}(X_1;X_2) - I_\textsc{NCE-CLUB}(X_1;X_2|Y) \\
        &+ I_\textsc{NCE}(X_1;Y) + I_\textsc{NCE}(X_2;Y) \\
        &- I_\textsc{NCE-CLUB}(X_1;X_2) + I_\textsc{NCE}(X_1;X_2|Y). \label{eq:FactorSup_implement}
\end{align}
Each $I_\textsc{NCE}$ and $I_\textsc{NCE-CLUB}$ term in this objective is calculated using its own critic as discussed above. The conditional terms involving the label $Y$ are implicitly modeled by directly concatenating $Y$ to the outputs of both heads before feeding into the critic. To obtain the learned representations $Z_{S_1}$, we concatenate the outputs of the heads on top of the encoder $e_1$ that correspond to the terms $I_\textsc{NCE}(X_1;X_2)$ and $I_\textsc{NCE-CLUB}(X_1;X_2|Y)$. To obtain $Z_{U_1}$, we concatenate $e_1$'s head outputs from the terms $I_\textsc{NCE}(X_1;Y)$, $I_\textsc{NCE-CLUB}(X_1;X_2)$, and $I_\textsc{NCE}(X_1;X_2|Y)$. $Z_{S_2}$ and $Z_{U_2}$ are obtained in a similar fashion, except we use the outputs from $e_2$'s heads instead of $e_1$.

\textbf{\names-SSL}: In the self-supervised version of CL which uses augmentations $X_1'$ and $X_2'$ of the input modalities, the objective we aim to maximize is formulated as 
\begin{align}
        \mathcal{L}_{\names-SSL} &= I_\textsc{NCE}(X_1;X_2) - I_\textsc{NCE-CLUB}(X_1;X_2|X_1',X_2') \\
        &+ I_\textsc{NCE}(X_1;X_1') + I_\textsc{NCE}(X_2;X_2')  \\
        &- I_\textsc{NCE-CLUB}(X_1;X_2) + I_\textsc{NCE}(X_1;X_2|X_1',X_2'). \label{eq:FactorSSL_implement}
\end{align}
Here the conditional terms are conditioned on the augmentations $X_1'$ and $X_2'$, and we can similarly model it by concatenating the head outputs of $X_1'$ to $X_1$ and the head outputs of $X_2'$ to $X_2$ before feeding into the critic. We use Figure \ref{fig:concat_app} to illustrate this. The way to obtain the learned representations is the same as described in \names-SUP.

\begin{figure}[t]
\centering
\includegraphics[width=0.35\linewidth]{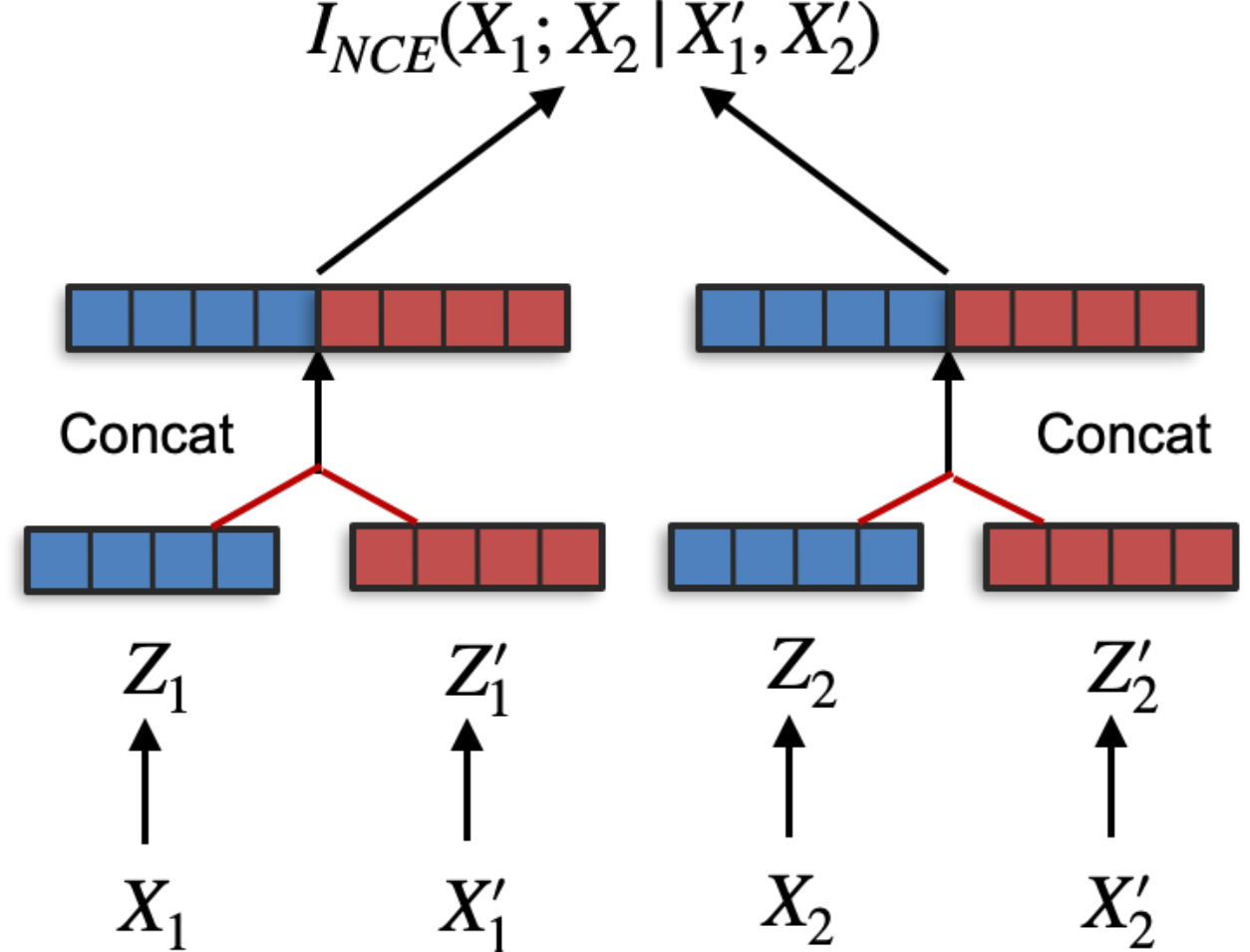}
\caption{An illustration of conditioning by concatenation in the implementation of \names. Conditioning is done by concatenating $Z_1$, the encoded representation of $X_1$, and $Z_1'$, the encoded representation of $X_1’$. A similar operation is performed for $X_2$ and $X_2'$. The concatenated vectors are then fed to MI estimators, such as $I_\textsc{NCE}$ and $I_\textsc{NCE-CLUB}$ (the figure illustrates $I_\textsc{NCE}$).}
\label{fig:concat_app}
\vspace{-4mm}
\end{figure}

\textbf{Estimation of CMI}:\label{app:discussion_conditional} To estimate the conditional mutual information (CMI) $I(X_1; X_2 | X_1', X_2')$, we can estimate the lower or upper bounds of true CMI \citep{sordoni2021decomposed, mukherjee2020ccmi, ma2021conditional}. However, direct sampling from the conditional distribution $p(x_1, x_2 | x_1', x_2')$ can be expensive because we should consider a different conditional distribution $p(x_1, x_2 | x_1', x_2')$ for each data pair $x_1', x_2'$. \citet{sordoni2021decomposed} address this by concatenating the conditioning variable with the input in the critic: $\phi(x_1, x_2, c)$, and showing that Conditional InfoNCE (Eq.(\ref{eq:nce_final_ssl}) is a lower bound and estimator of CMI. This estimator can be made more exact by further importance sampling \citep{sordoni2021decomposed}. However, adding importance sampling \citep{sordoni2021decomposed} or using more accurate estimators \citep{guo2022tight} comes with a trade-off in complexity. Since we focus on capturing unique information to learn a scalable multimodal representation instead of accurately estimating the CMI, we leveraged a simpler version of the estimator from \citet{sordoni2021decomposed}: generating multiple augmented pairs from $x_1, x_2$, and concatenating the input $x_1, x_2$ and each augmented pair $x_1', x_2'$ to define samples from the conditional distribution $p(x_1, x_2 | x_1', x_2')$. We argue that since augmentations do not significantly change the semantics of images, $p(x_1, x_2 | x_1', x_2')$ could be approximated by $p(x_1'', x_2'' | x_1', x_2')$ where $x_1'', x_2''$ are other augmented pairs in addition to $x_1', x_2'$. In this submission, we use one pair of augmented samples for consistency, but our code easily supports increasing the number of augmented pairs that can improve the accuracy of CMI estimation. 

Regardless, our existing one-pair implementation can already show that our estimators are empirically comparable to CMI estimators with guarantees such as \citet{mukherjee2020ccmi} (Table \ref{tab:cmi_lower_supp}), and our estimators empirically satisfy that the lower bound is smaller than the true CMI, and the true CMI smaller than the upper bound, i.e., Conditional InfoNCE $\leq$ true CMI $\leq$ Conditional InfoNCE-CLUB (also in Table \ref{tab:cmi_lower_supp}). We refer the reader to \citet{sordoni2021decomposed} for tighter bounds for CMI.

\textbf{OurCL-SUP}: For this ablation, we remove the factorization and only learn $Z_1$ for $X_1$ and $Z_2$ for $X_2$. The objective we use is the same as that of \names-SUP. The only difference is that we now take $e_1(x_1)$ and $e_2(x_2)$ as the learned representations for inputs $x_1$ and $x_2$.

\textbf{OurCL-SSL}: This is a similar ablation for \names-SSL where we remove the factorization. The objective is the same as that of \names-SSL and we use $e_1(x_1)$ and $e_2(x_2)$ as the learned representations for inputs $x_1$ and $x_2$.

\begin{figure*}[tbp]
    \centering
    \begin{minipage}{0.55\textwidth}
    \vspace{-0mm}
    \begin{algorithm}[H]
    \begin{algorithmic}
    \REQUIRE Multimodal dataset $\{\mathbf{X_1}, \mathbf{X_2}$\}.
    \vspace{.2em}\hrule\vspace{.5em}
    \STATE $\theta$, $\phi$ $\leftarrow$ Initialize network parameters.
    \WHILE{not converged}
    \STATE $\{\vx_1, \vx_2\}$ $\leftarrow$ Sampled batch from $\{\mathbf{X_1}, \mathbf{X_2}$\}
    \STATE $\theta$ $\leftarrow$ Update parameters by maximizing $\mathcal{L}_{\textsc{\names}}$
    \FOR{$i=1$ to $k$}
    \STATE $\{\vx_1', \vx_2'\}$ $\leftarrow$ Sampled batch from $\{\mathbf{X_1}, \mathbf{X_2}$\}
    \STATE $\phi$ $\leftarrow$ Update parameters by maximizing $\mathcal{L}_{\textsc{NCE}}$
    \ENDFOR
    \ENDWHILE
    \RETURN $\theta$, $\phi$
  \end{algorithmic}
  \caption{CL training with sampling}
  \label{alg:training}
\end{algorithm}
\end{minipage}
\end{figure*}

\textbf{Training Strategy}: In regular contrastive learning using $I_\textsc{NCE}$ as the only objective, we can simply perform gradient descent to minimize $I_\textsc{NCE}$, updating all parameters in the encoders, MLP heads, and critics. However, training any of the four methods above also involves the minimization of the $I_\textsc{NCE-CLUB}$ objectives, which require the optimal critic $f^*$ from $I_\textsc{NCE}$, as stated in Eq.(\ref{eq:nceclub}). Therefore, within each iteration during our training, we need to first obtain the optimal critics for the $I_\textsc{NCE-CLUB}$ terms using the $I_\textsc{NCE}$ objective. We outline the training strategy using a sampling method in Algorithm~\ref{alg:training}. In this algorithm, $\mathcal{L}_{\names}$ can be either $\mathcal{L}_{\names-SUP}$ or $\mathcal{L}_{\names-SSL}$, and $\mathcal{L_\textsc{NCE}}$ is the summation of $I_\textsc{NCE}$ objectives for the $I_\textsc{NCE-CLUB}$ terms. In particular, we have
\begin{align}
    \mathcal{L}_\textsc{NCE}=
    \begin{cases}
        I_\textsc{NCE}(X_1;X_2|Y) + I_\textsc{NCE}(X_1;X_2),& \text{if } \mathcal{L} = \mathcal{L}_{\names-SUP} \text{ ;}\\
        I_\textsc{NCE}(X_1;X_2|X_1', X_2') + I_\textsc{NCE}(X_1;X_2), & \text{if }  \mathcal{L} = \mathcal{L}_{\names-SSL} \text{ .}
\end{cases}
\end{align}
We define $\phi$ to be the parameters of critics for the $I_\textsc{NCE-CLUB}$ terms, and $\theta$ corresponds to all the rest parameters in the network (parameters of encoders, heads, and critics for $I_\textsc{NCE}$ terms). In the outer loop, we update $\theta$ using the main learning objective  $\mathcal{L}$. In the inner loop, we update $\phi$ using the $\mathcal{L}_\textsc{NCE}$ objective, which learns the optimal critics $f^*$ needed to compute the $I_\textsc{NCE-CLUB}$ terms. Ideally in the inner loop we would update $\phi$ until convergence so we get a good approximation to the optimal critic. In practice we found sampling just one batch by setting $k=1$ in Algorithm~\ref{alg:training} works pretty well. Using only one iteration does not have a big impact on the convergence and still produces promising results. More importantly, it significantly reduces the time required for training, and allows our algorithms to have comparable running time to existing contrastive learning methods.

\subsection{Datasets}
\label{appendix:data}
\textbf{Gaussian datasets for MI estimation}: As shown in Figure~\ref{fig:bounds} in the main text, we first demonstrate the quality of our proposed upper bounds $I_\textsc{NCE-CLUB}(X_1; X_2)$ on a toy Gaussian dataset. We obtain the samples $\{(x_i, y_i)\}$ from 4 multivariate Gaussian distribution with dimensions \{20, 50, 100, 200\}. In each dataset, we set the ground truth MI values to be \{2, 4, 6, 8, 10\}, and so we can compute the correlation $\rho$ needed for achieving these MI values using the ground truth MI formula for Multivariate Gaussian: $I(X,Y)=-\frac{d}{2}\log(1-\rho^2)$. At each true MI value we sample 4000 times using a batch size of 64.

\textbf{Synthetic dataset with controlled generation}: We generate data with controllable ratios of task-relevant shared and unique information to analyze the behavior of each contrasive learning objective in Figure~\ref{fig:overview} in the main text. Starting with a set of latent vectors $w_1, w_2, w_s \sim \mathcal{N}(0_d, \Sigma_d^2), d=50$ representing information unique to $X_1,X_2$ and common to both respectively, the concatenated vector $[w_1,w_s]$ is transformed into high-dimensional $x_1$ using a fixed full-rank transformation $T_1$ and likewise $[w_2,w_s]$ to $x_2$ via $T_2$. The label $y$ is generated as a function (with nonlinearity and noise) of varying ratios of $w_s$, $w_1$, and $w_2$ to represent shared and unique task-relevant information. For experiments, we used 1-layer MLPs with 512 hidden size as encoders, and the embedding dimensions are 128 for both modalities. The heads on top of encoders are also 1-layer MLPs with the same hidden and output dimension as the input, and all critics are 1-layer MLPs with 512 hidden size.

\textbf{Multimodal fusion datasets}: We use a collection of 5 real-world datasets provided in MultiBench~\citep{liang2021multibench} and the IRFL dataset to test our method in the context of varying ratios of shared and unique information that is important for the task. In all the datasets below, the heads added on top of the encoders are 1-Layer MLPs with ReLU activations that map the encoder outputs to the same dimensions. All critics are also MLPs with 1 hidden layer of size 512 and ReLU activation. 

\begin{table*}[t]
\centering
\fontsize{9}{11}\selectfont
\setlength\tabcolsep{4pt}
\vspace{-4mm}
\caption{Results on MultiBench~\citep{liang2021multibench} datasets with varying shared and unique information: \names\ achieves strong results vs self-supervised (top $5$ rows) and supervised (bottom $3$ rows) baselines that do not have unique representations, factorization, upper-bounds to remove irrelevant information, and multimodal augmentations.}
\centering

\begin{tabular}{l|ccccc}
\hline \hline
Model & \textsc{MIMIC} & \textsc{MOSEI} & \textsc{MOSI} & \textsc{UR-FUNNY} & \textsc{MUStARD} \\
\hline

SimCLR~\cite{chen2020simple} & 66.7 $\pm$ 0.0\% & 71.9 $\pm$ 0.3\% & 47.8 $\pm$ 1.8\% & 50.1 $\pm$ 1.9 \% & 53.5 $\pm$ 2.9\%\\
Cross+Self~\cite{wang2022rethinking} & 65.2 $\pm$ 0.0\% & 71.1 $\pm$ 0.2\% & 48.6 $\pm$ 1.2\% & 56.5 $\pm$ 0.7\% & 53.9 $\pm$ 4.5\%\\

Cross+Self+Fact~\cite{yuan2021multimodal} & 65.5 $\pm$ 0.0\% & 71.9 $\pm$ 0.2\% & 49.0 $\pm$ 1.1\% & 59.9 $\pm$ 0.9\% & 53.9 $\pm$ 4.0\% \\

OurCL-SSL & 65.2 $\pm$ 0.0\% & 71.2 $\pm$ 0.2\% & 49.0 $\pm$ 0.8\% & 58.8 $\pm$ 1.3\% & 54.0 $\pm$ 2.5\%\\
\names-SSL & \textbf{67.3 $\pm$ 0.0}\% & \textbf{74.5 $\pm$ 0.1\%} & \textbf{51.2 $\pm$ 1.6\%} & \textbf{60.5 $\pm$ 0.8\%} & \textbf{55.8 $\pm$ 0.9}\%\\
\hline
SupCon~\cite{khosla2020supervised} & 67.4 $\pm$ 0.0\% & 71.0 $\pm$ 0.1\% & 47.2 $\pm$ 1.2\% & 50.1 $\pm$ 2.0\% & 52.7 $\pm$ 2.2\%\\
OurCL-SUP & 68.2 $\pm$ 0.0\% & 71.1 $\pm$ 0.2\% & 65.3 $\pm$ 0.8\% & 58.3 $\pm$ 1.1\% & 65.1 $\pm$ 1.8\%\\
\names-SUP & \textbf{76.8 $\pm$ 0.0}\% & \textbf{77.8 $\pm$ 0.3\%} & \textbf{69.1 $\pm$ 0.6\%} & \textbf{63.5 $\pm$ 0.8\%} & \textbf{69.9 $\pm$ 1.9}\%\\
\hline \hline
\end{tabular}

\vspace{-7mm}
\label{tab:fusion_supp}
\end{table*}
\begin{table*}[t]
\centering
\fontsize{9}{11}\selectfont
\setlength\tabcolsep{2pt}
\vspace{6mm}
\caption{Continued pre-training on CLIP with our \names\ objectives on classifying images and figurative language. Our approach shows strong results as compared to standard fine-tuning and continued pre-training.}
\centering
\begin{tabular}{l|c c}
\hline \hline
Task & \textsc{IRFL} \\
\hline
Zero-shot CLIP~\cite{radford2021learning} & 89.2 $\pm$ 0.0\%\\
SimCLR~\cite{chen2020simple} & 91.6 $\pm$ 0.0\%\\
Cross+Self~\cite{wang2022rethinking,yuan2021multimodal} & 91.1 $\pm$ 1.2\%\\
\names-IndAug & 91.6 $\pm$ 1.3\% \\
\names-SSL & \textbf{93.8 $\pm$ 1.4}\%\\
\hline
Fine-tuned CLIP~\cite{radford2021learning} & 96.4 $\pm$ 0.0\%\\
SupCon~\cite{khosla2020supervised} & 87.7 $\pm$ 4.7\%\\
\names-SUP & \textbf{98.3 $\pm$ 1.2}\%\\
\hline \hline
\end{tabular}
\label{tab:fig_supp}
\vspace{-4mm}
\end{table*}
\begin{table*}[t]
\centering
\fontsize{9}{11}\selectfont
\setlength\tabcolsep{2pt}
\caption{Additional experiments on CIFAR10~\cite{Krizhevsky} and MNIST~\cite{deng2012mnist} datasets using our \names\ objectives on image classification.}
\centering
\begin{tabular}{l|c c c}
\hline \hline
Task & \textsc{CIFAR10} & \textsc{MNIST}\\
\hline
SimCLR~\cite{chen2020simple} & 87.0\% & 98.84\%\\
SupCon~\cite{khosla2020supervised} & 92.7\% & 99.38\%\\
\names-SUP & 91.3\% & 99.21\%\\
\hline \hline
\end{tabular}
\label{tab:additional}
\end{table*}

\begin{enumerate}[topsep=0pt,nosep,leftmargin=*,parsep=6pt,partopsep=0pt]
    \item \textbf{\textsc{MIMIC-III}}~\cite{johnson2016mimic} (Medical Information Mart for Intensive Care III) is a large-scale dataset for healthcare which contains records of over 40,000 ICU patients in both forms of times-series data measured by hours and static data (age, gender, ethnicity) in the tabular numerical form. We use the preprocessed data provided in MultiBench~\citep{liang2021multibench}, where the time-series data is measured every 1 hour in a 24-hour period and consists of vectors of size 12, and the tabular data consists of vectors of size 5. The task we use in the experiment is a binary classification on whether the patient fits any ICD-9 code in group 7 (460-519). 
    
    In the experiments, we use a 2-layer MLP with 10 hidden layer size for the tabular data modality, and map it to a vector of size 10. The time-series modality is encoded using a GRU with hidden size 30 and followed by a linear layer which projects the output to embeddings of size 15. We train the model for 100 epochs using the Adam optimizer with a 1e-4 learning rate.

    \item \textbf{\textsc{CMU-MOSEI}}~\cite{zadeh2018multimodal} is the largest sentence-level multimodal sentiment and emotion benchmark with $23,000$ monologue videos. It contains more than 65 hours of annotated video from more than 1,000 speakers and 250 topics. Each video is labeled with a sentiment intensity ranging from -3 to 3. In our experiments, we cast the intensity values to a binary classification on whether the sentiment is positive or negative. MultiBench~\citep{liang2021multibench} provides access to the extracted features of the vision, text, and audio modalities, and in our experiments, we use the vision and text features for doing contrastive learning. 
    
    In our experiments, we encode both the vision and text modalities using Transformer encoders with 5 heads and 5 layers, and map them to 40-dimensional embeddings. We train the model for 100 epochs using the Adam optimizer with a 1e-4 learning rate.

    \item \textbf{\textsc{CMU-MOSI}}~\cite{zadeh2016mosi} is a similar dataset for multimodal sentiment analysis created from $2,199$ YouTube videos clips. The data focuses on videos that reflect the real-world distribution of speakers expressing their opinions in the form of monologues. The sentiment intensities are labeled continuously from -3 to 3. Again we cast the label into a binary classification on whether the sentiment is positive or negative, and we used the extracted vision and text features for contrastive learning.

    In our experiments we encode both the vision and text modalities using Transformer encoders with 5 heads and 5 layers, and map them to 40-dimensional embeddings. We train the model for 100 epochs using the Adam optimizer with a 1e-4 learning rate.

    \item \textbf{\textsc{UR-FUNNY}}~\citep{hasan2019ur} is the first large-scale dataset for humor detection in human speech. The dataset consists of samples from more than $16,000$ TED talk videos with speakers from diverse backgrounds sharing their ideas. The laughter markup is used to filter out 8,257 humorous punchlines from the transcripts. The context is extracted from the prior sentences to the punchline. Using a similar approach, 8,257 negative samples are chosen at random intervals where the last sentence is not immediately followed by a laughter marker. The task is to classify whether there is humor or not using the vision and text modalities.

    In our experiments, we encode both the vision and text modalities using Transformer encoders with 5 heads and 5 layers, and map them to 40-dimensional embeddings. We train the model for 100 epochs using the Adam optimizer with a 1e-4 learning rate.

    \item \textbf{\textsc{MUStARD}}~\citep{castro2019towards} is a corpus of $690$ videos for research in sarcasm detection from popular TV shows including Friends, The Golden Girls, The Big Bang Theory, and Sarcasmaholics Anonymous. It contains audiovisual utterances together with the textual context. We use the preprocessed features of the vision and text modalities for doing contrastive learning and performing sarcasm detection. 

    In our experiments, we encode both the vision and text modalities using Transformer encoders with 5 heads and 5 layers, and map them to 40-dimensional embeddings. We train the model for 100 epochs using the Adam optimizer with a 1e-4 learning rate.

    \item \textbf{\textsc{IRFL}}~\cite{yosef2023irfl} is a dataset for understanding multimodal figurative languages. It contains $6,697$ matching images and figurative captions (rather than literal captions) of three types of figurative languages: idiom, simile, and metaphor. The original data for the matching task is provided in the form of 1 caption, 3 distractor images, and 1 matching image. We convert it into a fusion task by only collecting the matching image and text pairs and assigning labels using the type of figurative language it belongs to.

    For this dataset, we do not train from scratch. Instead, we performed continued pretraining using our proposed objectives on pretrained CLIP~\cite{radford2021learning} models. We used the CLIP-VIT-B/32 model and its pretrained image and text encoders. We performed training for 10 epochs using the Adam optimizer with a 1e-6 learning rate.
\end{enumerate}

\subsection{Additional analysis and results}
\label{appendix:results}

\begin{table*}[t]
\centering
\fontsize{9}{11}\selectfont
\setlength\tabcolsep{4pt}
\caption{We verify our conditional lower and upper bound estimators on a synthetic dataset with fixed dimension of representation $d_z$ and varying number of samples $n$.}
\centering

\begin{tabular}{l|cccc}
\hline \hline
Number of samples ($\times 10^3$), $d_z=20$ & 5 & 10 & 20 & 50 \\
\hline
CCMI (MI-Diff + Classifier) & 2.03 & 2.06 & 2.15 & 2.20\\
Conditional InfoNCE & 2.19 & 2.20 & 2.20 & 2.20 \\
Conditional InfoNCE-CLUB & 3.45 & 3.53 & 2.98 & 2.86 \\
True CMI & 2.32 & 2.32 & 2.32 & 2.32\\
\hline \hline
\end{tabular}

\vspace{-2mm}
\label{tab:cmi_lower_supp}
\end{table*}
\begin{table}[t]
\centering
\fontsize{9}{11}\selectfont
\setlength\tabcolsep{4pt}
\caption{We verify our conditional lower and upper bound estimators on a synthetic dataset with varying dimensions of representation $d_z$ and fixed number of samples $n$.}
\centering

\begin{tabular}{l|ccccc}
\hline \hline
Dimension $d_z$, $n=2\times10^4$ & 1 & 10 & 20 & 50 & 100\\
\hline
CCMI (MI-Diff + Classifier) & 2.30 & 2.18 & 2.15 & 1.98 & 1.67\\
Conditional InfoNCE & 2.18 & 2.20 & 2.20 & 2.26 & 2.30\\
Conditional InfoNCE-CLUB & 3.70 & 2.95 & 2.98 & 2.79 & 2.86 \\
True CMI & 2.32 & 2.32 & 2.32 & 2.32 & 2.32 \\
\hline \hline
\end{tabular}

\label{tab:cmi_upper_supp}
\end{table}
\begin{table*}[t]
\centering
\fontsize{9}{11}\selectfont
\setlength\tabcolsep{4pt}
\caption{We probe whether the InfoMin assumption from Tian et al., $I(Z_1; Y | X_2) = 0$ and $I(Z_2; Y | X_1) = 0$, is reasonable for Theorem 1. Compared to the shared information $I(X_1; X_2)$, $I(Z_1; Y | X_2)$ is much smaller and closer to zero, indicating that the InfoMin assumption is reasonable, and Theorem 1 holds in practice.}
\centering
\begin{tabular}{c|c|c|c|c|c}
\hline \hline
$I(X_1, Y;X_2)$ & $I(X_1; X_2)$ & $I(Z_1; Y | X_2)$ & $I(X_2, Y;X_1)$ & $I(X_2; X_1)$ & $I(Z_2; Y | X_1)$ \\
\hline
12.69 & 12.29 & 0.40 & 11.31 & 10.92 & 0.38 \\
\hline \hline
\end{tabular}
\label{tab:probing_mi_conditional}
\end{table*}

\textbf{Fusion experiments}: In Table~\ref{tab:fusion_supp} and \ref{tab:fig_supp} we present more detailed results on the Multibench~\cite{liang2021multibench} and IRFL~\cite{yosef2023irfl} datasets computed from 5 independent runs. \names\ significantly outperforms the baselines that do not capture both shared and unique information in both supervised and self-supervised settings, particularly on \textsc{MuStARD} (where unique information expresses sarcasm, such as sardonic facial expressions or ironic tone of voice), and on \textsc{MIMIC} (with unique health indicators and sensor readings). There are also big improvements on the two sentiment analysis datasets \textsc{MOSEI} and \textsc{MOSI}, with $6.8\%$ and $21.9\%$ increases respectively when compared to SupCon~\cite{khosla2020supervised}.

In Table~\ref{tab:fig_supp}, we also see that \names\ substantially improves the state-of-the-art in classifying images and figurative captions which are not literally descriptive of the image on \textsc{IRFL}, outperforming zero-shot and fine-tuned CLIP~\cite{radford2021learning} as well as continued pre-training baselines on top of CLIP. While the supervised version gives the best results overall, the self-supervised version with our proposed unique augmentations also performs better than independent augmentations, indicating that in the case without label information, we should always try to find and use unique augmentations when possible. In our experiments, we use word masking for text augmentations. For independent image augmentations, we use cropping, flipping, and color jittering. The unique augmentation simply removes the cropping operation, as illustrated in Figure~\ref{fig:augs} in the main text.

\textbf{Additional experiments on high shared information and low unique information}: In Table~\ref{tab:additional} we include additional results using our method on the CIFAR10~\cite{Krizhevsky} and MNIST~\cite{deng2012mnist} datasets. Our method outperforms the self-supervised contrastive learning on both datasets as expected, and roughly maintains the same performance as supervised contrastive learning. Therefore, in cases with abundant shared information (two modalities with high shared information or two different views generated from augmentations), our method recovers the performance of existing methods that do not capture unique information.

\textbf{Experiments on CMI estimator verification}: In Table~\ref{tab:cmi_lower_supp} and Table~\ref{tab:cmi_upper_supp}, we include experiment results which verify that computing the conditional MI lower and upper bounds via concatenation indeed yields reliable estimates. In particular, we aim to verify that the the Conditional InfoNCE objective gives a lower bound of the CMI, and the Conditional InfoNCE-CLUB objective gives an upper bound of the CMI. We follow the experiment setups in~\cite{mukherjee2020ccmi}, presenting the true CMI and results of our estimators on synthetic data with fixing dimension of representation $Z$ and varying samples $n$, and fixing samples $n$ and varying $d_z$. The specific implementations used for conditional InfoNCE and conditional InfoNCE-CLUB can be found in Equation~\ref{eq:conditional_nce_supervised} and Equation~\ref{eq:conditional_club_supervised}, respectively. The results indicate that our Conditional InfoNCE gives estimations smaller than the true CMI, and Conditional InfoNCE-CLUB gives estimations greater than the true CMI. The performances are comparable to estimators in ~\cite{mukherjee2020ccmi}, suggesting that our method yields valid and competitive lower and upper bounds for CMI.

\textbf{Empirical verification on InfoMin assumption}\label{assump:infomin_empirical_app}: To verify the InfoMin assumption \citep{tian2020makes} ($I(Z_1; Y | X_2) = I(Z_2; Y | X_1) = 0$), we use the same synthetic dataset as in Table \ref{tab:probing} and measure $I(Z_1; Y | X_2)$. The results are shown in Table \ref{tab:probing_mi_conditional}: we get $I(X_1; X_2) = 12.29$ and $I(Z_1; Y | X_2)=0.4$. $I(Z_1; Y | X_2)$ is much smaller and closer to zero than $I(Z_1; Y | X_2)$, indicating that the InfoMin assumption holds in practice.

\textbf{Compute resources}: All experiments in this paper are run on a single NVIDIA A100 GPU. It takes about 10 to 12 GPU hours to train the model on the CIFAR10~\cite{Krizhevsky} for 300 epochs, and all the other experiments can be finished within 1 GPU hour using our specified hyperparameters.

\end{document}